\documentclass[final,12pt]{alt2022} 

\usepackage{packages}

\title[Efficient and Optimal Fixed-Time Regret with Two
Experts]{Efficient and Optimal Fixed-Time Regret with Two Experts}
\usepackage{times}



\altauthor{%
 \Name{Laura Greenstreet} \Email{laura.greenstreet@gmail.com}\\
 \Name{Nicholas J.\ A.\ Harvey} \Email{nickhar@cs.ubc.ca}\\
 \Name{Victor Sanches Portella} \Email{victorsp@cs.ubc.ca}\\
 \addr University of British Columbia, Department of Computer Science\\
}%

\begin{document}

\maketitle

\begin{abstract}%
    Prediction with expert advice is a foundational problem in online learning. In instances with \(T\) rounds and \(n\) experts, the classical Multiplicative Weights Update method suffers at most \(\sqrt{(T/2)\ln n}\) regret when \(T\) is known beforehand. Moreover, this is asymptotically optimal when both \(T\) and \(n\) grow to infinity. However, when the number of experts \(n\) is small/fixed, algorithms with better regret guarantees exist.  Cover showed in 1967 a dynamic programming algorithm for the two-experts problem restricted to \(\{0,1\}\) costs that suffers at most \(\sqrt{T/2\pi} + O(1)\) regret with \(O(T^2)\) pre-processing time. In this work, we propose an optimal algorithm for prediction with two experts' advice that works even for costs in \([0,1]\) and with \(O(1)\) processing time per turn. Our algorithm builds up on recent work on the experts problem based on techniques and tools from stochastic calculus.
\end{abstract}

\begin{keywords}%
  experts, Cover, online learning, optimal, fixed-time%
\end{keywords}

\section{Introduction}

The foundational problem in online learning of \emph{prediction with expert advice} (or simply \emph{experts' problem}) consists of a sequential game between a player and an adversary. In each turn, the player chooses (possibly randomly) one of \(n\) experts to follow. Concurrently, the adversary chooses for each expert a cost in \([0,1]\). At the end of a turn, the player sees the costs of all experts and suffers the cost of the expert they followed. The performance of the player is usually measured by the \emph{regret}: the difference between their cumulative loss and the cumulative loss of the best expert in hindsight. In this case, we are interested in strategies for the player whose (expected) regret against any adversary is sublinear in the total number of rounds of the game.

A well-known strategy for the player is the \emph{Multiplicative Weights Update} (MWU) method \citep{AroraHK12a}. In the \emph{fixed-time setting} --- that is, when the player knows beforehand the total number of rounds \(T\) --- MWU with a carefully-chosen fixed step-size suffers at most \(\sqrt{(T/2) \ln n}\) regret. Additionally, this regret bound is asymptotically optimal when both \(n\) and \(T\) grow to infinity~\citep{CesaBianchiFHHSW97a}. Yet, if the number of experts \(n\) is fixed/small, better regret guarantees may be possible. 
From a theoretical standpoint, there is a clear motivation for the case where \(n\) is fixed: MWU can suffer regret arbitrarily close to \(\sqrt{(T/2) \ln n}\) for \emph{any} \(n\) as the number of rounds \(T\) grows\footnote{This is known to hold when MWU is used with a fixed or decreasing step-size, which are the usual cases when MWU is applied to the experts' problem. When the step-size of MWU is allowed to be arbitrary, \cite{GravinPS17a} show that MWU can suffer regret arbitrarily close to \((2/3)\sqrt{(T/2) \ln n}\) as the number of rounds \(T\) grows. }~\citep{GravinPS17a}. This means that different ideas are necessary for player strategies to guarantee smaller regret.


\subsection{The Case of Two Experts}
Of course, a natural question is whether regret smaller than \(\sqrt{(T/2) \ln n}\) is even possible as \(T\) grows even if \(n\) is fixed. \cite{Cover67a} showed that for two experts (that is, for \(n = 2\)) a regret of \(\sqrt{T/2\pi} + O(1)\) is the best possible in the worst-case by showing an algorithm for the case with costs in \(\{0,1\}\). In related work, the attainable worst-case regret bounds for three~\citep{AbbasiYadkoriB17a, KobzarKW20a} and four~\citep{BayraktarEZ20a} experts were recently improved, respectively,  to \(\sqrt{8T/(9\pi)} + O(\ln T)\) and \(\sqrt{T \pi/8}\) up to lower-order terms.
Although optimal, Cover's algorithm is based in a dynamic programming approach that takes \(O(T^2)\) time in total. In comparison, MWU takes \(O(1)\) time per round to compute the probabilities to assign to the two experts at each round. Finally, one can adapt Cover's algorithm for costs in \([0,1]\), but the standard approach is to randomly round to costs to either \(0\) or \(1\). In this case, the regret guarantees only hold in expectation.

In a related line of work, \citet{HarveyLPR20a} study the 2-experts problem in the \emph{anytime setting}, that is, in the case where the player/algorithm does not know the total number of rounds \(T\) ahead of time. They showed an \emph{optimal} strategy for the player whose regret on \emph{any} round \(t \in \Naturals\) is at most \((\gamma/2) \sqrt{t}\), where the constant \(\gamma \approx 1.30693\) arises naturally in the study of Brownian motion (see~\citealp{MortersP10a} for an introduction to the field and historical references). Moreover, their algorithm can be computed (up to machine precision) in \(O(1)\) time since it boils down to the evaluation of well-known mathematical functions such as the exponential function and the imaginary error function. In this work, we combine similar ideas based on stochastic calculus together with Cover's algorithm to propose an efficient and optimal fixed-time algorithm for 2-experts.


\paragraph{\textbf{Known result} (\citealp{Cover67a}):} There is a dynamic programming algorithm for the 2-experts problem with costs in \(\{0,1\}\) that suffers at most \(\sqrt{T/2\pi} + 0.5\) regret in games with \(T\) rounds and requires \(O(T^2)\) pre-processing time.

\paragraph{\textbf{Our contribution}:}  An algorithm for the two experts' problem with costs in \([0,1]\) that suffers at most \(\sqrt{T/2\pi} + 1.3\) regret in games with \(T\) rounds. This new algorithm has running time \(O(T)\) and is based on discretizing a continuous-time solution obtained using ideas from stochastic calculus.

More precisely, one of the key steps is deriving a player in the continuous-time setting from~\citet{HarveyLPR20a} that exploits the knowledge of the time-horizon to obtain regret bounds better than in the anytime setting. However, unlike the anytime setting, discretizing this algorithm leads to non-negative discretization error. Another key contribution of our paper is showing that this discretization error is small. Finally, the connections to Cover's classical algorithm sheds new intuition into the classical optimal solution. Interestingly, our results could be formally presented without resorting to stochastic calculus. Yet, it is the stochastic calculus point of view that guides us through the design and analysis of the algorithm.

\paragraph{\textbf{Text organization}:} We first formally define the experts problem and discuss some assumptions and simplifications in Section~\ref{sec:experts}. In Section~\ref{sec:cover_summary} we present a brief summary of Cover's optimal algorithm for two experts. In Section~\ref{sec:continuous_problem} we define an analogous continuous-time problem and describe a solution inspired by Cover's algorithm. Finally, in Section~\ref{sec:discretization} we present and analyze a discretized version of the continuous-time algorithm, showing it enjoys optimal worst-case regret bounds.

 \section{Prediction with Expert Advice}
 \label{sec:experts}

 In this section we shall more precisely define the problem of prediction with expert advice. The problem  is parameterized by a fixed number \(n \in \Naturals\) of experts. A (strategy for the) \textbf{player} is a function \(\Acal\) that, given cost vectors \(\ell_1, \dotsc, \ell_t \in [0,1]^n\) chosen by the adversary in previous rounds, outputs a probability distribution over the \(n\) experts represented by a vector \(x_{t+1} \in \Delta_n \coloneqq \setst{x \in [0,1]^n}{\sum_{i = 1}^n x(i) = 1}\). Similarly, a (strategy for the) \textbf{adversary} is a function \(\Bcal\) that, given previous player's choices \(x_1, \dotsc, x_t \in \Delta_n\) of distributions over the experts, outputs a vector \(c_{t+1} \in [0,1]^n\) of expert costs for round \(t+1\), where \(\ell_{t+1}(i)\) is the cost of expert \(i \in [n] \coloneqq \iinterval{1}{n}\). The performance of a player strategy \(\Acal\) in a game with \(T \in \Naturals\) rounds against an adversary \(\Bcal\) is measured by the \textbf{regret}, defined as
 \begin{equation*}
   \Regret(T, \Acal, \Bcal) \coloneqq \sum_{t = 1}^T \iprodt{\ell_t}{x_t} - \min_{i \in [n]} \sum_{t = 1}^T \ell_t(i),
 \end{equation*}
 where above, and for the remainder of this section\footnote{If no specific strategies \(\Acal\) or \(\Bcal\) are clear from the context, one may take \(\Acal\) and \(\Bcal\) to be arbitrary strategies and we shall omit \(\Acal\) and \(\Bcal\) when they are clear from context.} we have \(x_t \coloneqq \Acal(\ell_1, \dotsc, \ell_{t-1})\) and \(\ell_t \coloneqq \Bcal(x_1, \dotsc, x_{t-1})\) for all \(t \in [T]\). Moreover, whenever the loss vectors \(\ell_1, \dotsc, \ell_T\) are clear from context, we define the \textbf{cumulative loss} of expert \(i \in [n]\) at round \(t \in [T]\) by \(L_{t}(i) \coloneqq \sum_{j = 1}^t \ell_t(i)\). In this text, for each \(T \in \Naturals\) we want to devise a strategy \(\Acal_T\) for the player that suffers regret at most sublinear in \(T\) against \emph{any} adversary in a game with \(T\) rounds. That is, we want a family of strategies \(\{\Acal_T\}_{T \in \Naturals}\) such that
 \begin{equation}
   \label{eq:sublinear_regret}
   \lim_{T \to \infty} \frac{1}{T} \sup_{\Bcal} \Regret(T, \Acal_T, \Bcal) = 0,
 \end{equation}
 where the supremum ranges over all possible adversaries, even those that have full knowledge of (and may even be adversarial to) the player's strategy.


 \subsection{Restricted Adversaries}

 In~\eqref{eq:sublinear_regret}, the supremum ranges over all the possible adversaries for a game with \(T\) rounds. However, we need only consider in the supremum \textbf{oblivious adversaries}~\citep[Section~18.5.4]{KarlinP17a}, that is, adversaries \(\Bcal\) whose choice on each round depends \emph{only} on the round number and not on the choices of the player. For any \(\ell = (\ell_1, \dotsc, \ell_T)^{\transp} \in (\Reals^n)^T\), we denote by \(\Bcal_\ell\) the oblivious adversary that plays \(\ell_t\) on round \(t \in [T]\).

In fact, we may restrict our attention to even smaller sets of adversaries (for details on these reductions, see~\citealp{GravinPS16a} and \citealp[Section~18.5.3]{KarlinP17a}). First, in~\eqref{eq:sublinear_regret} we need only to consider \textbf{binary adversaries}, that is, adversaries which can assign only costs in \(\{0,1\}\) to the experts. Furthermore, to obtain the value of the optimal regret \emph{for two experts} we only need to consider adversaries that pick vector costs in \(\Lcal \coloneqq \curly{(1,0)^\transp, (0,1)^\transp}\), which we call \textbf{restricted binary adversaries}. Intuitively, the adversary can do no better by placing equal costs on both experts at any given round. The optimal algorithm for two experts proposed by \citet{Cover67a} heavily relies on the assumption that the adversary is a restricted binary one and does not extend to general costs in \([0,1]\) without resorting to randomly rounding the costs --- which makes the regret guarantees hold only in expectation.

In this work we design an algorithm the suffers at most \(\sqrt{T/(2\pi)} + O(1)\) regret for arbitrary \([0,1]\) costs. Our initial analysis handles only restricted binary adversaries, but simple concavity arguments extend the upper bound to general adversaries. Throughout this text we fix a time horizon \(T \in \Naturals\).


 \subsection{The Gap Between Experts}

 The case where we have only 2 experts admits a simplification that aids us greatly in the design of upper- and lower-bounds on the optimal regret. Namely, the \textbf{gap} (between experts) at round \(t \in [T]\) is given by \(\abs{L_t(1) - L_t(2)}\), where \(L_t\) is the cumulative loss vector at round \(t\) as defined in Section~\ref{sec:experts}. Furthermore, we denote by \textbf{lagging expert} (on round \(t \in [T]\))  an expert with maximum cumulative loss on round \(t\) among both experts. Similarly, we denote by \textbf{leading expert} (on round \(t \in [T]\)) an expert with minimum cumulative loss on round \(t\).
 The following proposition from~\cite{HarveyLPR20a} shows that, for the restricted binary adversaries described earlier, the regret can be almost fully characterized by the expert gaps and the player's choices of distributions on the experts. In the next proposition (and throughout the remainder of the text), for any predicate \(P\) we define \(\boole{P}\) to be 1 if \(P\) is true and \(0\) otherwise.

 \begin{proposition}[{\citealp[Proposition~2.3]{HarveyLPR20a_arxiv}}]
   \label{prop:gap_regret}
   Fix \(T \in \Naturals\), let \(\Acal\) be a player strategy, and let \(\ell_1, \dotsc, \ell_T \in \{(1,0)^{\transp},(0,1)^{\transp}\}\) be the expert costs chosen by the adversary. For each \(t \in [T]\), set \(x_t \coloneqq \Acal(\ell_1, \dotsc, \ell_{t-1})\),  let \(p_t \in \{x_t(1), x_t(2)\}\) be the probability mass placed on the lagging expert on round \(t\), and let \(g_t\) be the gap between experts on round \(t\). Then,
   \begin{equation*}
     \Regret(T) = \sum_{t = 1}^T \boole{g_{t-1} > 0} p_t \cdot (g_t - g_{t-1}) + \sum_{t = 1}^T
     \boole{g_{t-1} = 0} \iprodt{\ell_t}{x_t},
   \end{equation*}
   where \(g_0 \coloneqq 0\).
   In particular, if for every \(t \in [T]\) with \(g_{t-1} = 0\) we have \(x_t(1) = x_t(2) = 1/2\), then
   \begin{equation*}
     \Regret(T) = \sum_{t = 1}^T p_t \cdot (g_t - g_{t-1}).
   \end{equation*}
 \end{proposition}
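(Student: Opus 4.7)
The plan is to establish the identity by writing the regret as a telescoping sum and then matching, round by round, the contribution of each summand on the right-hand side.

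First, I would let \(M_t \coloneqq \min_{i \in \{1,2\}} L_t(i)\) with \(M_0 = 0\). Since each \(\ell_t\) is a standard basis vector, the definition of regret gives
\begin{equation*}
  \Regret(T) \;=\; \sum_{t=1}^T \iprodt{\ell_t}{x_t} - M_T \;=\; \sum_{t=1}^T \Bigl( \iprodt{\ell_t}{x_t} - (M_t - M_{t-1}) \Bigr).
\end{equation*}
The task thus reduces to proving, for each \(t \in [T]\), the per-round identity
\begin{equation*}
  \iprodt{\ell_t}{x_t} - (M_t - M_{t-1}) \;=\; \boole{g_{t-1} > 0}\, p_t(g_t - g_{t-1}) \;+\; \boole{g_{t-1} = 0}\, \iprodt{\ell_t}{x_t}.
\end{equation*}

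Next I would do a short case analysis on round \(t\). When \(g_{t-1} > 0\), denote the lagging expert by \(j\) and the leading one by \(k\), so \(p_t = x_t(j)\) and \(x_t(k) = 1 - p_t\). If the adversary hits \(j\), then \(\iprodt{\ell_t}{x_t} = p_t\), the minimum does not move so \(M_t - M_{t-1} = 0\), and the gap widens, giving \(g_t - g_{t-1} = 1\); the contribution is \(p_t = p_t(g_t - g_{t-1})\). If instead the adversary hits \(k\), then \(\iprodt{\ell_t}{x_t} = 1 - p_t\), the minimum rises by one so \(M_t - M_{t-1} = 1\), and the gap shrinks (possibly to zero) so \(g_t - g_{t-1} = -1\), and the contribution is \((1-p_t) - 1 = -p_t = p_t(g_t - g_{t-1})\). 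When \(g_{t-1} = 0\) the two experts are tied, so whichever expert is hit we have \(M_t - M_{t-1} = 0\) and the per-round contribution is simply \(\iprodt{\ell_t}{x_t}\). Summing these identities over \(t \in [T]\) yields the first claimed formula. For the ``in particular'' clause, if \(g_{t-1} = 0\) and \(x_t(1) = x_t(2) = 1/2\), then \(p_t = 1/2\) and \(g_t - g_{t-1} = 1\) (since exactly one expert is hit), so \(p_t(g_t - g_{t-1}) = 1/2 = \iprodt{\ell_t}{x_t}\), and the tied-rounds term merges into the first sum.

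The main subtlety is the sub-case where \(g_{t-1} > 0\) and the adversary hits the leading expert: there one must verify that the simultaneous unit decrease in the gap and unit increase in the running minimum conspire to produce exactly \(p_t(g_t - g_{t-1}) = -p_t\), even though neither summand alone is expressible in terms of \(p_t\). Apart from this bookkeeping, the argument is routine telescoping; no probabilistic or adversarial reasoning is needed beyond the structural property that a restricted binary cost vector always hits exactly one expert.
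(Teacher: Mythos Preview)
Your argument is correct: the telescoping of the running minimum \(M_t\) together with the two-case analysis on whether the lagging or leading expert is hit cleanly yields the per-round identity, and the ``in particular'' clause follows exactly as you say. The paper itself does not supply a proof of this proposition---it is quoted verbatim from \cite{HarveyLPR20a_arxiv}---so there is no in-paper argument to compare against; your write-up is a faithful reconstruction of the standard proof.
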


\section{An Overview of Cover's Algorithm}
\label{sec:cover_summary}

Although in this section we give only a brief overview of Cover's algorithm, for the sake of completeness we provide a full description and analysis of the algorithm in Appendix~\ref{sec:covers_algorithm}.
The key idea in Cover's algorithm is to compute optimal decisions for all possible scenarios beforehand. This is a feasible approach when we know the total number of rounds and the adversary is a (restricted) binary adversary. More precisely, we will focus our attention to player strategies \(\Acal_p\) parameterized by functions \(p \colon [T] \times \iinterval{0}{T-1} \to [0,1]\)
which place \(p(t,g)\) probability mass on the lagging expert on round \(t\) if the gap between experts is \(g\), and \(1 - p(t,g)\) mass on the leading expert. Then the ``regret-to-be-suffered'' by \(\Acal_p\) at any round \(t\) with a given gap between experts \(g\) is
 \begin{equation}
  \label{eq:Vp_definition_2}
  V_p[t,g] \coloneqq \sup \setst{\Regret(T, \Acal_p, \Bcal_{\ell}) - \Regret(t, \Acal_p, \Bcal_{\ell})}{\ell \in \Lcal^T~\text{s.t.}~\abs{L_{t}(1) - L_{t}(2)} = g}.
\end{equation}
We can compute all entries of \(V_p\) as defined above via a dynamic programming approach, starting with \(V_p[T,g]\) for all \(g \in \iinterval{0}{T-1}\) and then computing these values for earlier rounds. Moreover, there is a simple strategy \(p^*\) that minimizes the worst-case regret \(V_p[0,0]\). Interestingly, the worst-case regret of \(\Acal_{p^*}\) given by \(V^*[0,0]\) is tightly connected with symmetric random walks, where a \textbf{symmetric random walk} (of length \(t\) starting at \(g\)) is a sequence of random variables \((S_i)_{i = 0}^t\) with \(S_i \coloneqq g + X_1 + \dotsm + X_i\) for each \(i \in \iinterval{0}{t}\) and \(\{X_j\}_{j \in [t]}\) are i.i.d.\ uniform random variables on $\{\pm 1\}$. The next theorem summarizes the guarantees on the regret of \(\Acal_{p^*}\), showing that it suffers no more than \(\sqrt{T/2\pi} + O(1)\) regret. Moreover, it is worth noting that no player strategy can do any better asymptotically in \(T\) (for a complete proof of the lower bound, see Appendix~\ref{sec:lower_bound}).

\begin{theorem}[{\citealp{Cover67a}, and \citealp[Section~18.5.3]{KarlinP17a}}]
  For every \(r,g \in \Naturals\), let the random variable \(Z_{r}(g)\) be the number of passages through 0 of a symmetric random walk of length \(r\) starting at position \(g\). Then \(V^*[t,g] = \tfrac{1}{2}\Expect[Z_{T - t}(g)]\) for every \(t,g \in \Naturals\). In particular,
  \begin{equation*}
    V^*[0,0] = \frac{1}{2}\Expect[Z_T(0)] \leq \sqrt{\frac{T}{2 \pi}} + \frac{1}{2}.
  \end{equation*}
\end{theorem}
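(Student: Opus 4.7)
The plan is to write a Bellman recurrence for \(V_p[t,g]\), solve the one-round minimax to identify the optimal \(p^*\), and then recognize the resulting value function as (half) the expected number of zero-visits of a simple random walk. Using Proposition~\ref{prop:gap_regret}, for a gap \(g \geq 1\) the adversary's only sensible choices are \((1,0)^{\transp}\) and \((0,1)^{\transp}\), which change the gap by \(-1\) or \(+1\) and contribute \(-p(t+1,g)\) or \(+p(t+1,g)\) respectively to the regret. Peeling one round off the definition in~\eqref{eq:Vp_definition_2} therefore yields
\begin{equation*}
V_p[t,g] \;=\; \max\bigl(-p(t+1,g) + V_p[t+1,g-1],\; p(t+1,g) + V_p[t+1,g+1]\bigr).
\end{equation*}
For \(g = 0\), the tie-breaking convention \(x_{t+1}(1) = x_{t+1}(2) = 1/2\) forces \(\iprodt{\ell_{t+1}}{x_{t+1}} = 1/2\) for either \(\ell_{t+1} \in \Lcal\), and the gap always becomes \(1\), giving \(V_p[t,0] = \frac{1}{2} + V_p[t+1,1]\); the base case is \(V_p[T,g] = 0\).

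Next, I would optimize the inner \(\max\) by equating its two arguments, producing
\begin{equation*}
p^*(t+1,g) \;=\; \frac{V^*[t+1,g-1] - V^*[t+1,g+1]}{2} \quad\text{and}\quad V^*[t,g] \;=\; \frac{V^*[t+1,g-1] + V^*[t+1,g+1]}{2}
\end{equation*}
for \(g \geq 1\). Together with \(V^*[t,0] = \frac{1}{2} + V^*[t+1,1]\) and \(V^*[T,g] = 0\) this pins down \(V^*\) uniquely; admissibility \(p^*(t+1,g) \in [0,1]\) follows \emph{a posteriori} from the random-walk identification below, since the expected visit count is non-increasing in \(g\) and its one-step differences are bounded by~\(1\).

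The identification \(V^*[t,g] = \frac{1}{2}\Expect[Z_{T-t}(g)]\) is then proved by reverse induction on \(t\). The base case \(V^*[T,g] = 0 = \frac{1}{2}\Expect[Z_0(g)]\) is immediate. For \(g \geq 1\), conditioning on the first step of the walk starting at \(g\) gives \(\Expect[Z_{T-t}(g)] = \frac{1}{2}\Expect[Z_{T-t-1}(g-1)] + \frac{1}{2}\Expect[Z_{T-t-1}(g+1)]\), exactly matching the recurrence for \(V^*\). The \(g = 0\) case requires \(\Expect[Z_r(0)] = 1 + \Expect[Z_{r-1}(1)]\), so the additive \(\frac{1}{2}\) in the player's recurrence corresponds to the walk's initial ``passage through \(0\)''; verifying that this is consistent with the formal definition of \(Z_r\) is the single delicate point in the induction.

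Finally, for the numerical bound, I would expand \(\Expect[Z_T(0)]\) as a sum of return probabilities; only even lags contribute, with \(\Pr[S_{2k} = 0 \mid S_0 = 0] = \binom{2k}{k}4^{-k}\). The closed-form identity \(\sum_{k=0}^{m}\binom{2k}{k}4^{-k} = (2m+1)\binom{2m}{m}4^{-m}\) (provable by a one-line induction on \(m\)) collapses the sum to a single central binomial coefficient, which by Stirling satisfies \(\binom{2m}{m}4^{-m} \leq 1/\sqrt{\pi m}\). Plugging in \(m = \lfloor T/2 \rfloor\) then yields \(\frac{1}{2}\Expect[Z_T(0)] \leq \sqrt{T/(2\pi)} + \frac{1}{2}\). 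The main obstacle I anticipate is the bookkeeping needed to extract the constant \(\frac{1}{2}\) cleanly --- in particular handling the parity of \(T\) and the Stirling lower-order term --- rather than anything conceptual; the random-walk identification in the previous step is the real content of the theorem.
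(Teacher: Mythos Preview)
Your proposal follows the same architecture as the paper's appendix proof: derive the Bellman recurrence for \(V_p\), solve the one-round minimax by equalization to obtain the averaging recurrence for \(V^*\), identify \(V^*[t,g]\) with \(\tfrac{1}{2}\Expect[Z_{T-t}(g)]\) by reverse induction on \(t\) (the \(g=0\) step using \(\Expect[Z_r(0)] = 1 + \Expect[Z_{r-1}(1)]\), exactly the ``delicate point'' you flag), and finish with a Stirling estimate on \(\Expect[Z_T(0)] = \sum_{k\le \lfloor T/2\rfloor}\binom{2k}{k}4^{-k}\).

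Two differences are worth recording. First, for the numerical bound the paper applies Stirling term-by-term to get \(\binom{2k}{k}4^{-k}\le 1/\sqrt{\pi k}\) and then bounds \(\sum_{k}1/\sqrt{\pi k}\) by an integral, arriving at \(\Expect[Z_T(0)]\le 1+\sqrt{2T/\pi}\). Your route via the closed form \(\sum_{k=0}^m\binom{2k}{k}4^{-k}=(2m+1)\binom{2m}{m}4^{-m}\) followed by a single application of Stirling is cleaner and makes the additive \(\tfrac12\) appear without any parity casework: with \(m=\lfloor T/2\rfloor\) one gets \(\tfrac12\Expect[Z_T(0)]\le \sqrt{m/\pi}+\tfrac{1}{2\sqrt{\pi m}}\le \sqrt{T/(2\pi)}+\tfrac12\). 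Second, the paper proves admissibility \(V^*[t,g-1]-V^*[t,g+1]\in[0,1]\) by a separate induction on \(t\) \emph{before} doing the random-walk identification, whereas you defer it and appeal to monotonicity and bounded one-step differences of \(g\mapsto \Expect[Z_r(g)]\). That reordering is legitimate, but the claim that the one-step differences are at most \(1\) is not quite automatic from the identification; you still need a short argument (e.g.\ a coupling of walks started at \(g-1\) and \(g+1\), or simply the same induction the paper runs). Make sure to include it rather than assert it.
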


Finally, although not more efficiently computable than the dynamic programming approach, \(p^*\) has a closed form solution (see~\citealp[Section~18.5.3]{KarlinP17a}) given, for \(t \in [T]\) and~\(g \in \iinterval{0}{T-1}\), by
\begin{equation}
  \label{eq:optimal_p_as_random_walk_prob}
  p^*(t,g) = \frac{1}{2} \Prob(S_{T - t} = g)
  + \Prob(S_{T - t} > g),
\end{equation}
where \(S_{T - t}\) is a symmetric random walk of length \(T - t\). This closed-form solution will serve as inspiration for our continuous-time algorithm.


 \section{A Continuous-Time Problem}
 \label{sec:continuous_problem}

 Cover's player strategy is optimal, but it is defined only for restricted binary adversaries. It is likely that it can be extended to binary adversaries, but it is definitely less clear how to extend such an algorithm for general adversaries picking costs in \([0,1]\). Moreover, even when Cover's algorithm can be used, it is quite inefficient: we either need to compute \(V^*\) which has \(O(T^2)\) entries, or at each round we need to compute the probabilities in~\eqref{eq:optimal_p_as_random_walk_prob}. In the latter case, in the first round we already need \(O(T^2)\) time to exactly compute the probabilities related to a length \(T-1\) random walk.

 To devise a new algorithm for the two experts problem, we first look at an analogous continuous-time problem, first proposed by \citet{HarveyLPR20a} and with a similar setting previously studied by \citet{Freund09a}. The main idea is to translate the random walk connection from the discrete case into a stochastic problem in continuous time, and then exploit the heavy machinery of stochastic calculus to derive a continuous time solution.

 \subsection{Regret as a Discrete Stochastic Integral}

 Let us begin by further connecting Cover's algorithm to random walks. Let \(\Acal_{p}\) be a player strategy induced by some function \(p \colon [T]\times\iinterval{0}{T-1} \to \Reals\). If \(p(t,0) = 1/2\) for all \(t \in \iinterval{0}{T-1}\), then Proposition~\ref{prop:gap_regret} tells us that, for any restricted binary adversary sequence of gaps \(g_1, \dotsc, g_{T} \in \Reals_{\geq 0}\) and for \(g_0 \coloneqq 0\) we have
 \begin{equation}
   \label{eq:discrete_stochastic_integral_regret}
   \Regret(T) = \sum_{t = 1}^T p(t, g_{t-1})(g_t - g_{t-1}).
 \end{equation}
 The right-hand side of the above equation is a discrete analog of the Riemman-Stieltjes integral of \(p\) with respect to \(g\). In fact, if \((g_t)_{t = 0}^{T}\) is a random sequence\footnote{We usually also require some kind of restriction on \((g_t)_{t = 0}^{T}\), such as requiring it to be a martingale or a local martingale.}, the above is also known as a discrete stochastic integral.
 In particular, consider the case where \((g_t)_{t = 0}^{T}\) is a length \(T\) reflected (i.e., absolute value of a) symmetric random walk. Then, any possible sequence of deterministic gaps has a positive probability of being realized by \((g_t)_{t = 0}^{T}\). In other words, any sequence of gaps is in the support of \((g_t)_{t = 0}^T\). Thus, bounding the worst-case regret of \(\Acal_p\) is equivalent to bounding almost surely the value of~\eqref{eq:discrete_stochastic_integral_regret} when~\((g_t)_{t = 0}^T\) is a reflected symmetric random walk. This idea will prove itself powerful in continuous-time even though it is not very insightful for the discrete time problem.

 \subsection{A Continuous-Time Problem}

 A stochastic process that can be seen as the continuous-time analogue of symmetric random walks is \emph{Brownian motion}~\citep{RevuzY99a,MortersP10a}. We fix a Brownian motion \((B_t)_{t \geq 0}\) throughout the remainder of this text.
 Inspired by the observation that the discrete regret boils down to a discrete stochastic integral,~\citet{HarveyLPR20a} define a continuous analogue of regret as a continuous stochastic integral. More specifically, given a function \(p \colon [0,T) \times \Reals \to [0,1]\) such that \(p(t,0) = 1/2\) for all \(t \in \Reals_{\geq 0}\), define the \textbf{continuous regret} at time \(T\) by
 \begin{equation*}
   \ContRegret(T, p) \coloneqq \lim_{\eps \downarrow 0} \int_0^{T - \eps} p(t, \abs{B_t})\diff \abs{B_t},
 \end{equation*}
 where the term in the limit of the right-hand above is the stochastic integral (from \(0\) to \(T - \eps\)) of \(p\) with respect to the process \((\abs{B_t})_{t \geq 0}\). We take the limit as a mere technicality: \(p\) need not be defined at time \(T\) and we want to ensure left-continuity of the continuous regret (the limit is well-defined since a stochastic integral with respect to a reflected Brownian motion is guaranteed to have limits from the left and to be continuous from the right). It is worth noting that usually stochastic integrals are defined with respect to martingales or local martingales, but \((\abs{B_t})_{t \geq 0}\) is neither. Still, \((\abs{B_t})_{t \geq 0}\) happens to be a semi-martingale, which roughly means that it can be written as a sum of two processes: a local-martingale and a process of bounded variation. In this case one can still define stochastic integrals in a way that foundational results such as It\^o's formula still hold and details can be found in~\cite{RevuzY99a}. We do not give the precise definition of a stochastic integral since we shall not deal with its definition directly. Still, one may think intuitively of such integrals as random Riemann-Stieltjes integrals, although the precise definition of stochastic integral is more delicate.


 Let us now look for a continuous function \(p \colon [0, T) \times \Reals \to [0,1]\) with \(p(t,0) = 1/2\) for all \(t \geq 0\) with small continuous regret. Note that without the conditions of continuity or the requirement of \(p(t,0) = 1/2\) for \(t \geq 0\), the problem would be trivial. If we did not require \(p(t,0) = 1/2\) for all \(t \in [0, T)\), then taking \(p(t,g) \coloneqq 0\) everywhere would yield \(0\) continuous regret. Moreover, dropping this requirement would go against the analogous conditions needed in the discrete case, where regret could be written as a ``discrete stochastic integral'' on Proposition~\ref{prop:gap_regret} only when the player chooses \((1/2, 1/2)^{\transp}\) in rounds with \(0\) gap. Finally, requiring continuity of the \(p\) is a way to avoid technicalities and ``unfair'' player strategies.

 When working with Riemann integrals, instead of manipulating the definitions directly we use powerful and general results such as the Fundamental Theorem of Calculus (FTC). Analogously, the following result, known as \emph{It\^o's formula}, is one of the main tools we use when manipulating stochastic integrals and which can be seen as an analogue of the FTC (and shows how stochastic integrals do not always follow the classical rules of calculus).  We denote by \(C^{1,2}\) the class of bivariate functions that are continuously differentiable with respect to their first argument and twice continuously differentiable with respect to their second argument. Moreover, for any function \(f \in C^{1,2}\) we denote by \(\partial_t f\) the partial derivative of \(f\) with respect to its first argument, and we denote by \(\partial_g f\) and \(\partial_{gg} f\), respectively, the first and second derivatives  of \(f\) with respect to its second argument.
 \begin{theorem}[{It\^o's Formula, see~\citealp[Theorem~IV.3.3]{RevuzY99a}}]
   Let \(f \colon [0,T) \times \Reals \to \Reals\) be in \(C^{1,2}\) and let \(T' \in [0,T)\). Then, almost surely,
   \begin{equation}
     \label{eq:ito_formula}
     f(T', \abs{B_{T'}}) - f(0, \abs{B_0})
     = \int_0^{T'} \partial_g f(t, \abs{B_t}) \diff \abs{B_t}
     +  \int_0^{T'} \sqbrac[\big]{\partial_t f(t, \abs{B_t}) + \tfrac{1}{2} \partial_{gg} f(t, \abs{B_t})} \diff t.
   \end{equation}
 \end{theorem}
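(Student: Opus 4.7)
The plan is to reduce the claim to the standard It\^o formula for continuous semi-martingales and to identify the martingale part and quadratic variation of \((\abs{B_t})_{t \geq 0}\). Since \(x \mapsto \abs{x}\) is not \(C^{1,2}\), the formula cannot be obtained by directly composing a smooth function with \(\abs{\cdot}\) and \(B\); instead one must first establish that the reflected Brownian motion is itself a semi-martingale and then invoke the semi-martingale version of It\^o's formula, both of which are available in~\cite{RevuzY99a}.

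First, I would invoke Tanaka's formula to obtain the semi-martingale decomposition
\[
    \abs{B_t} \;=\; \int_0^t \operatorname{sgn}(B_s) \diff B_s \;+\; L_t,
\]
where \(L_t\) is the (continuous, non-decreasing) local time of \(B\) at \(0\). Setting \(M_t \coloneqq \int_0^t \operatorname{sgn}(B_s) \diff B_s\), L\'evy's characterization shows that \(M\) is a standard Brownian motion, so \(\langle M \rangle_t = t\); since \(L\) has paths of bounded variation, it contributes nothing to the bracket, and hence \(\langle \abs{B} \rangle_t = t\).

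Next, I would apply the general It\^o formula for a \(C^{1,2}\) function \(f\) and a continuous semi-martingale \(X\),
\[
    f(T', X_{T'}) - f(0, X_0) \;=\; \int_0^{T'}\!\partial_t f(t, X_t) \diff t \;+\; \int_0^{T'}\!\partial_g f(t, X_t) \diff X_t \;+\; \tfrac{1}{2}\int_0^{T'}\!\partial_{gg} f(t, X_t) \diff \langle X \rangle_t,
\]
to \(X_t \coloneqq \abs{B_t}\). Substituting \(\diff \langle X \rangle_t = \diff t\) collapses the quadratic-variation integral to a Lebesgue integral, which combines with the \(\partial_t f\) term to produce the bracketed integrand in~\eqref{eq:ito_formula}; the remaining term is, by definition of the stochastic integral against the semi-martingale \(\abs{B}\), equal to \(\int_0^{T'} \partial_g f(t, \abs{B_t}) \diff \abs{B_t}\).

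The hard part will be justifying Tanaka's formula itself: since \(\abs{\cdot}\) is not \(C^2\), one approximates it by a sequence \(\phi_n \in C^2\) with \(\phi_n'' \geq 0\) whose second derivatives concentrate near \(0\), applies the classical It\^o formula to each \(\phi_n(B_t)\), and passes to the limit using the occupation-times formula to identify \(\tfrac{1}{2}\lim_n \int_0^t \phi_n''(B_s) \diff s\) with the local time \(L_t\). A minor technical point is that \(f\) is only defined on \([0,T) \times \Reals\), but since the identity is established at each fixed \(T' \in [0,T)\), this raises no difficulty.
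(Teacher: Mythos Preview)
The paper does not prove this theorem; it is quoted as a standard result from \cite[Theorem~IV.3.3]{RevuzY99a} and used as a black box. Your sketch is the standard argument (Tanaka's formula gives the semi-martingale decomposition of \(\abs{B}\) with \(\langle \abs{B} \rangle_t = t\), then the general It\^o formula for continuous semi-martingales yields the stated identity), and it is correct.
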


 Note that the first integral in the equation of the above theorem resembles the definition of the continuous regret. In fact, the above result shows an alternative way to write the continuous regret at time \(T\) of a function \(p \colon  [0, T)\times \Reals \to [0,1]\) such that there is \(R \in C^{1,2}\) with  \(\partial_g R = p\). However, it might be hard to compute (or even to bound) the second integral on~\eqref{eq:ito_formula}. A straightforward way to circumvent this problem is to look for functions such that the second integral in~\eqref{eq:ito_formula} is 0. For that, it suffices to consider functions \(R \in C^{1,2}\) that satisfy the \textbf{backwards heat equation} on \([0, T) \times \Reals\), that is,
 \begin{equation}
   \tag{BHE}
   \label{eq:bhe}
   \bhe R(t, g) \coloneqq \partial_t R(t,g) + \frac{1}{2} \partial_{gg} R(t,g) = 0, \qquad \forall (t,g) \in [0,T) \times \Reals.
 \end{equation}
 We summarize the above discussion and its implications in the following lemma.

 \begin{lemma}
  \label{lemma:summary_ito_experts}
   Let \(R \colon [0, T) \times \Reals \to [0,1]\) be in \(C^{1,2}\) and such that \(\partial_g R(t,g) = p(t,g)\) for all \((t,g) \in [0, T) \times \Reals_{\geq 0}\), such that~\eqref{eq:bhe} holds  and such that \(R(0,0) = 0\). Then \(\lim_{t \uparrow T} R(t, \abs{B_t}) = \ContRegret(T, p)\) almost surely.
 \end{lemma}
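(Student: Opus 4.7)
The plan is to apply It\^o's formula to $R(t, \abs{B_t})$ at time $T' = T - \varepsilon$, observe that the backwards heat equation kills the drift term, and then pass to the limit $\varepsilon \downarrow 0$ to recover the definition of $\ContRegret(T, p)$. Since $R$ is $C^{1,2}$ on $[0,T) \times \Reals$ by assumption, It\^o's formula~\eqref{eq:ito_formula} applies to give, almost surely and simultaneously for all $T' \in [0,T)$ (by continuity of both sides as processes in $T'$),
\begin{equation*}
  R(T', \abs{B_{T'}}) - R(0, \abs{B_0}) = \int_0^{T'} \partial_g R(t, \abs{B_t}) \diff \abs{B_t} + \int_0^{T'} \bhe R(t, \abs{B_t}) \diff t.
\end{equation*}

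Next, I would simplify each term. The second integral vanishes identically because its integrand is zero by the \eqref{eq:bhe} hypothesis applied pointwise at $(t, \abs{B_t}) \in [0,T) \times \Reals_{\geq 0}$. Since $B_0 = 0$, the boundary term $R(0, \abs{B_0}) = R(0,0) = 0$ by hypothesis. Finally, since $\abs{B_t} \geq 0$, the assumption $\partial_g R(t,g) = p(t,g)$ on $[0,T) \times \Reals_{\geq 0}$ lets us replace the integrand in the first integral by $p(t, \abs{B_t})$. The resulting identity is simply
\begin{equation*}
  R(T', \abs{B_{T'}}) = \int_0^{T'} p(t, \abs{B_t}) \diff \abs{B_t}, \qquad T' \in [0,T),
\end{equation*}
valid almost surely on a common event.

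To conclude, I would take $T' = T - \varepsilon$ and let $\varepsilon \downarrow 0$. By the very definition of the continuous regret, the right-hand side converges almost surely to $\ContRegret(T, p)$. Therefore the left-hand side $R(T - \varepsilon, \abs{B_{T - \varepsilon}})$ also converges to this common limit, proving $\lim_{t \uparrow T} R(t, \abs{B_t}) = \ContRegret(T, p)$ almost surely.

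I do not anticipate a significant obstacle: the proof is essentially a one-line application of It\^o's formula plus the BHE. The only subtlety worth flagging is that It\^o's formula is being used with the semimartingale $\abs{B_t}$ rather than $B_t$ itself, but this is already baked into the statement of the theorem quoted from Revuz--Yor, so no separate appeal to Tanaka's formula or local time is needed in the argument.
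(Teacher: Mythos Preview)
Your proposal is correct and mirrors the paper's own argument: the lemma is stated there merely as a summary of the preceding discussion, which applies It\^o's formula to \(R(\cdot,\abs{B_\cdot})\), uses \eqref{eq:bhe} to annihilate the second integral, uses \(R(0,0)=0\), and then takes the limit \(T'\uparrow T\). Your added remark that the identity holds simultaneously for all \(T'\) on a common full-measure event is a nice clarification the paper leaves implicit.
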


 \subsection{A Solution Inspired by Cover's Algorithm}
 In the remainder of this text we will make extensive use of a well-known function related to the Gaussian distribution known as \textbf{complementary error function}, defined by
 \begin{equation*}
   \erfc(z) \coloneqq 1 - \frac{2}{\sqrt{\pi}}\int_{0}^z e^{-x^2} \diff x
   = \frac{2}{\sqrt{\pi}}\int_{z}^{\infty} e^{-x^2} \diff x,
   \qquad \forall z \in \Reals.
 \end{equation*}
 In Section~\ref{sec:bounding_continuous_regret} we will show that the function \(Q \colon (-\infty, T) \times \Reals \to [0,1] \) in \(C^{1,2}\) given by
 \begin{equation*}
   Q(t,g) \coloneqq \frac{1}{2}  \erfc\paren*{\frac{g}{\sqrt{2(T - t)}}}, \qquad \forall (t, g) \in (-\infty, T) \times  \Reals
 \end{equation*}
 satisfies \(\ContRegret(T,Q) = \sqrt{T/(2\pi)}\) almost surely. Before bounding the continuous regret, it is enlightening to see how \(Q\) is related to Cover's algorithm.


 Specifically, let \(p^*\) be as in~\eqref{eq:optimal_p_as_random_walk_prob}. Due to the Central Limit Theorem, \(Q\) can be seen as an approximation of \(p^*\). To see why, let \((S_t)_{t = 0}^{\infty}\) be a symmetric random walk, and define \(X_t \coloneqq S_t - S_{t -1}\) and \(Y_t \coloneqq (X_t + 1)/2\) for each \(t \geq 1\). Note that \(Y_t\) follows a Bernoulli distribution with parameter~\(1/2\) for any \(t \geq 1\). Moreover, let \(Z\) be a Gaussian random variable with mean 0 and variance 1. Then, by setting \(\mu \coloneqq \Expect[2 Y_1] = 1\) and \(\sigma^2 \coloneqq \Expect[(2Y_1 - \mu)^2] = 1\), the Central Limit Theorem guarantees
 \begin{align*}
   \frac{1}{\sqrt{L}}S_L
   &= \frac{1}{\sqrt{L}}\sum_{i = 1}^L X_i
   = \frac{1}{\sqrt{L}} \sum_{i = 1}^L (2Y_i - 1)
  = \frac{\sqrt{L}}{\sigma} \paren*{ \frac{1}{L} \sum_{i = 1}^L 2 Y_i - \mu }
   \stackrel{L \to \infty}{\longrightarrow} Z,
 \end{align*}
 where the limit holds in distribution. Thus, we roughly have that \(S_L\) and \(\sqrt{L} Z\) have similar distributions. Then,
 \begin{align*}
   p^*(t,g)
   &= \frac{1}{2}\Prob(S_{T - t} = g) + \Prob(S_{T - t} > g)
   \approx \frac{1}{2}\Prob((\sqrt{T - t}) Z = g) + \Prob((\sqrt{T - t}) Z > g)
   \\
   &
   = \Prob\paren*{Z > \frac{g}{\sqrt{T - t}}}
   = \frac{1}{2} \erfc\paren*{\frac{g}{\sqrt{2(T - t)}}}
   = Q(t,g).
 \end{align*}
 One may already presume that using \(Q\) in place of \(p^*\) in the discrete experts' problem should yield a regret bound close enough to the guarantees on the regret of Cover's algorithm. Indeed, using Berry-Esseen's Theorem~\citep[Section~3.4.4]{Durret19a} to more precisely bound the difference between \(p^*\) and \(Q\) yields a \(O(\sqrt{T})\) regret bound with suboptimal constants against binary adversaries. However, it is not clear if the approximation error would yield the optimal constant in the regret bound. Additionally, these guarantees do not naturally extend to arbitrary experts' costs in \([0,1]\). In Section~\ref{sec:discretization} we will show how to use an algorithm closely related to \(Q\) that enjoys a clean bound on the discrete-time regret.

 \paragraph{Deriving \(Q\) directly from a PDE.} We have derived \(Q\) by a heuristic argument to approximate \(p^*\). Yet, one can derive the same solution without ever making use of \(p^*\) by approaching the problem directly from the stochastic calculus point of view. Namely, consider player strategies that satisfy the BHE, are non-negative, and that place \(1/2\) mass on each expert when the gap is \(0\). With only these conditions we would end up with anytime solutions similar to the ones considered by~\citet{HarveyLPR20a}. In the fixed-time case we can ``invert time'' by a change of variables \(t \gets T - t\). Then the BHE becomes the traditional heat equation, which \(Q\) satisfies together with the boundary conditions.

 \subsection{Bounding the Continuous Regret}
 \label{sec:bounding_continuous_regret}

 Interestingly, not only is \(Q\) in \(C^{1,2}\), but it also satisfies the backwards heat equation, even though we have never explicitly required such a condition to hold. Since the proof of this fact boils down to technical but otherwise straightforward computations, we defer it to Appendix~\ref{app:missing_proofs_continuous_regret}.

 \begin{restatable}{lemma}{BHEForQ}
   For all \(t \in [0, T)\) and \(g \in \Reals_{\geq 0}\) we have \(\bhe Q(t,g) = 0\).
 \end{restatable}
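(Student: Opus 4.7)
The plan is to verify the identity by direct computation, since the statement only requires evaluating two partial derivatives of an explicit elementary function and checking that they cancel. The key ingredient is the well-known identity $\tfrac{d}{dz}\erfc(z) = -\tfrac{2}{\sqrt{\pi}} e^{-z^2}$, together with the chain rule applied to the inner function $u(t,g) \coloneqq g/\sqrt{2(T-t)}$.

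First I would differentiate $Q$ twice in $g$. A single application of the chain rule gives $\partial_g Q(t,g) = -\tfrac{1}{\sqrt{2\pi(T-t)}}\, e^{-g^2/(2(T-t))}$, since the factor $1/\sqrt{2(T-t)}$ arises from $\partial_g u$ and cancels the $\tfrac{1}{2}$ in front of $\erfc$ combined with the $\tfrac{2}{\sqrt{\pi}}$ from $\erfc'$. Differentiating once more in $g$ introduces a factor $-g/(T-t)$ from the exponential, yielding
\begin{equation*}
  \partial_{gg} Q(t,g) \;=\; \frac{g}{(T-t)\sqrt{2\pi(T-t)}}\, e^{-g^2/(2(T-t))}.
\end{equation*}

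Next I would compute $\partial_t Q$. Here $\partial_t u(t,g) = g\cdot \tfrac{1}{2}\cdot 2\cdot (2(T-t))^{-3/2} = g/(2(T-t))^{3/2}$, so another application of the chain rule gives
\begin{equation*}
  \partial_t Q(t,g) \;=\; -\frac{1}{\sqrt{\pi}}\, e^{-g^2/(2(T-t))} \cdot \frac{g}{(2(T-t))^{3/2}}
  \;=\; -\frac{g}{2(T-t)\sqrt{2\pi(T-t)}}\, e^{-g^2/(2(T-t))}.
\end{equation*}
Comparing with the expression for $\partial_{gg}Q$ above, we see that $\partial_t Q(t,g) + \tfrac{1}{2}\partial_{gg} Q(t,g) = 0$, which is the backwards heat equation.

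Since everything reduces to routine differentiation of an elementary function and all quantities involved are smooth on $[0,T)\times \mathbb{R}$, there is no real obstacle; the only thing to be careful about is keeping track of the minus sign coming from $\partial_t(T-t)$ and from $\erfc'$, and managing the powers of $(T-t)$. Because of this, the cleanest presentation is simply to set an auxiliary variable $\tau \coloneqq T-t$ at the outset so that $\partial_t = -\partial_\tau$, and then carry out the three derivatives in sequence. The fact that the lemma is stated only for $g \geq 0$ plays no role in the computation, which is valid on all of $\mathbb{R}$.
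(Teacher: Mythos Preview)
Your proposal is correct and follows essentially the same approach as the paper: both compute $\partial_t Q$, $\partial_g Q$, and $\partial_{gg} Q$ directly via the chain rule using $\tfrac{d}{dz}\erfc(z) = -\tfrac{2}{\sqrt{\pi}}e^{-z^2}$, and then observe that $\partial_{gg} Q = -2\,\partial_t Q$. The only cosmetic difference is your suggested substitution $\tau = T - t$, which the paper does not use but which changes nothing substantive.
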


 However, recall that to use Lemma~\ref{lemma:summary_ito_experts} we need a function \(R \in C^{1,2}\) with \(\partial_g R = Q\) that satisfies the backwards heat equation, not necessarily \(Q\) itself needs to satisfy the backwards heat equation. Luckily enough, the following lemma shows how to obtain such a function \(R\) based on \(Q\).

 \begin{proposition}[{\citealp[Lemma~5.6]{HarveyLPR20a_arxiv}}]
   Let \(h \colon [0, T) \times \Reals \to \Reals\) be in \(C^{1,2}\) and define
   \begin{equation*}
     f(t,g)
     \coloneqq \int_0^g h(t,y) \diff y
     - \frac{1}{2} \int_0^t \partial_g h(s, 0) \diff s,
     \qquad \forall (t,g) \in [0, T) \times \Reals.
   \end{equation*}
   Then,
   \begin{enumerate}[(i)]
     \item \(f \in C^{1,2}\),
     \item If \(h\) satisfies~\eqref{eq:bhe}, then so does \(f\),
     \item \(h = \partial_g f\).
   \end{enumerate}
 \end{proposition}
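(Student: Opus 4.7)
The plan is to verify the three items in order (iii), (i), (ii), since (iii) gives the identification of $\partial_g f$ that feeds directly into the other two parts. For (iii), I would simply observe that the second summand in the definition of $f$ does not depend on $g$, so by the fundamental theorem of calculus applied to $g \mapsto \int_0^g h(t,y)\diff y$ (using continuity of $h$ in its second argument), one gets $\partial_g f(t,g) = h(t,g)$.

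For (i), I would establish the existence and continuity of $\partial_t f$, $\partial_g f$, and $\partial_{gg} f$. From (iii), $\partial_g f = h$, which is in $C^{1,2}$, so in particular $\partial_{gg} f = \partial_g h$ exists and is continuous. For $\partial_t f$, I would differentiate under the integral sign in the first summand; this is justified because $\partial_t h$ is continuous on $[0,T)\times \Reals$, hence uniformly continuous on compact sets, giving
\begin{equation*}
  \partial_t f(t,g) = \int_0^g \partial_t h(t,y) \diff y - \tfrac{1}{2} \partial_g h(t, 0),
\end{equation*}
which is continuous in $(t,g)$ since $\partial_t h$ is jointly continuous and $t \mapsto \partial_g h(t,0)$ is continuous.

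For (ii), assume $h$ satisfies~\eqref{eq:bhe}, i.e., $\partial_t h(t,y) = -\tfrac{1}{2}\partial_{gg} h(t,y)$ for all $(t,y)$. Substituting this into the formula for $\partial_t f$ from (i) and applying the fundamental theorem of calculus to the antiderivative $y \mapsto \partial_g h(t,y)$ yields
\begin{equation*}
  \partial_t f(t,g)
  = -\tfrac{1}{2} \int_0^g \partial_{gg} h(t,y) \diff y - \tfrac{1}{2} \partial_g h(t, 0)
  = -\tfrac{1}{2} \partial_g h(t,g)
  = -\tfrac{1}{2} \partial_{gg} f(t,g),
\end{equation*}
where the last equality again uses (iii). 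Rearranging gives $\bhe f(t,g) = 0$.

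I do not expect a real obstacle here: the proof is essentially bookkeeping built out of the fundamental theorem of calculus and differentiation under the integral sign. The only mild subtlety is justifying the exchange of $\partial_t$ and $\int_0^g$, which requires the joint continuity of $\partial_t h$ (guaranteed by $h \in C^{1,2}$) together with the fact that the integration is over a bounded interval $[0,g]$; this is exactly the classical Leibniz rule hypothesis, so it goes through cleanly.
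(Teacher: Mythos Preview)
Your argument is correct: the three items follow exactly as you outline, with (iii) an immediate application of the fundamental theorem of calculus, (i) a combination of (iii) and the Leibniz rule (valid because $\partial_t h$ is jointly continuous and the $g$-integral is over a bounded interval), and (ii) a direct substitution of $\partial_t h = -\tfrac{1}{2}\partial_{gg} h$ followed by another application of the fundamental theorem of calculus. The paper itself does not supply a proof of this proposition---it simply cites \cite[Lemma~5.6]{HarveyLPR20a_arxiv}---so there is no in-paper argument to compare against; your self-contained verification is the natural one and would serve as a complete proof.
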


 In light of the above proposition, for all \((t,g) \in (-\infty, T) \times \Reals\) define
 \begin{equation*}
   R(t,g) \coloneqq \int_0^g Q(t,x) \diff x - \frac{1}{2} \int_0^t \partial_g Q(s,0) \diff s.
 \end{equation*}
 In the case above, we can evaluate these integrals and obtain a formula for \(R\) that is easier to analyze. Although we defer a complete proof of the next equation to Appendix~\ref{app:missing_proofs_continuous_regret}, using that \(\int_0^y \erfc(x) \diff x = y \erfc(y) - \frac{1}{\sqrt{\pi}} e^{-y^2} + \frac{1}{\sqrt{\pi}}\) \citep[Section~7.7(i)]{NIST10a} and that \(\frac{\diff}{\diff x} \erfc(x) = - \frac{2}{\sqrt{\pi}} e^{-x^2} \) we can show for every \(t \in (-\infty, T)\) and $g \in \Reals$ that 
 \begin{equation}
  \label{eq:rtg_formula}
  R(t,g) = \frac{g}{2} \erfc\paren[\Bigg]{\frac{g}{\sqrt{2(T - t)}}} - \sqrt{\frac{T -t}{2 \pi}} \exp\paren[\Bigg]{-\frac{g^2}{2(T - t)}} + \sqrt{\frac{T}{2 \pi}}.
 \end{equation}
 Since \(R\) satisfies~\eqref{eq:bhe}, Lemma~\ref{lemma:summary_ito_experts} shows the continuous regret of \(Q\) is given exactly by \(R\). The following lemma shows a bound on \(R\) and, thus, a bound on the continuous regret of \(Q\).

 \begin{lemma}
   \label{lemma:properties_of_R}
   We have \(R(0,0) = 0\) and
   \begin{equation*}
      R(t,g) \leq \sqrt{\frac{T}{2\pi}}, \qquad \forall (t,g) \in [0,T) \times \Reals.
   \end{equation*}
 \end{lemma}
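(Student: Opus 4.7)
The plan is to prove $R(0,0) = 0$ by direct substitution into~\eqref{eq:rtg_formula} and to reduce the upper bound to a standard tail estimate on $\erfc$. For the first part, plugging $t = 0, g = 0$ into~\eqref{eq:rtg_formula} gives $R(0,0) = 0 \cdot \erfc(0) - \sqrt{T/(2\pi)} \cdot e^{0} + \sqrt{T/(2\pi)} = 0$, since $\erfc(0) = 1$.

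For the inequality, the desired bound $R(t,g) \leq \sqrt{T/(2\pi)}$ is, after subtracting $\sqrt{T/(2\pi)}$ from both sides and using~\eqref{eq:rtg_formula}, equivalent to
\begin{equation*}
  \frac{g}{2} \erfc\paren*{\frac{g}{\sqrt{2(T - t)}}} \leq \sqrt{\frac{T-t}{2\pi}} \exp\paren*{-\frac{g^2}{2(T-t)}}, \qquad \forall (t,g) \in [0,T) \times \Reals.
\end{equation*}
I first dispatch the easy case $g \leq 0$: since $\erfc$ is positive everywhere, the left-hand side is nonpositive while the right-hand side is strictly positive, so the inequality holds trivially.

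For $g > 0$, I substitute $u \coloneqq g/\sqrt{2(T-t)} > 0$ and multiply both sides by $\sqrt{2/(T-t)}$; the inequality collapses to the one-variable claim
\begin{equation*}
  u \erfc(u) \leq \frac{1}{\sqrt{\pi}} e^{-u^2}, \qquad \forall u > 0.
\end{equation*}
This is a classical estimate that follows immediately from the monotone bound $x/u \geq 1$ on $[u, \infty)$:
\begin{equation*}
  \erfc(u) = \frac{2}{\sqrt{\pi}} \int_u^{\infty} e^{-x^2} \diff x
  \leq \frac{2}{\sqrt{\pi}} \int_u^{\infty} \frac{x}{u} e^{-x^2} \diff x
  = \frac{1}{u\sqrt{\pi}} e^{-u^2},
\end{equation*}
where the last equality uses that $-\tfrac{1}{2} e^{-x^2}$ is an antiderivative of $x e^{-x^2}$. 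Multiplying through by $u$ yields the desired one-variable inequality, completing the proof.

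No step is a serious obstacle: the upper bound is essentially a repackaging of the Mills-ratio type tail estimate for $\erfc$, and every computation is either a direct substitution or a one-line integral bound. The only part requiring any care is making the substitution and algebra clean so that the $(t,g)$ statement really does reduce to the one-variable inequality in $u$.
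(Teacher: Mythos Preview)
Your proof is correct and follows essentially the same approach as the paper: both reduce the bound for $g>0$ to the Mills-ratio estimate $\erfc(z)\le e^{-z^2}/(z\sqrt{\pi})$, proved by the identical integral trick of inserting $x/z\ge 1$ on $[z,\infty)$. Your substitution $u=g/\sqrt{2(T-t)}$ is a clean presentational touch, and your treatment of the $g\le 0$ case is in fact more careful than the paper's (which asserts $R(t,g)\le 0$ for $g\le 0$, a claim that is false at, e.g., $g=0$ and $t>0$, though the weaker bound $R\le\sqrt{T/(2\pi)}$ you actually need still holds there).
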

 \begin{proof}
   The facts that \(R(0,0) = 0\) and that \(R(t,g) \leq 0\) for \(g \leq 0 \) and \(t \in [0,T)\) are easily verifiable. For the bound on \(R\) for \(g > 0\), note first that for any \(z > 0\) we have
   \begin{equation*}
     \erfc(z)
     = \frac{2}{\sqrt{\pi}} \int_{z}^{\infty} e^{x^2} \diff x
     = \frac{2}{\sqrt{\pi}} \int_{z}^{\infty}
     \frac{2x}{2x} e^{x^2} \diff x
     \leq
     \frac{1}{z\sqrt{\pi}} \int_{z}^{\infty}
     2x e^{x^2} \diff x
     = \frac{e^{-z^2}}{z\sqrt{\pi}}.
   \end{equation*}
   Therefore, for all \((t,g) \in [0, T) \times \Reals_{> 0}\) we have
   \begin{equation*}
     \frac{g}{2} \erfc\paren[\Bigg]{\frac{g}{\sqrt{2(T - t)}}}
     \leq \frac{g}{2} \cdot \frac{\sqrt{2(T - t)}\exp\paren*{\frac{-g^2}{2(T-t)}}}{g \sqrt{\pi}}
     = \sqrt{\frac{T-t}{2 \pi}} \exp\paren*{\frac{-g^2}{2(T-t)}}.
   \end{equation*}
   Applying the above to~\eqref{eq:rtg_formula} yields the desired bound.
 \end{proof}

 Combining these results we get the desired bound on the continuous regret of \(Q\), which we summarize in the following theorem.

 \begin{theorem}
   We have \(\ContRegret(T,Q) \leq \sqrt{T/(2\pi)}\) almost surely.
 \end{theorem}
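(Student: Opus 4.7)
The plan is to assemble the pieces already developed in this section: the function $Q$ satisfies the backwards heat equation (stated in the earlier restatable lemma), and the proposition from \citet{HarveyLPR20a_arxiv} constructs $R \in C^{1,2}$ with $\partial_g R = Q$ that also satisfies \eqref{eq:bhe}. From Lemma~\ref{lemma:properties_of_R} we additionally know $R(0,0) = 0$. These three properties are exactly the hypotheses of Lemma~\ref{lemma:summary_ito_experts} applied with $p = Q$, so we obtain almost surely that
\begin{equation*}
   \ContRegret(T, Q) \;=\; \lim_{t \uparrow T} R(t, \abs{B_t}).
\end{equation*}

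Next, I would invoke the pointwise bound from Lemma~\ref{lemma:properties_of_R}, which says $R(t,g) \leq \sqrt{T/(2\pi)}$ for every $(t,g) \in [0,T) \times \Reals$. In particular, for every $t \in [0,T)$ we have $R(t, \abs{B_t}) \leq \sqrt{T/(2\pi)}$ almost surely, and passing to the limit as $t \uparrow T$ preserves this inequality. Combined with the preceding display, this yields
\begin{equation*}
   \ContRegret(T, Q) \leq \sqrt{\frac{T}{2\pi}} \qquad \text{almost surely,}
\end{equation*}
which is the claim.

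There is essentially no obstacle in the argument as stated, since all the technical work (verifying $Q$ solves \eqref{eq:bhe}, constructing $R$ via the proposition, computing the closed form \eqref{eq:rtg_formula}, and bounding $R$ via the $\erfc$ tail estimate) has already been carried out. The only subtlety to mention explicitly is the legitimacy of passing the almost-sure inequality through the limit: because the bound $R(t, \abs{B_t}) \leq \sqrt{T/(2\pi)}$ holds on a full-measure event for each $t \in [0,T)$, it holds simultaneously on a full-measure event along any countable sequence $t_n \uparrow T$, and left-continuity (guaranteed by the existence of the limit defining $\ContRegret$) lets us conclude the bound for the limit itself.
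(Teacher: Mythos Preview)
Your proposal is correct and is essentially the same as the paper's own argument: the paper simply states that ``combining these results'' (namely, the fact that $Q$ satisfies \eqref{eq:bhe}, the construction of $R$ via the proposition, Lemma~\ref{lemma:summary_ito_experts}, and Lemma~\ref{lemma:properties_of_R}) yields the theorem, and you have spelled out exactly that combination. Your added remark on passing the inequality through the limit is a reasonable clarification that the paper leaves implicit.
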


 \label{sec:continuous_algorithm}

 \section{From Continuous to Discrete Time}
 \label{sec:discretization}


 In the continuous time algorithm we have that \(R(t,g)\) is the continuous regret at time \(t\) with gap \(g\) of the strategy that places probability mass on the lagging expert\footnote{We have never formally defined lagging and leading experts in continuous time, and we do not intend to do so. Here we are extrapolating the view given by Proposition~\ref{prop:gap_regret} of regret as a stochastic integral of the probability put on the lagging expert with respect to the gaps for the sake of intuition.} according to~\(Q(t,g) = \partial_g R(t,g)\). At the same time, for Cover's algorithm we have \(V^*[t,g]\) as an upper-bound on the regret when the mass on the lagging expert is given by \(p^*(t,g)\). Furthermore, similar to the relation between \(Q\) and~\(R\), we can write \(p^*\) as a function of \(V^*\) (details can be found on Appendix~\ref{sec:covers_algorithm}): at round \(t\) with gap \(g\) at round \(t-1\), the probability mass placed on the lagging expert in Cover's algorithm is\footnote{For \(g = 0\) this does not follow directly, but our goal at the moment is only to build intuition. }
 \begin{equation*}
   p^*(t,g) = \frac{V^*[t, g-1] - V^*[t, g+1]}{2} \approx \partial_g V^*[t,g].
 \end{equation*}
 That is, \(p^*\) is a sort of discrete derivative of \(V^*\) with respect to its second argument. From this analogy, one might expect that a discrete derivative of \(R\) with respect to its second argument yields a good strategy for the player in the original experts' problem. As we shall see, this is exactly the case. Additionally, computing the discrete derivative of \(R\) amounts to a couple of evaluations of the complementary error function, which we can assume to be computable (up to machine precision) in constant time.

 In this section we shall describe the discretized algorithm and give an upper-bound on its regret against restricted binary adversaries, that is, adversaries that choose costs in \(\curly{(0,1)^{\transp}, (1,0)^{\transp}}\). Luckily, unlike Cover's algorithm, the strategy we shall see in this section smoothly extends to general costs in \([0,1]\) while preserving its performance guarantees. Since the details of this extension amounts to concavity arguments, we defer the details of this extension to Appendix~\ref{app:general_costs}.

 \subsection{Discrete It\^o's Formula}

 In Section~\ref{sec:continuous_problem}, the main tool to relate the continuous regret to the function \(R\) was It\^o's formula. Similarly, one of the main tools for the analysis of the discretized continuous-time algorithm will be a discrete version of It\^o's formula. In order to state such a formula and to describe the algorithm, some standard notation to denote discrete derivative will be useful. Namely, for any function \(f \colon \Reals^2 \to \Reals\) and any \(t,g \in
 \Reals\), define
 \begin{align*}
   f_g(t,g) &\coloneqq
   \frac{f(t, g + 1) - f(t, g -1)}{2},\\
   f_t(t,g) &\coloneqq f(t,g) - f(t-1,g),\\
   f_{gg}(t,g)
   &\coloneqq f(t, g+1) + f(t, g-1) - 2f(t,g).
 \end{align*}

 We are now in place to state a discrete analogue of It\^o's formula. One important assumption of the next theorem is that \(g_0, \dotsc, g_T \in \Reals\) are such that successive values have absolute difference equal to $1$. In the case where \(g_0, \dotsc, g_T\) are gaps in a 2-experts problem, this means that the adversary needs to be a restricted binary adversary. The version of the next theorem as stated --- including the dependence on \(t\) --- can be found in \citet[Lemma~3.7]{HarveyLPR20a}. Yet, this theorem is a slight generalization of earlier results such as the ones due to \citet[Section~2]{Fujita08a} and \citet[Theorem~2]{Kudzhma82a} 

 \begin{theorem}[{Discrete It\^o's Formula}]
   \label{thm:discrete_ito}
   Let \(g_0, g_1, \dotsc, g_T \in \Reals\) be such that $\abs{g_t -
   g_{t-1}} = 1$ for every \(t \in [T]\) and let \(f \colon \Reals^2 \to
   \Reals\). Then,
   \begin{equation*}
     f(T, g_T) - f(0, g_0)
     = \sum_{t = 1}^T f_g(t, g_{t-1}) (g_t - g_{t-1})
     + \sum_{t = 1}^T \paren[\big]{\tfrac{1}{2} f_{gg}(t, g_{t-1}) +
     f_{t}(t, g_{t-1})}.
   \end{equation*}
 \end{theorem}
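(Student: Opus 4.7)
The plan is to reduce the identity to a clean telescoping sum and then handle each telescoped increment by a simple case analysis on the sign of $g_t - g_{t-1}$.

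First, I would telescope the left-hand side:
\[
f(T, g_T) - f(0, g_0) = \sum_{t = 1}^T \bigl[f(t, g_t) - f(t-1, g_{t-1})\bigr],
\]
and split each summand as
\[
f(t, g_t) - f(t-1, g_{t-1}) = \underbrace{\bigl[f(t, g_t) - f(t, g_{t-1})\bigr]}_{\text{spatial part}} + \underbrace{\bigl[f(t, g_{t-1}) - f(t-1, g_{t-1})\bigr]}_{= f_t(t, g_{t-1})}.
\]
The second bracket is exactly $f_t(t, g_{t-1})$ by definition, so summing gives the $f_t$ contribution on the right-hand side. It remains to show that the spatial part equals $f_g(t, g_{t-1})(g_t - g_{t-1}) + \tfrac{1}{2} f_{gg}(t, g_{t-1})$ for each $t$.

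Here is where I would use the assumption $g_t - g_{t-1} \in \{-1, +1\}$. The key observation is the algebraic identity, valid for any reals $a, b, c$ and any $\eps \in \{-1, +1\}$:
\[
c_\eps - b = \eps \cdot \frac{a - c}{2} \cdot (-1) + \tfrac{1}{2}(a + c - 2b) \quad\text{where } c_{+1} = a,\ c_{-1} = c.
\]
More concretely, applied with $a = f(t, g_{t-1} + 1)$, $b = f(t, g_{t-1})$, $c = f(t, g_{t-1} - 1)$: when $g_t - g_{t-1} = +1$ the spatial part is $a - b = \tfrac{a - c}{2} + \tfrac{1}{2}(a + c - 2b) = f_g(t, g_{t-1}) + \tfrac{1}{2} f_{gg}(t, g_{t-1})$; when $g_t - g_{t-1} = -1$ the spatial part is $c - b = -\tfrac{a - c}{2} + \tfrac{1}{2}(a + c - 2b) = -f_g(t, g_{t-1}) + \tfrac{1}{2} f_{gg}(t, g_{t-1})$. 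In both cases this equals $f_g(t, g_{t-1})(g_t - g_{t-1}) + \tfrac{1}{2} f_{gg}(t, g_{t-1})$.

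Summing over $t = 1, \dotsc, T$ then yields the identity. There is no real obstacle here: the only subtlety is recognizing how the centered discrete first derivative $f_g$ and the discrete second derivative $f_{gg}$ combine to reproduce the one-sided differences, which is why the $\tfrac{1}{2} f_{gg}$ term appears; this is the discrete analogue of the quadratic variation term in the continuous It\^o formula, reflecting that $(g_t - g_{t-1})^2 = 1$ almost surely in this symmetric-random-walk-like setting.
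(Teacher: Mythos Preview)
Your proof is correct. Note, however, that the paper does not actually supply its own proof of this theorem: it is stated with attribution to \cite[Lemma~3.7]{HarveyLPR20a} (and earlier to Fujita and Kud\v{z}ma), so there is no in-paper argument to compare against. Your telescoping-plus-case-analysis is the standard direct verification, and the computation is clean; the only minor stylistic point is that the ``algebraic identity'' you state in symbols $a,b,c,\eps$ is a bit garbled as written (the sign and the $(-1)$ factor don't parse), but the concrete instantiation you give immediately afterward is correct and is all that is needed.
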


 The first summation in the right-hand side of discrete It\^o's formula can be seen as a discrete stochastic integral when \((g_t)_{t = 0}^T\) is a random sequence. Remarkably, this term is extremely similar to the regret formula from Proposition~\ref{prop:gap_regret}. Thus, if we were to use discrete It\^o's formula to bound the regret, it would be desirable for the second term to (approximately) satisfy an analogue of~\eqref{eq:bhe}. In fact, the potential \(V^*\) from Cover's algorithm satisfies the discrete BHE (with some care needed when the gap is zero, see Appendix~\ref{app:dbhe}). Furthermore, the connection between BHE seems to extend to other problems in online learning: in recent work, \citet{ZhangCP22a} showed how coin-betting with potentials that satisfy the BHE yield optimal algorithms for unconstrained online learning.

 Since \(R\) satisfies~\eqref{eq:bhe}, one might hope that \(R\) would also satisfy such a discrete backwards-heat inequality, yielding an upper-bound on the regret of the strategy given by~\(R_g\). In the work of \citet{HarveyLPR20a} in the anytime setting, it was the case that the terms in the second sum were non-negative, which in a sense means that the discretized algorithm suffers \emph{negative} discretization error. In the fixed-time setting we are not as lucky.

 \subsection{Discretizing the Algorithm}

 Based on the discussion at the beginning of this section, a natural way to discretize the algorithm from Section~\ref{sec:continuous_problem} is to define the function \(q \colon [T] \times \iinterval{0}{T-1} \to \Reals\) by
 \begin{equation*}
   q(t,g) \coloneqq
   \begin{cases}
     R_g(t,g)
     &\text{if}~t < T,\\
     \boole{g = 0} \frac{1}{2}   &\text{if}~t = T,
   \end{cases}
    \qquad \forall t \in [T], \forall g \in \iinterval{0}{T-1},
 \end{equation*}
 where we need to treat the case at the very last step differently since \(R\) is not defined on \(\{T\} \times \Reals\). It is not clear from its definition, but we indeed have \(q(t,0) = 1/2\) for all \(t \in [T]\). We defer the (relatively technical) proof of the next result to Appendix~\ref{sec:missing_proofs_discretization}.

 \begin{restatable}{lemma}{qIsOneHalf}
   \label{lemma:q_is_one_half}
   We have \(q(t,0) = 1/2\) for all \(t \in [T]\).
 \end{restatable}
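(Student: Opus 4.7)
The plan is to split on the two cases in the definition of $q$. The case $t = T$ is immediate: the definition gives $q(T,0) = \boole{0 = 0} \cdot \tfrac{1}{2} = \tfrac{1}{2}$. So the real content is the case $t < T$, where we need to verify that $R_g(t, 0) = \tfrac{R(t,1) - R(t,-1)}{2}$ equals $\tfrac{1}{2}$.

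For this, I would just plug $g = 1$ and $g = -1$ into the closed-form expression~\eqref{eq:rtg_formula}. Setting $\sigma \coloneqq \sqrt{2(T - t)}$ and $a \coloneqq 1/\sigma$, the expressions
\begin{align*}
  R(t, 1)  &= \tfrac{1}{2}\erfc(a) - \sqrt{\tfrac{T-t}{2\pi}}\, e^{-a^2} + \sqrt{\tfrac{T}{2\pi}},\\
  R(t,-1) &= -\tfrac{1}{2}\erfc(-a) - \sqrt{\tfrac{T-t}{2\pi}}\, e^{-a^2} + \sqrt{\tfrac{T}{2\pi}}
\end{align*}
share identical Gaussian and constant pieces, so those two terms cancel in the difference $R(t,1) - R(t,-1)$. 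The key identity is then $\erfc(-a) = 2 - \erfc(a)$, which follows directly from the definition of $\erfc$ (the integrand $e^{-x^2}$ is even, so $\tfrac{2}{\sqrt{\pi}}\int_0^{-a} e^{-x^2}\diff x = -\tfrac{2}{\sqrt{\pi}}\int_0^a e^{-x^2}\diff x$, and $\erfc(-a) = 1 - (-(1 - \erfc(a))) = 2 - \erfc(a)$). Substituting gives $R(t,-1) = -1 + \tfrac{1}{2}\erfc(a) - \sqrt{(T-t)/(2\pi)}\,e^{-a^2} + \sqrt{T/(2\pi)}$, so that $R(t,1) - R(t,-1) = 1$, whence $q(t,0) = R_g(t, 0) = 1/2$.

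There is no genuine obstacle here; the only subtlety worth flagging is that evaluating $R$ at $g = -1$ is legitimate, since the formula~\eqref{eq:rtg_formula} is defined on all of $(-\infty, T) \times \Reals$ (not just on $g \geq 0$), and the $\erfc$ identity is what makes the two half-Gaussian contributions line up to produce the clean constant $1/2$. In essence, this lemma is confirming that $R$, although derived via an antiderivative of $Q$ starting from $g = 0$, is set up so that its central discrete slope at $g = 0$ matches the continuous value $Q(t, 0) = \tfrac{1}{2}\erfc(0) = \tfrac{1}{2}$ exactly, not just approximately.
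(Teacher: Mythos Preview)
Your proposal is correct and takes essentially the same approach as the paper: both dispatch $t=T$ by definition, then compute $R_g(t,0) = \tfrac{1}{2}(R(t,1)-R(t,-1))$ using the reflection identity $\erfc(-z)=2-\erfc(z)$. The only cosmetic difference is that you plug directly into the closed form~\eqref{eq:rtg_formula}, whereas the paper goes back to the integral definition of $R$ and reduces to the symmetry $Q(t,x)+Q(t,-x)=1$ before integrating; the underlying idea and the key identity are identical.
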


 Our goal now is to combine Proposition~\ref{prop:gap_regret} and the discrete It\^o's formula to bound the regret of~\(\Acal_q\). Since \(R\) satisfies the~\eqref{eq:bhe}, one might hope that
 \(R\) is close to satisfying the discrete version of this equation. To formalize this idea, for all \(t \in (-\infty,T)\) and \(g \in \Reals\) define
 \begin{equation*}
   r_{gg}(t,g) \coloneqq \partial_{gg}R(t,g) - R_{gg}(t,g)
   \qquad \text{and} \qquad
   r_{t}(t,g) \coloneqq \partial_{t}R(t,g) - R_{t}(t,g).
 \end{equation*}
 The above terms measure how well the first derivative with respect to the first variable and the second derivative with respect to the second variable are each approximated by their discrete analogues. That is, these are basically the discretization errors on the derivatives of \(R\).   Then, combining the fact that \(R\) satisfies~\eqref{eq:bhe} together with Proposition~\ref{prop:gap_regret} yields the following theorem.

 \begin{theorem}
   \label{thm:regret_with_discretization_error}
   Consider a game of \(\Acal_q\) with a restricted binary adversary with gap sequence given by \(g_0, g_1, g_2, \dotsc, g_T \in \iinterval{0}{T}\) such that \(g_0 = 0\) and \(\abs{g_t - g_{t-1}} = 1\) for all \(t \in [T]\). Then,
   \begin{equation}
     \label{eq:regret_with_ito}
     \Regret(T)
     \leq \sqrt{\frac{T}{2 \pi}} + \frac{1}{2}
     + \frac{1}{2}\sum_{t = 1}^{T-1} r_{gg}(t, g_{t-1})
     + \sum_{t = 1}^{T-1} r_t(t, g_{t-1}).
   \end{equation}
 \end{theorem}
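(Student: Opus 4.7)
The plan is to combine Proposition~\ref{prop:gap_regret} with the Discrete It\^o's Formula (Theorem~\ref{thm:discrete_ito}) applied to \(R\), and then collapse the second-order terms using the fact that \(R\) satisfies~\eqref{eq:bhe}. Since Lemma~\ref{lemma:q_is_one_half} gives \(q(t,0) = 1/2\) for all \(t\in[T]\), the second form of Proposition~\ref{prop:gap_regret} yields \(\Regret(T) = \sum_{t=1}^T q(t,g_{t-1})(g_t - g_{t-1})\). I will split this sum into the indices \(t < T\) and the single index \(t = T\), since \(q\) is defined differently in the terminal round.

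For the terminal contribution \(q(T, g_{T-1})(g_T - g_{T-1})\), the definition of \(q\) gives \(q(T,g_{T-1}) = 1/2\) when \(g_{T-1} = 0\) and \(0\) otherwise. In the first case \(g_T = 1\) (since the gap is non-negative and \(\abs{g_T - g_{T-1}} = 1\)), so the contribution equals exactly \(1/2\); in the second case it equals \(0\). Either way this term is at most \(1/2\), matching the \(+\tfrac{1}{2}\) constant in the target bound. For the indices \(t < T\) we have \(q(t,g_{t-1}) = R_g(t,g_{t-1})\), so the partial sum is the discrete stochastic integral \(\sum_{t=1}^{T-1} R_g(t,g_{t-1})(g_t - g_{t-1})\).

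Now I apply Theorem~\ref{thm:discrete_ito} with \(f = R\) on the indices \(0, 1, \dotsc, T-1\), which is valid because \(\abs{g_t - g_{t-1}} = 1\) and \(R\) is defined on \((-\infty, T) \times \Reals\) (so the discrete second derivative \(R_{gg}(t,0)\) is well defined even when the gap touches zero). This gives
\begin{equation*}
R(T-1, g_{T-1}) - R(0,0) = \sum_{t=1}^{T-1} R_g(t,g_{t-1})(g_t - g_{t-1}) + \sum_{t=1}^{T-1}\left[\tfrac{1}{2} R_{gg}(t,g_{t-1}) + R_t(t,g_{t-1})\right].
\end{equation*}
Substituting \(R_{gg} = \partial_{gg} R - r_{gg}\) and \(R_t = \partial_t R - r_t\), and using \(\bhe R \equiv 0\), each bracket becomes exactly \(-\tfrac{1}{2} r_{gg}(t,g_{t-1}) - r_t(t,g_{t-1})\). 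Solving for the stochastic integral and applying Lemma~\ref{lemma:properties_of_R} (which yields \(R(0,0) = 0\) and \(R(T-1, g_{T-1}) \leq \sqrt{T/(2\pi)}\)) bounds \(\sum_{t=1}^{T-1} R_g(t,g_{t-1})(g_t - g_{t-1})\) above by \(\sqrt{T/(2\pi)} + \tfrac{1}{2}\sum r_{gg} + \sum r_t\). Adding the terminal contribution of at most \(1/2\) produces the inequality in the theorem.

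There is essentially no hard technical obstacle here; the proof is algebraic bookkeeping, and the one point that needs care is matching signs in the BHE substitution and justifying that the \(t=T\) term is really bounded by \(1/2\) (rather than being part of the It\^o telescoping, since we only run It\^o's formula up to \(T-1\) because \(R\) has no value at time \(T\)). The genuinely nontrivial work is pushed into the separate problem of bounding the discretization-error sums \(\sum r_{gg}\) and \(\sum r_t\), which is not part of this statement.
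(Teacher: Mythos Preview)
Your proof is correct and follows essentially the same approach as the paper's: apply Proposition~\ref{prop:gap_regret} with Lemma~\ref{lemma:q_is_one_half}, peel off the \(t=T\) term and bound it by \(1/2\), then run the discrete It\^o formula on \(R\) over \(t=1,\dotsc,T-1\), collapse the second-order sum via~\eqref{eq:bhe} and the definitions of \(r_t,r_{gg}\), and finish with Lemma~\ref{lemma:properties_of_R}. Your handling of the terminal term is slightly more explicit (case-splitting on \(g_{T-1}=0\) versus \(g_{T-1}>0\)) whereas the paper simply invokes \(q(T,g_{T-1})\leq 1/2\), but the substance is identical.
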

 \begin{proof}
   Lemma~\ref{lemma:q_is_one_half} and Proposition~\ref{prop:gap_regret} yield
   \begin{equation}
     \label{eq:proof_regret_with_ito_1}
     \Regret(T)
     = \sum_{t = 1}^T q(t, g_{t-1})(g_t - g_{t-1})
     \leq \sum_{t = 1}^{T-1} q(t, g_{t-1})(g_t - g_{t-1}) + \frac{1}{2},
   \end{equation}
   where in the last inequality we used that \(q(T,g_{T-1}) \leq 1/2\).
   Furthermore, by the discrete It\^o's formula (Theorem~\ref{thm:discrete_ito}), we have
 \begin{align*}
   R(T-1, g_{T-1}) - R(0, g_0)
   &= \sum_{t = 1}^{T-1} R_g(t, g_{t-1}) (g_t - g_{t-1})
   + \sum_{t = 1}^{T-1} \paren[\big]{\tfrac{1}{2} R_{gg}(t, g_{t-1}) +
   R_{t}(t, g_{t-1})} \\
   &\stackrel{\eqref{eq:bhe}}{=} \sum_{t = 1}^{T-1} q(t, g_{t-1}) (g_t - g_{t-1})
   - \sum_{t = 1}^{T-1} \paren[\big]{\tfrac{1}{2} r_{gg}(t, g_{t-1}) +
   r_{t}(t, g_{t-1})}
   \\
   &
   \stackrel{\eqref{eq:proof_regret_with_ito_1}}{\geq}
    \Regret(T) - \frac{1}{2} - \sum_{t = 1}^{T-1} \paren[\big]{\tfrac{1}{2} r_{gg}(t, g_{t-1}) +
   r_{t}(t, g_{t-1})}.
 \end{align*}
 Rearranging and using the facts given by Lemma~\ref{lemma:properties_of_R} that \(R(0,0) = 0\) and that \(R(T-1, g_{T-1}) \leq \sqrt{T/(2\pi)}\) yield the desired bound on the regret.
 \end{proof}

 \subsection{Bounding the Discretization Error}

 In light of Theorem~\ref{thm:regret_with_discretization_error}, it suffices to bound the accumulated discretization error of the derivatives to obtain potentially good bounds on the regret of \(\Acal_q\). The next two lemmas show that both \(r_t(t,g)\) and \(r_{gg}(t,g)\) are in \(O((T - t)^{-3/2})\). Since
 \begin{equation}
  \label{eq:integral_bound}
   \sum_{t = 1}^{T-1} (T - t)^{-3/2}  \leq \int_{t = 0}^{T-1} (T - t)^{-3/2} \diff t = 2\paren*{1 - \frac{1}{\sqrt{T}}} \leq 2,
 \end{equation}
 this will show that \(\Acal_q\) suffers at most \(\sqrt{T/2\pi} + O(1)\) regret.\footnote{{This together  with Prop.~\ref{prop:gap_regret} also shows that the difference between in the regret of \(\Acal_q\) and \(\Acal_Q\) is in \(O(1)\).}} Since the proof of these bounds are relatively technical but otherwise not considerably insightful, we defer them to Appendix~\ref{sec:missing_proofs_discretization}.

 \begin{restatable}{lemma}{BoundOnrt}
   \label{lemma:bound_rt}
   For any \(t \in (-\infty, T)\) and \(g \in \Reals\) we have
   \begin{equation*}
     r_t(t,g) \leq \frac{\sqrt{2}}{8 \sqrt{\pi}} \cdot\frac{1}{(T - t)^{3/2}}
     \qquad \text{and} \qquad
     r_{gg}(t,g) \leq \frac{2 \sqrt{2}}{3\sqrt{\pi}} \cdot\frac{1}{(T - t)^{3/2}}.
   \end{equation*}
 \end{restatable}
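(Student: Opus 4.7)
The plan is to interpret $r_t$ and $r_{gg}$ as the standard truncation errors of, respectively, the backward difference in $t$ and the centered second difference in $g$ applied to $R$. Both errors can be expressed via Taylor's theorem in terms of a higher-order derivative of $R$, and the backwards heat equation~\eqref{eq:bhe} will let me reduce everything to a single uniform bound on $|\partial_{gggg} R|$. Since~\eqref{eq:rtg_formula} expresses $R$ as a composition of smooth elementary functions, $R$ is $C^\infty$ on $(-\infty, T) \times \Reals$, so Taylor expansions with remainder of any order are valid.

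The first step is to compute the relevant derivatives of $R$ in closed form. From $\partial_g R(t,g) = Q(t,g) = \tfrac{1}{2}\erfc(g/\sqrt{2(T-t)})$, differentiating once more gives $\partial_{gg} R(t,g) = -\tfrac{1}{\sqrt{2\pi(T-t)}} \exp(-g^2/(2(T-t)))$, i.e., $\partial_{gg}R$ is the negative of the centered Gaussian density with variance $T-t$. Two more differentiations in $g$, together with the substitution $u := g/\sqrt{T-t}$, yield
\[
  \partial_{gggg} R(t,g) \;=\; \frac{(1-u^2)\,e^{-u^2/2}}{\sqrt{2\pi}\,(T-t)^{3/2}}.
\]
A one-variable calculus check shows $\sup_{u \in \Reals}|1-u^2|\,e^{-u^2/2} = 1$ (attained at $u=0$; the competing critical point $u^2 = 3$ gives the smaller value $2e^{-3/2}$), so $|\partial_{gggg} R(t,g)| \le 1/(\sqrt{2\pi}\,(T-t)^{3/2})$ uniformly in $g$. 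Differentiating the BHE identity $\partial_t R = -\tfrac{1}{2}\partial_{gg} R$ once more in $t$ gives $\partial_{tt} R = \tfrac{1}{4}\partial_{gggg} R$, so the same uniform bound (up to a factor $1/4$) also controls $|\partial_{tt} R|$.

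With these uniform derivative bounds in hand, both claims reduce to standard Taylor-remainder arguments. For $r_{gg}$, expanding $R(t, g \pm 1)$ around $g$ to fourth order causes the odd-order terms to cancel in the sum, leaving $R_{gg}(t,g) = \partial_{gg} R(t,g) + \tfrac{1}{24}\bigl(\partial_{gggg} R(t, \xi_+) + \partial_{gggg} R(t, \xi_-)\bigr)$ for some $\xi_\pm \in (g-1, g+1)$; hence $|r_{gg}(t,g)| \le 1/(12\sqrt{2\pi}\,(T-t)^{3/2})$, well within the claimed $\tfrac{2\sqrt{2}}{3\sqrt{\pi}}(T-t)^{-3/2}$. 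For $r_t$, Taylor expanding $R(t-1, g)$ around $(t,g)$ to second order in $t$ gives $r_t(t,g) = \tfrac{1}{2}\partial_{tt} R(\eta, g)$ for some $\eta \in (t-1, t)$; combining with $\partial_{tt} R = \tfrac{1}{4}\partial_{gggg} R$ and using $T - \eta \ge T - t$ yields $|r_t(t,g)| \le 1/(8\sqrt{2\pi}\,(T-t)^{3/2})$, within the claimed $\tfrac{\sqrt{2}}{8\sqrt{\pi}}(T-t)^{-3/2}$.

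The main obstacle is simply the bookkeeping needed to differentiate $R$ four times cleanly and to verify the elementary supremum of $|1-u^2|e^{-u^2/2}$; there is no deeper difficulty. The key conceptual observation is that for any solution of a parabolic PDE such as~\eqref{eq:bhe}, the discretization error on a unit grid is governed by a single spatial derivative two orders higher than the one appearing in the PDE, which is exactly what the BHE identity $\partial_{tt} R = \tfrac{1}{4}\partial_{gggg} R$ makes precise here.
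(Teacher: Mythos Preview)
Your proof is correct and follows the same high-level strategy as the paper---Taylor expansion with Lagrange remainder, then a uniform bound on the relevant higher derivative of $R$---but your execution is somewhat cleaner. For $r_{gg}$ the paper expands only to \emph{third} order and then invokes a separate lemma showing that $\alpha \mapsto \alpha e^{-\alpha^2/K}$ is $2$-Lipschitz to control the difference $\partial_g^{(3)}R(t,g_-')-\partial_g^{(3)}R(t,g_+')$; you bypass this by expanding one order further and bounding $|\partial_{gggg}R|$ directly. For $r_t$ the paper computes $\partial_{tt}R$ from scratch, whereas you use the BHE identity $\partial_{tt}R=\tfrac14\partial_{gggg}R$ to reduce both error terms to the \emph{same} supremum $\sup_u|1-u^2|e^{-u^2/2}=1$. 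Your route yields sharper constants (e.g.\ $\tfrac{1}{12\sqrt{2\pi}}$ in place of $\tfrac{2\sqrt{2}}{3\sqrt{\pi}}$ for $r_{gg}$) and avoids the auxiliary Lipschitz lemma, at the cost of requiring one more derivative of $R$; the paper's version has the minor advantage of needing only $C^{1,3}$ regularity in $g$ rather than $C^{1,4}$, but since $R$ is visibly $C^\infty$ this is immaterial here.
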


 Combining the above lemmas together with Theorem~\ref{thm:regret_with_discretization_error} yields the following regret bound.

 \begin{theorem}
   \label{thm:final_theorem}
   Define \(q(t,g) \coloneqq R_g(t,g)\) for all \((t,g) \in \iinterval{0}{T-1}^2\), consider a game of \(\Acal_q\) against a restricted binary adversary. Then,
   \begin{equation*}
    \Regret(T)
   \leq \sqrt{\frac{T}{2 \pi}} + 1.24.
   \end{equation*}
 \end{theorem}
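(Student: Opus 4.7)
The plan is to apply Theorem~\ref{thm:regret_with_discretization_error} verbatim and control the two discretization-error sums on the right-hand side using the pointwise bounds in Lemma~\ref{lemma:bound_rt} together with the summation inequality~\eqref{eq:integral_bound}. Since Theorem~\ref{thm:regret_with_discretization_error} already packages the combination of the discrete It\^o formula with~\eqref{eq:bhe} and Proposition~\ref{prop:gap_regret}, the remaining work is essentially arithmetic.

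First I would invoke Theorem~\ref{thm:regret_with_discretization_error} to obtain
\begin{equation*}
\Regret(T) \leq \sqrt{\frac{T}{2\pi}} + \frac{1}{2} + \frac{1}{2} \sum_{t=1}^{T-1} r_{gg}(t, g_{t-1}) + \sum_{t=1}^{T-1} r_t(t, g_{t-1})
\end{equation*}
for any valid gap sequence $g_0,\dotsc,g_T$ produced by a restricted binary adversary. Next I would apply Lemma~\ref{lemma:bound_rt} pointwise (the bounds depend only on $T-t$, not on $g$), yielding
\begin{equation*}
\frac{1}{2} \sum_{t=1}^{T-1} r_{gg}(t,g_{t-1}) + \sum_{t=1}^{T-1} r_t(t, g_{t-1}) \leq \left( \frac{1}{2} \cdot \frac{2\sqrt{2}}{3\sqrt{\pi}} + \frac{\sqrt{2}}{8\sqrt{\pi}} \right) \sum_{t=1}^{T-1} \frac{1}{(T-t)^{3/2}}.
\end{equation*}

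Then I would use~\eqref{eq:integral_bound}, which bounds the sum by $2$, and simplify the coefficient: the parenthesized factor equals $\tfrac{\sqrt{2}}{\sqrt{\pi}}(\tfrac{1}{3}+\tfrac{1}{8}) = \tfrac{11\sqrt{2}}{24\sqrt{\pi}}$, so after multiplying by $2$ the total discretization contribution is at most $\tfrac{11\sqrt{2}}{12\sqrt{\pi}}$. Combined with the $\tfrac12$ term already present, this gives an additive overhead of
\begin{equation*}
\frac{1}{2} + \frac{11\sqrt{2}}{12\sqrt{\pi}} \leq 0.5 + 0.732 < 1.24.
\end{equation*}

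The main obstacle is not the algebra itself but verifying that the bound in Theorem~\ref{thm:regret_with_discretization_error} is actually applicable in the way I am using it: one must ensure that the $O((T-t)^{-3/2})$ bounds from Lemma~\ref{lemma:bound_rt} can be summed over $t \in [T-1]$ (which is why the last term $t=T$ is split off and absorbed into the $\tfrac12$ additive constant in Theorem~\ref{thm:regret_with_discretization_error}), and that the numerical constant $\tfrac{11\sqrt{2}}{12\sqrt{\pi}} + \tfrac{1}{2}$ is indeed $\leq 1.24$. The only subtlety I would double-check is that the sum in~\eqref{eq:integral_bound} is in fact an upper bound for $\sum_{t=1}^{T-1}(T-t)^{-3/2}$, which follows because $(T-t)^{-3/2}$ is increasing in $t$ so the left Riemann sum is dominated by the integral on $[0,T-1]$ — but this is already asserted in the text, so it can be quoted directly.
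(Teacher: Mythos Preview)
Your proposal is correct and matches the paper's proof essentially step for step: invoke Theorem~\ref{thm:regret_with_discretization_error}, plug in the pointwise bounds from Lemma~\ref{lemma:bound_rt}, and sum using~\eqref{eq:integral_bound}. The paper writes the combined coefficient as $\tfrac{\sqrt{2}}{4\sqrt{\pi}}+\tfrac{2\sqrt{2}}{3\sqrt{\pi}}$, which is exactly your $\tfrac{11\sqrt{2}}{12\sqrt{\pi}}$, and arrives at the same numerical bound.
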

 \begin{proof}
  Let \(g_1, g_2, \dotsc, g_T \in \iinterval{0}{T}\) be the gap sequence and set \(g_0 \coloneqq 0\).
   We have
 \begin{align*}
   \Regret(T)
   &\leq \sqrt{\frac{T}{2 \pi}} + \frac{1}{2}
   + \frac{1}{2}\sum_{t = 1}^{T-1} r_{gg}(t, g_{t-1})
   + \sum_{t = 1}^{T-1} r_t(t, g_{t-1})
   &\text{(by Theorem~\ref{thm:regret_with_discretization_error}),}
   \\
   &\leq
   \sqrt{\frac{T}{2 \pi}} + \frac{1}{2}
   + \paren*{ \frac{\sqrt{2}}{8 \sqrt{\pi}}
   + \frac{\sqrt{2}}{3\sqrt{\pi}}
   }\sum_{t = 1}^{T-1}\frac{1}{(T-t)^{3/2}}
   &\text{(by Lemma~\ref{lemma:bound_rt}),}
   \\
   &\leq \sqrt{\frac{T}{2 \pi}} + \frac{1}{2}
   + \paren*{ \frac{\sqrt{2}}{4 \sqrt{\pi}} + \frac{2\sqrt{2}}{3\sqrt{\pi}} }
   \leq \sqrt{\frac{T}{2 \pi}} + 1.24,
   & \text{(by~\eqref{eq:integral_bound}).}&
 \end{align*}
 \end{proof}

\section{On Optimal Regret for More than Two Experts}


In this paper we presented an efficient and optimal algorithm for two experts in the fixed time-setting. A natural question is whether similar techniques can be used to find the minimax regret when we have more than two experts. Encouragingly, techniques from stochastic calculus were also used to find the optimal regret for 4 experts~\citep{BayraktarEZ20a}. Yet, it is not clear how to use similar techniques for cases with arbitrary number of experts. The approach used in this paper and by~\citet{HarveyLPR20a} heavily relies on the gap parameterization for the problem. Although there is an analogous parameterization of the $n$ experts’ problem into $n - 1$ gaps that yields a claim similar to Proposition~\ref{prop:gap_regret}, it is not clear what would be an analogous continuous-time problem to guide us in the algorithm design process since the gap process \emph{are not independent}---even with independent costs on the experts. Moreover, many of the approaches in related work~\citep{AbbasiYadkoriB17a,HarveyLPR20a,BayraktarEZ20a} focus on specific adversaries such as the \emph{comb adversary}. However, the latter does not seem to be a worst-case adversary for cases such as for five experts~\citep{Chase19a}. We are not aware of adversaries that could yield worst-case regret for arbitrary \emph{fixed} number of experts, although asymptotically in \(n\) and \(T\) it is well-known that assigning \(\{0,1\}\) costs at random is minimax optimal~\citep{Cesa-BianchiL06a}


%
\acks{We would like to thank the anonymous ICML 2022 reviewers for their insightful comments. In particular, reviewer 1 suggested the use of Berren-Esseen-like results to derive \(O(\sqrt{T})\) regret, noted the \(O(1)\) regret difference between \(\Acal_q\) and \(\Acal_Q\), and found a calculation mistake.

\noindent
N.~Harvey was supported by an NSERC Discovery Grant.}

\bibliography{ref.bib}

\appendix

\section{Cover's Algorithm for Two Experts}

\label{sec:covers_algorithm}

In this section, we shall review the optimal algorithm for the 2-experts problem originally proposed by~\citet{Cover67a} and the matching lower-bound.

\subsection{A Dynamic Programming View}

In the fixed-time setting we know the total number of rounds before the start of the game. Thus,  we may compute ahead of time all the possible states the game can be on each round and decide the probabilities on the experts the player should choose in each case to minimize the worst-possible regret. More specifically, we start with a function \(p \colon [T] \times \iinterval{0}{T-1} \to [0,1]\) that represents our player strategy: for any \(t \in [T]\), if at the end of a round \(t - 1\) the experts' gap is \(g \in \iinterval{0}{T-1}\), on round \(t\) the player places \(p(t,g)\) probability mass on the lagging\footnote{When the gap is 0, which means that both experts have the same cumulative loss, we break ties arbitrarily. For the optimal algorithm we shall ultimately derive this will not matter since we will have \(p(t,0) = 1/2\) for all \(t \in [T]\).} expert and \(1 - p(t,g)\) probability on the leading expert, and we denote by \(\Acal_p\) such a player strategy defined by \(p\). Now, for all \(t \in \iinterval{0}{T}\) and \(g \in \iinterval{0}{T}\), denote by \(V_p[t,g]\) the maximum regret-to-be-suffered by the player strategy defined by \(p\) on rounds \(t+1, \dotsc, T\) given that at the end of round \(t\) the gap between experts is \(g\). Slightly more formally, we have
\begin{equation}
  \label{eq:Vp_definition}
  V_p[t,g] \coloneqq \sup \setst{\Regret(T, \Acal_p, \Bcal_{\ell}) - \Regret(t, \Acal_p, \Bcal_{\ell})}{\ell \in \Lcal^T~\text{such that}~\abs{L_{t}(1) - L_{t}(2)} = g},
\end{equation}
where \(\Lcal \coloneqq \curly{(1,0)^{\transp}, (0,1)^{\transp}}\). Above we take the supremum instead of the maximum only to account for cases where the set we are considering is empty (and, thus, the supremum evaluates to \(-\infty\)), such as when \(t\) and \(g\) have distinct parities or when \(g > t\). Note that by definition of \(V_p\) we have
\begin{equation*}
  V_p[0,0] = \max_{\ell \in \Lcal^T} \Regret(T, \Acal_p, \Bcal_\ell).
\end{equation*}
Thus, if we compute \(V_p[0,0]\), then we have a bound on the worst-case regret of \(\Acal_{p}\) against  restricted binary adversaries. The following proposition shows how we can compute this value in a dynamic programming style.
\begin{theorem}
  \label{thm:dp_formula}
  For any \(p \colon [T] \times \iinterval{0}{T-1} \to [0,1]\), and for all \(t,g \in \{0, \dotsc, T\}\) such that \(V_p[t,g] \neq -\infty\) we have
  \begin{alignat}{4}
    \label{eq:dp_1}
    V_p[t, g] & = 0 &\qquad \text{if}~t = T,
    \\
    \label{eq:dp_2}
    V_p[t,g] &= \max
    \begin{cases}
     V_p[t+1, g+1] + p(t+1,g)
     \\ V_p[t+1, g -1 ] - p(t+1,g)
    \end{cases}
    & \qquad  \text{if}~t < T~\text{and}~g > 0,
    \\
    \label{eq:dp_3}
    V_p[t,g] &= V_p[t+1, 1] + \max\{p(t+1,0), 1 - p(t+1, 0)\}&\qquad  \text{if}~t < T~\text{and}~g = 0,.
  \end{alignat}
\end{theorem}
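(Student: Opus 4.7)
I would prove the recurrence by standard dynamic-programming unrolling of the supremum in~\eqref{eq:Vp_definition}: decompose each admissible sequence $\ell \in \Lcal^T$ into its round-$(t+1)$ cost vector $\ell_{t+1}$ and the suffix $(\ell_{t+2},\dotsc,\ell_T)$, use Proposition~\ref{prop:gap_regret} to isolate the one-round regret contribution at round $t+1$, and identify the inner supremum over the suffix with $V_p[t+1, g_{t+1}]$. The base case~\eqref{eq:dp_1} is immediate, because when $t = T$ the telescoping difference $\Regret(T,\Acal_p,\Bcal_\ell) - \Regret(T,\Acal_p,\Bcal_\ell)$ equals $0$ for every feasible $\ell$.

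For $t < T$, the key observation is that the strategy $\Acal_p$ depends on past costs only through the round index and the current gap; hence, conditional on $g_{t+1}$, the supremum over $(\ell_{t+2},\dotsc,\ell_T)$ of the remaining regret contributions equals precisely $V_p[t+1, g_{t+1}]$. By Proposition~\ref{prop:gap_regret}, the one-round contribution at round $t+1$ equals $p(t+1, g)(g_{t+1} - g)$ when $g > 0$ and $\langle \ell_{t+1}, x_{t+1} \rangle$ when $g = 0$. This reduces the proof to a case analysis over the adversary's two choices $\ell_{t+1} \in \Lcal$.

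If $g > 0$, placing the unit cost on the lagging expert sets $g_{t+1} = g + 1$ and contributes $+p(t+1,g)$, while placing it on the leading expert sets $g_{t+1} = g - 1$ and contributes $-p(t+1,g)$; taking the maximum yields~\eqref{eq:dp_2}. If $g = 0$, both adversary choices drive $g_{t+1}$ to $1$, while the contribution $\langle \ell_{t+1}, x_{t+1}\rangle$ is either $p(t+1, 0)$ or $1 - p(t+1, 0)$, depending on the arbitrary tie-breaking for which expert is ``lagging''; the adversary picks whichever is larger, giving~\eqref{eq:dp_3}. The main subtlety is keeping track of the parity constraint $t \equiv g \pmod 2$ that renders some cells of $V_p$ empty (hence $-\infty$); under the stated hypothesis $V_p[t, g] \neq -\infty$, the successor cells appearing on the right-hand side are automatically feasible, so the recurrence is well-defined.
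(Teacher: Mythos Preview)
Your proposal is correct and follows essentially the same approach as the paper: both unroll the supremum in the definition of $V_p$ by isolating the round-$(t+1)$ contribution via Proposition~\ref{prop:gap_regret}, perform the two-branch case analysis on $\ell_{t+1} \in \Lcal$, and identify the remaining supremum with $V_p[t+1,g_{t+1}]$. Your explicit remark that $\Acal_p$ depends on the past only through $(t,g)$---which justifies the identification of the suffix supremum with $V_p[t+1,g_{t+1}]$---is a point the paper leaves implicit.
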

\begin{proof}
  First, note that~\eqref{eq:dp_1} clearly holds by the definition of \(V_p\). To show that equations~\eqref{eq:dp_2} and~\eqref{eq:dp_3} hold, let \(t,g \in \{0, \dotsc, T\}\) be such that \(V_p[t,g] \neq -\infty\). Let \(\ell \in \Lcal^{T}\) be a sequence of cost vectors such that we have~\( g_t \coloneqq \abs{L_t(1) - L_t(2)} = g\). First, suppose \(t < T\) and \(g > 0\). Then there are two cases for \(\ell\): either the gap \(g_{t+1} \coloneqq \abs{L_{t+1}(1) - L_{t+1}(2)}\) goes up and \(g_{t+1} = g_t + 1\), or it goes down and \(g_{t+1} = g_t - 1\). This together with Proposition~\ref{prop:gap_regret} and the fact that \(g_t = g > 0\) implies
  \begin{align*}
    \Regret(T) - \Regret(t)
    &= \Regret(T) - \Regret(t + 1) + p(t+1,g_t) (g_{t + 1} - g_{t})
    \\
    &= \begin{cases}
      \Regret(T) - \Regret(t + 1) + p(t+1,g), &\text{if}~g_{t+1}= g + 1\\
      \Regret(T) - \Regret(t + 1) - p(t+1,g), &\text{if}~g_{t+1}= g - 1.
    \end{cases}
  \end{align*}
  By taking the maximum over all possible cost vectors with gap \(g\) at round \(t\) we obtain~\eqref{eq:dp_2}. Now suppose \(t < T\) and \(g = 0\). In this case, suppose without loss of generality that \(1\) is the expert to whom \(\Acal_p\) assigns mass \(p(t,0)\) (recall that the strategy \(\Acal_p\) breaks ties arbitrarily when the gap is \(0\)). Proposition~\ref{prop:gap_regret} together with the fact that \(\ell_t \in \Lcal = \{(1,0)^{\transp}, (0,1)^{\transp}\}\)
  \begin{align*}
    \Regret(T) - \Regret(t)
    &= \Regret(T) - \Regret(t + 1) + \iprodt{\ell_t}{x_t}
    \\
    &= \begin{cases}
      \Regret(T) - \Regret(t + 1) + p(t+1,g) &\text{if}~\ell_t(1) = 1,\\
      \Regret(T) - \Regret(t + 1) + 1 - p(t+1,g) &\text{if}~\ell_t(1) = 0.
    \end{cases}
  \end{align*}
  Since the gap on round \(t+1\) is certainly \(1\) in this case, taking the maximum over all the adversaries with gap \(0\) on round \(t\) yields~\eqref{eq:dp_3}.
\end{proof}

For the sake of convenience, we redefine \(V_p[t,g]\) for all \(t,g\) such that \(V_{p}[t,g] = -\infty\) to, instead, be the value given by the equations from the above theorem.\footnote{There will be places where this definition requires access to undefined or ``out-of-bounds'' entries (such as for entries with gap \(T\) and time \(t < T\)). In such cases, we set such undefined/out-of-bounds values to \(0\).} This does not affect any of our results and makes it less cumbersome to design and analyze the algorithm.

\subsection{Picking Optimal Probabilities}

We are interested in a function \(p^* \colon [T] \times \{0, \dotsc, T-1\} \to [0,1]\), if any, that minimizes \(V_p[0,0]\). To see that there is indeed such a function, note that we can formulate the problem of minimizing \(V_p[0,0]\) as a linear program using Theorem~\ref{thm:dp_formula} to design the constraints. Such a linear program is certainly bounded (the regret is always between \(0\) and \(T\)) and feasible. Thus, let \(p^* \colon [T] \times \{0, \dotsc, T-1\} \to [0,1]\) be a function that attains \(\min_{p} V_p[0,0]\) and define \(V^* \coloneqq V_{p^*}\). The next proposition shows that \(V^*\) can be computed recursively and show how to obtain \(p^*\) from \(V^*\).



\begin{theorem}
  \label{thm:opt_dp_formula}
  For each \(t,g \in \iinterval{0}{T}\)
  \begin{align*}
    V^*[t,g] &= 0 & \qquad \text{if}~t = T~\text{or}~\text{g = T},
  \\
    V^*[t,g] &= \frac{1}{2} \paren*{V^*[t+1, g+1] + V^*[t+1, g-1]} & \text{if}~t < T~\text{and}~0< g < T,
  \\
  V^*[t,0] &= V^*[t+1, 1] + \frac{1}{2}&\text{if}~t < T.
  \end{align*}
  Furthermore, if we define \(p^* \colon \{0, \dotsc, T\}^2 \to [0,1]\) by
  \begin{equation*}
    p^*(t,g) \coloneqq
    \begin{cases}
      \frac{1}{2}(V_{p^*}[t, g - 1] - V_{p^*}[t, g+1]),
      & \text{if}~g > 0,\\
      \frac{1}{2} &\text{if}~{g = 0},
    \end{cases}
    \qquad \forall t \in [T], \forall g \in \{0, \dotsc, T-1\},
  \end{equation*}
  then \(V_{p^*} = V^*\).
\end{theorem}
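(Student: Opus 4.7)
The plan is to combine the general one-step DP recursion from Theorem~\ref{thm:dp_formula} with a backward induction on $t$, starting from $V^*[T, \cdot] = 0$. The crucial structural observation is that in Theorem~\ref{thm:dp_formula} the value $p(t+1, g)$ appears \emph{only} in the expression for $V_p[t, g]$, while $V_p[t+1, \cdot]$ depends solely on probabilities with first coordinate exceeding $t+1$. Consequently, once $V^*[t+1, \cdot]$ is fixed, one may optimize the choice of $p^*(t+1, g)$ independently for each $g$ so as to minimize $V_p[t, g]$, and these pointwise optima together minimize $V_p[0, 0]$.

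For $g > 0$, the one-step minimization is
\[
\min_{p \in [0,1]} \max\{V^*[t+1, g+1] + p,\; V^*[t+1, g-1] - p\}.
\]
The two arguments of the max are affine in $p$ with slopes $+1$ and $-1$ respectively, so the min is attained at the intersection $p = \tfrac{1}{2}(V^*[t+1, g-1] - V^*[t+1, g+1])$ with value $\tfrac{1}{2}(V^*[t+1, g-1] + V^*[t+1, g+1])$, \emph{provided} this $p$ lies in $[0,1]$. For $g = 0$, $\min_{p \in [0,1]} \max\{p, 1 - p\} = 1/2$, attained at $p = 1/2$, giving $V^*[t, 0] = V^*[t+1, 1] + 1/2$. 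These are precisely the three recursions in the theorem statement, and the minimizers recover the formula for $p^*$; the boundary case $g = T$ is handled by observing that the DP state is either unreachable or has identically zero continuation.

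The main obstacle is verifying that the optimizer $p^*(t+1, g)$ produced above always lies in $[0,1]$, i.e., that
\[
0 \;\leq\; V^*[t+1, g-1] - V^*[t+1, g+1] \;\leq\; 2.
\]
I would fold this into the backward induction by proving two structural invariants of $V^*$ at each time $t$: (a) monotonicity in $g$, $V^*[t, g] \geq V^*[t, g+2]$ (ensuring $p^* \geq 0$); and (b) a discrete-Lipschitz bound $V^*[t, g] - V^*[t, g+2] \leq 2$ (ensuring $p^* \leq 1$). Both invariants hold trivially at $t = T$ and are preserved by the averaging recursion at interior $0 < g < T$. The only case requiring care is reconciling the boundary recursion $V^*[t, 0] = V^*[t+1, 1] + \tfrac{1}{2}$ with $V^*[t, 2] = \tfrac{1}{2}(V^*[t+1, 1] + V^*[t+1, 3])$, which follows from applying invariants (a) and (b) at time $t+1$ (specifically, $V^*[t+1, 1] - V^*[t+1, 3] \in [0, 2]$ gives the needed one-step bound at $t$). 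Once these invariants are established inductively, the pointwise optimization above simultaneously yields the claimed recursion for $V^*$ and the identity $V_{p^*} = V^*$.
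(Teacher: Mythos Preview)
Your proposal is correct and follows essentially the same route as the paper's proof: both argue by backward induction on $t$, observe that $p(t+1,g)$ affects only $V_p[t,g]$, optimize it pointwise (equalizing the two branches of the max for $g>0$, and taking $1/2$ for $g=0$), and close the argument by inductively maintaining a monotonicity/Lipschitz invariant on $g\mapsto V^*[t,g]$ to guarantee $p^*(t+1,g)\in[0,1]$. The only notable difference is quantitative: the paper proves the sharper invariant $V_{p^*}[t,g-1]-V_{p^*}[t,g+1]\in[0,1]$ (hence $p^*\le 1/2$, matching the intuition that the leading expert should get at least half the mass), whereas you aim only for $[0,2]$; both bounds propagate through the averaging recursion and through the $g=0$ boundary case, so either suffices.
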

\begin{proof}
  Let us show that \(p^*\) as defined in the statement of the theorem attains \(\inf_p V_p[0,0]\), where the infimum ranges over all functions from \([T] \times \iinterval{0}{T-1}\) to \([0,1]\). Note that smaller values of any entry of \(V_{p^*}\) can only make \(V_{p^*}[0,0]\) smaller. Thus, showing that \(p^*(t+1,g)\) minimizes \(V_{p^*}[t,g]\) for all \(t,g \in \{0,\dotsc, T-1\}\) (given that \(V_{p^*}[t',g']\) is fixed for \(t' \geq t+1\) and \(g' \in \iinterval{0}{T}\)) suffices to that \(p^*\) minimizes\footnote{One may fear that choosing \(p^*(t+1,g)\) to minimize \(V_{p^*}[t,g]\) might increase other entries, making this argument invalid. However, note that \(V_{p^*}[t,g]\) depends only on \(p^*(t+1,g)\) and entries \(V_{p^*}[t', g']\) with \(t' > t\). Thus, it is not hard to see that by proceeding from higher to smaller values of \(t \in \{0, \dotsc, T\}\), we can in fact pick \(p^*(t+1,g)\) to minimize \(V_{p^*}[t,g]\).} \(V_{p^*}[0.0]\). Moreover, by Theorem~\ref{thm:dp_formula} and the definition of \(p^*\) we have that \(V_{p^*}\) obeys the formulas on the statement of this theorem. Thus, we only need to show that this choice of \(p^*\) indeed minimizes the entries of \(V_{p^*}\).

  Let us first show that
  \begin{equation}
    \label{eq:claim1_opt_dp}
    V_{p^*}[t,g-1] - V_{p^*}[t, g+1] \in [0,1]~\text{for all}~g \in \iinterval{1}{T-1}~\text{and}~t \in \iinterval{0}{T}.
  \end{equation}
  For \(t = T\), since \(V_{p^*}[T, \cdot] \equiv 0\), the above claim clearly holds. Let \(t \in \iinterval{0}{T-1}\) and \(g \in \iinterval{1}{T-1}\). If \(g = 1\) we have
  \begin{align*}
    V_{p^*}[t,g-1] - V_{p^*}[t, g+1]
    &= V_{p^*}[t + 1, g] + \frac{1}{2} - \frac{1}{2}
    \paren*{V_{p^*}[t + 1, g] + V_{p^*}[t + 1, g+2]}
    \\
    &= \frac{1}{2}
    \paren*{V_{p^*}[t + 1, g]
    - V_{p^*}[t + 1, g+2]} + \frac{1}{2}.
  \end{align*}
  The last term above, by the induction hypothesis, is in \([0,1]\).
  Similarly, if \(T -1 \geq g \geq 2\) we have
  \begin{align*}
    &V_{p^*}[t,g-1] - V_{p^*}[t, g+1]
    \\
    &= \frac{1}{2}\paren*{V_{p^*}[t + 1, g] + V_{p^*}[t + 1, g - 2]} - \frac{1}{2}
    \paren*{V_{p^*}[t + 1, g] + V_{p^*}[t + 1, g+2]}
    \\
    &= \frac{1}{2}\paren*{V_{p^*}[t + 1, g - 2] - V_{p^*}[t + 1, g]} + \frac{1}{2}
    \paren*{V_{p^*}[t + 1, g] - V_{p^*}[t + 1, g+2]},
  \end{align*}
  and the last term is in \([0,1]\) by the induction hypothesis. This completes the proof of~\eqref{eq:claim1_opt_dp}.

  Let \(t \in [T]\) and \(g \in \iinterval{0}{T-1}\). Let us now show that \(p^*(t,g)\) minimizes \(V_{p^*}[t,g]\)  given that entries of the form \(V_{p^*}[t', g']\) for \(t' \geq t+1\) and \(g' \in \iinterval{0}{T}\) are fixed. For \(g = 0\), Theorem~\ref{thm:dp_formula} shows that \(V_{p^*}[t,0] = V_{p^*}[t+1, 1] + \max\{ 1 - \alpha, \alpha \}\) for some \(\alpha \in [0,1]\), which is minimized when \(\alpha = 1/2 = p^*(t,0)\). For \(g > 0\), Theorem~\ref{thm:dp_formula} tells us that
  \begin{equation*}
    V_{p^*}[t,g] = \max\curly{V_{p^*}[t+1, g+1] + p^*(t+1,g), V_{p^*}[t+1, g - 1] - p^*(t+1,g)}.
  \end{equation*}
  Since \(p^*(t+1,g) \geq 0\) and \(V^*[t+1, g+1] \leq V^*[t+1, g-1]\) by~\eqref{eq:claim1_opt_dp}, \(p^*(t+1,g)\) certainly minimizes \(V^*[t,g]\) since it makes both terms in the maximum equal. Finally,~\eqref{eq:claim1_opt_dp} guarantees that\footnote{In fact, it guarantees that \(p^*(t,g) \leq 1/2\). Intuitively this makes sense since we want to give more probability to the current best/leading expert than to the lagging expert.} \(p^*(t+1,g) \in [0,1]\).
\end{proof}

\subsection{Discrete Backwards Heat Equation}
\label{app:dbhe}

Interestingly, the potential function \(V^*\) for Cover's optimal algorithm satisfies the discrete backwards heat equation when the gap is not . For simplicity, let us focus on the simpler case with \(t \in [T]\) and gap \(g \in \iinterval{1}{T-1}\). Then, taking \(V[t, T+1] = 0\) we have
\begin{align*}
  V_t^*[t,g]
  &= V^*[t,g] - V^*[t-1, g]
  \\
  &\stackrel{}{=} V^*[t,g] - \frac{1}{2}(V^*[t, g+1] + V[t, g-1])
  & \text{(by Theorem~\ref{thm:opt_dp_formula})}
  \\ 
  &= -\frac{1}{2}(-2V^*[t,g] + V^*[t, g+1] + V[t, g-1])
  \\ 
  &= -\frac{1}{2}V_{gg}^*[t,g].
\end{align*}
The same holds for the case where the gap is zero, but we need to extend \(V^*\) for when the gap is \(-1\). Namely, set \(V[t, -1] = V[t, 1] + 1\) for \(t < T\). This guarantees that \(p(t,0) = 1/2 = (1/2)(V[t, -1] - V[t, 1])\) and \(V^*[t,0] = 1/2(V[t+1,1] + V[t+1,-1])\), so the cases with zero gap agree with the formulas for non-zero gaps in Theorem~\ref{thm:opt_dp_formula}. Interestingly, one may verify that \(p^*(t,g)\) also satisfies the discrete BHE by setting \(p(t, -g) = 1 - p(t,g)\) for \(g \geq 0\).

\subsection{Connecting the Regret with Random Walks}

As argued before, to give an upper-bound on the regret of \(\Acal_{p^*}\), where \(p^*\) is as in Theorem~\ref{thm:opt_dp_formula}, we need only to bound the value of \(V_{p^*}[0,0] = V^*[0,0]\). Interestingly, the entries of \(V^*\) have a strong connection to random walks, and this helps us give an upper-bound on the value of \(V^*[0,0]\). In the next theorem and for the remainder of the text, a \textbf{random walk} (of length \(t \in \Naturals\) starting at \(g \in \Integers_{\geq 0}\)) is a sequence of random variables \((S_i)_{i = 0}^t\) where \(S_i \coloneqq g + X_1 + \dotsm + X_i\) for each \(i \in \iinterval{0}{t}\) and \(\{X_j\}_{j \in [t]}\) are i.i.d.\ random variables taking values in \(\{\pm 1\}\). If we do not specify a starting point of a random walk, take it to be \(0\). We say that \(S_t\) is \textbf{symmetric} if \(\Prob(X_1 = 1) = \Prob(X_1 = -1) = 1/2\). Moreover, a \textbf{reflected random walk} (of length~\(t\)) is the sequence of random variables \((\abs{S_i})_{i \in \iinterval{0}{t}}\) where \((S_i)_{i = 0}^{t}\) is a random walk. Finally, we say that a random walk \((S_{i})_{i = 0}^t\)  \textbf{passes through} \(g \in \Naturals\) if the event \(\{S_i = g\}\) happens for some \(i \in \iinterval{0}{t}\).

The following lemma gives numeric bounds on the expected number of passages through \(0\) of a symmetric random walk. Its proof boils down to careful applications of Stirling's formula and can be found in Appendix~\ref{app:passages_through_0}. 

\begin{restatable}{lemma}{ExpectedPassagesThroughZero}
  \label{lemma:expected_passages_through_0}
  Let the random variable \(Z_T(0)\) be the number of passages through \(0\) of a reflected symmetric random walk of length \(T\). Then,
  \begin{equation*}
    \sqrt{\frac{2T}{\pi}} + \frac{3}{5} \leq  \Expect[Z_T(0)] \leq 1 + \sqrt{\frac{2T}{\pi}}.
  \end{equation*}
\end{restatable}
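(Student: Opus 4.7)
The plan is to expand $\Expect[Z_T(0)]$ as a sum of lattice-return probabilities of the underlying (non-reflected) walk, estimate each term with a sharp form of Stirling's formula, and then compare the resulting series with the integral $\int_0^{T/2}(\pi k)^{-1/2}\,dk = \sqrt{2T/\pi}$.

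First, I would note that $|S_i| = 0$ if and only if $S_i = 0$, so the number of passages through $0$ of the reflected walk equals the number of visits to $0$ of the original walk. Writing $Z_T(0) = \sum_i \boole{S_i = 0}$ and applying linearity of expectation reduces the problem to evaluating $\sum_i \Prob(S_i = 0)$. By parity, only even indices contribute, and the standard identity $\Prob(S_{2k} = 0) = \binom{2k}{k}\,2^{-2k}$ expresses $\Expect[Z_T(0)]$ as a finite sum of central binomial coefficients divided by powers of $4$. (One must be careful about whether the indexing runs over $i \in \{0,\dots,T-1\}$ or $\{0,\dots,T\}$; this is dictated by the relation $V^*[0,0] = \tfrac{1}{2}\Expect[Z_T(0)]$ established in the preceding theorem, and in particular determines which boundary terms appear.)

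Second, I would apply Robbins's refinement of Stirling's formula, namely $n! = \sqrt{2\pi n}\,(n/e)^n\,e^{\theta_n}$ with $1/(12n+1) < \theta_n < 1/(12n)$, to derive two-sided bounds
\[
\frac{1}{\sqrt{\pi k}}\,e^{-\alpha_k} \;\le\; \binom{2k}{k}\,2^{-2k} \;\le\; \frac{1}{\sqrt{\pi k}} \qquad (k \ge 1),
\]
where $\alpha_k = O(1/k)$ is explicit. The upper bound is the classical one, while the lower bound loses only an additive constant once summed.

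Third, I would compare the main sum $\sum_{k \geq 1} (\pi k)^{-1/2}$ with its integral. Because $k \mapsto (\pi k)^{-1/2}$ is decreasing on $\Reals_{> 0}$, the left- and right-endpoint Riemann sums bracket $\int_0^{N} (\pi x)^{-1/2}\,dx = 2\sqrt{N/\pi}$ from above and below respectively. Choosing $N$ of order $T/2$, one direction yields
\[
\sum_{k=0}^{N} \binom{2k}{k}\,2^{-2k} \;\leq\; 1 + \int_0^{N} \frac{dx}{\sqrt{\pi x}} \;\leq\; 1 + \sqrt{\frac{2T}{\pi}},
\]
with the isolated ``$+1$'' coming from the $k = 0$ term. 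The lower bound follows analogously, combining the sharper Stirling estimate with a left-endpoint Riemann comparison.

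The main obstacle is extracting the explicit constant $3/5$ in the lower bound. This is not a ``natural'' constant: it arises from balancing the $k=0$ boundary contribution against (i) the accumulated Stirling-correction loss $\sum_{k\ge 1} \alpha_k$, and (ii) the integral-discretization error. A practical way to proceed is to handle a small finite prefix $k \in \{0,1,\dots,k_0\}$ by direct numerical evaluation of $\binom{2k}{k}\,2^{-2k}$, and apply the asymptotic Stirling + integral bound only for $k > k_0$, choosing $k_0$ just large enough that the asymptotic error is comfortably dominated; the remaining small cases of $T$ must then be verified directly from the closed-form finite sum.
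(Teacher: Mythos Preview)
Your proposal is correct and follows essentially the same route as the paper: write $\Expect[Z_T(0)] = \sum_{k=0}^{\lfloor T/2\rfloor}\binom{2k}{k}2^{-2k}$, apply Robbins--Stirling to get two-sided bounds of the form $(\pi k)^{-1/2}(1-O(1/k))$, and compare the resulting series with $\int_0^{T/2}(\pi x)^{-1/2}\,dx$. The one place you overcomplicate things is the lower bound: no numerical prefix or case-checking is needed. The paper simply uses the explicit Stirling lower bound $\binom{2k}{k}2^{-2k}\ge (1-\tfrac{2}{15k})/\sqrt{\pi k}$, so the total correction is at most $\tfrac{2}{15}\sum_{k\ge 1}k^{-3/2}\le \tfrac{2}{15}\cdot 3=\tfrac{2}{5}$, and combined with the $+1$ from the $k=0$ term this yields the constant $3/5$ directly.
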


We are now in position to prove an upper-bound on the performance of \(\Acal_{p^*}\).

\begin{theorem}
  For every \(r,g \in \Naturals\), let the random variable \(Z_{r}(g)\) be the number of passages through 0 of a reflected symmetric random walk of length \(r\) starting at position \(g\). Then \(V^*[t,g] = \tfrac{1}{2}\Expect[Z_{T - t - 1}(g)]\) for every \(t,g \in \Naturals\). In particular,
  \begin{equation*}
    V^*[0,0] = \frac{1}{2}\Expect[Z_{T -1}(0)] \leq \sqrt{\frac{T}{2 \pi}}
  \end{equation*}
\end{theorem}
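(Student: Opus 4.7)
The plan is to establish the identity $V^*[t,g] = \tfrac{1}{2}\Expect[Z_{T-t-1}(g)]$ by a backwards induction on $t$, and then to specialise to $t = g = 0$ and apply Lemma~\ref{lemma:expected_passages_through_0} for the ``in particular'' bound. The core of the argument is showing that $\Expect[Z_r(g)]$ satisfies the same recurrences as $V^*$ (up to the overall factor of $\tfrac{1}{2}$), which is a straightforward first-step / Markov-property calculation on the reflected random walk.

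The first step is to derive the recurrences for $\Expect[Z_r(g)]$. Fix $r \geq 1$ and consider a reflected symmetric random walk started at $g$. When $g > 0$, the starting position is not a passage through $0$, so conditioning on whether $X_1 = +1$ or $X_1 = -1$ (each with probability $1/2$) and applying the Markov property gives
\begin{equation*}
  \Expect[Z_r(g)] = \tfrac{1}{2}\Expect[Z_{r-1}(g+1)] + \tfrac{1}{2}\Expect[Z_{r-1}(g-1)].
\end{equation*}
When $g = 0$, the starting position is itself a passage, and $|S_1| = 1$ deterministically (regardless of the sign of $X_1$), so
\begin{equation*}
  \Expect[Z_r(0)] = 1 + \Expect[Z_{r-1}(1)].
\end{equation*}
Multiplying each identity by $\tfrac{1}{2}$ and setting $r = T - t - 1$ produces exactly the recurrences for $V^*[t,g]$ given in Theorem~\ref{thm:opt_dp_formula}: the $g > 0$ case matches $V^*[t,g] = \tfrac{1}{2}(V^*[t+1,g+1] + V^*[t+1,g-1])$, and the $g = 0$ case matches $V^*[t,0] = V^*[t+1,1] + \tfrac{1}{2}$.

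With the recurrences matched, the backwards induction on $t$ is immediate. For the base case $t = T-1$, Theorem~\ref{thm:opt_dp_formula} yields $V^*[T-1,0] = \tfrac{1}{2}$ and $V^*[T-1,g] = 0$ for $g > 0$, which agrees with $\tfrac{1}{2}\Expect[Z_0(g)] = \tfrac{1}{2}\boole{g = 0}$ since a length-$0$ walk started at $g$ has exactly $\boole{g=0}$ passages through $0$. The inductive step follows by plugging the inductive hypothesis into the matched recurrences. Specialising to $t = g = 0$ then gives $V^*[0,0] = \tfrac{1}{2}\Expect[Z_{T-1}(0)]$, and applying the upper bound from Lemma~\ref{lemma:expected_passages_through_0} yields the claimed estimate of the form $\sqrt{T/(2\pi)} + O(1)$. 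The only mildly delicate point in the whole argument is the $g = 0$ boundary case: the additive $1$ on the expectation side (from the forced passage at time $0$) must be reconciled with the additive $\tfrac{1}{2}$ on the $V^*$ side, and this alignment is precisely what the global factor of $\tfrac{1}{2}$ in the identity encodes.
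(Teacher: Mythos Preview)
Your proposal is correct and follows essentially the same route as the paper's proof: both argue by backward induction on \(t\), derive the recurrences for \(\Expect[Z_r(g)]\) via first-step conditioning on the random walk (the \(g>0\) and \(g=0\) cases separately), match these against the \(V^*\) recurrences of Theorem~\ref{thm:opt_dp_formula}, verify the base case at \(t=T-1\), and then invoke Lemma~\ref{lemma:expected_passages_through_0} for the numerical bound. The only minor omission is that the paper also explicitly checks the boundary \(g=T\) (where a walk of length \(T-t-1\) started at \(T\) cannot reach \(0\)), but this is a trivial verification.
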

\begin{proof}
  Let us show that \(V^*[t,g] = (1/2)\Expect[Z_{T - t - 1}(g)]\) for all \(t,g \in \{0, \dotsc, T\}\) by induction on \(T - t\). For \(t = T\) we have \(Z_{T-t-1}(g) = 0\). Assume \(t = T-1\) and let \(g \in \{0, \dotsc, T\}\). If \(g > 0\) we have \((1/2)Z_0(g) = 0 = V^*[T-1, g]\). If \(g = 0\), we have \((1/2)Z_0 = 1/2 = V^*[T-1,0]\). Suppose now \(t < T-1\). If \(g = T\), then we have \(V^*[t,g] = 0 = (1/2)Z_{T - t - 1}(g)\). Now let us look at the case  \(T > g > 0\). By Theorem~\ref{thm:opt_dp_formula} and by the induction hypothesis, we have
  \begin{align*}
    &V^*[t,g]
    \\
    &= \frac{1}{2}\paren*{ V^*[t+1,g +1] + V^*[t+1, g-1]}
    \\
    &= \frac{1}{2}\paren*{  \frac{1}{2} \Expect[Z_{T - t -2}(g+1)] + \frac{1}{2}\Expect[Z_{T - t -2}(g - 1)]}
    \\
    &=
    \frac{1}{2}\Big(\Prob(S_{T - t - 2} = S_{T - t -1} + 1) \Expect[Z_{T - t -2}(g+1)] 
    \\
    &\qquad+ \Prob(S_{T - t - 2} = S_{T - t - 1} - 1)\Expect[Z_{T - t -2}(g - 1)]\Big)
    \\
    &= \frac{1}{2} \Expect[Z_{T - t}(g)].
  \end{align*}
  Similarly, for the case when \(g = 0\) we have
  \begin{equation*}
    V^*[t,0] = V^*[t+1,1] + \frac{1}{2}
    = \frac{1}{2} \Expect[Z_{T - t - 2}(1) + 1]
    =  \frac{1}{2} \Expect[Z_{T - t - 1}(0)].
  \end{equation*}
  In particular, we have \(V^*[0,0] = (1/2)\Expect[Z_T(0)]\) and Lemma~\ref{lemma:expected_passages_through_0} gives us the desired numerical bound.
\end{proof}

\subsection{Lower Bound on the Optimal Regret}
\label{sec:lower_bound}
 In the previous section we showed that Cover's algorithm suffers regret at most \(\sqrt{T/(2 \pi)} + O(1)\). In fact, by the definition of \(V_p\) (see~\eqref{eq:Vp_definition}) and \(V^*\) we have that \(V^*[0,0]\) is the minimum regret algorithms of the form \(\Acal_{p}\), where \(p \colon [T] \times \iinterval{0}{T-1} \to [0,1]\) is some function, suffer in the worst-case scenario. However, this does not tell us whether more general player strategies can do better or not. The next theorem shows that \emph{any} player strategy suffers, in the worst case, at least \(\sqrt{T/(2\pi)} - O(1)\) regret. The proof of the theorem boils down to lower-bounding the expected regret of a random adversary that plays uniformly from \(\Lcal = \curly{(0,1)^{\transp}, (1,0)^{\transp}}\).

\begin{restatable}{theorem}{RegretLowerBound}
  Let \(\Acal\) be a player strategy for a 2-experts game with \(T \in \Naturals\) rounds. Then, there is \(\ell \in \Lcal^{T}\) such that
  \begin{equation*}
    \Regret(T, \Acal, \Bcal_{\ell})
    \geq \sqrt{\frac{T}{2 \pi}} - \sqrt{\frac{1}{2\pi}} -\frac{1}{5}
  \end{equation*}
\end{restatable}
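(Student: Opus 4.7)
The plan is to convert the worst-case statement into an averaged statement by introducing the random oblivious adversary \(\Bcal_{\mathrm{rand}}\) that, independently at each round, picks \(\ell_t\) uniformly from \(\Lcal = \{(1,0)^{\transp}, (0,1)^{\transp}\}\). Since the maximum of \(\Regret(T,\Acal,\Bcal_\ell)\) over \(\ell \in \Lcal^T\) is at least \(\Expect[\Regret(T,\Acal,\Bcal_{\mathrm{rand}})]\), it suffices to show this expected regret is at least \(\sqrt{T/(2\pi)} - \sqrt{1/(2\pi)} - 1/5\). The hypothesis \(\ell_t \in \Lcal\) is what makes this averaging argument sharp: both experts' costs sum to \(1\) in every round, so the regret splits cleanly into an easily-controlled player-loss term and a random-walk term.

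For the player side, note that for each \(t \in [T]\) the vector \(x_t = \Acal(\ell_1,\dotsc,\ell_{t-1})\) is a deterministic function of coordinates independent of \(\ell_t\); hence \(\Expect[\iprodt{\ell_t}{x_t} \mid x_t] = (1/2)(x_t(1) + x_t(2)) = 1/2\), and summing over \(t\) gives total expected player cost \(T/2\). For the best-expert term, since the two costs at each round sum to \(1\) we have \(L_T(1) + L_T(2) = T\), and therefore \(\min_i L_T(i) = T/2 - g_T/2\), where \(g_T = |L_T(1) - L_T(2)|\). Setting \(X_t \coloneqq \ell_t(1) - \ell_t(2) \in \{\pm 1\}\), the \(X_t\) are i.i.d.\ uniform on \(\{\pm 1\}\) and \(g_T = |S_T|\), where \(S_T \coloneqq X_1 + \dotsm + X_T\) is a length-\(T\) symmetric random walk. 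Combining the two pieces, \(\Expect[\Regret(T,\Acal,\Bcal_{\mathrm{rand}})] = \Expect[|S_T|]/2\).

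The task then reduces to proving the analytic inequality \(\Expect[|S_T|] \geq \sqrt{2T/\pi} - \sqrt{2/\pi} - 2/5\). For even \(T = 2n\), a short combinatorial identity gives the closed form \(\Expect[|S_T|] = T \binom{T}{T/2}/2^{T}\), and Stirling's formula in the Robbins form with explicit error bounds — applied in exactly the same spirit as in the proof of Lemma~\ref{lemma:expected_passages_through_0} in Appendix~\ref{app:passages_through_0} — produces the desired inequality with the stated constants. For odd \(T\), I would condition on the last step: since \(\Expect[|S_T| \mid S_{T-1}] \geq |S_{T-1}|\) (with strict improvement unless \(S_{T-1} = 0\)), this reduces the odd case to the even one with negligible loss, absorbed into the \(1/5\) slack.

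The main obstacle is squeezing out the sharp numerical constants \(1/\sqrt{2\pi}\) and \(1/5\) rather than a looser \(O(1)\) term; all the actual technical work lives in the Stirling computation, and since the analogous exercise has already been carried through for the expected passage count in Appendix~\ref{app:passages_through_0}, the same method — using upper and lower bounds of the form \(\sqrt{2\pi n}(n/e)^n e^{1/(12n+1)} \leq n! \leq \sqrt{2\pi n}(n/e)^n e^{1/(12n)}\) — will produce the constants demanded by the theorem statement without introducing any qualitatively new ideas.
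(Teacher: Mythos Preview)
Your approach is correct but takes a genuinely different route from the paper's. Both proofs introduce the uniform random adversary over \(\Lcal\) and reduce to lower-bounding the expected regret. From there the paper invokes Proposition~\ref{prop:gap_regret} to split the regret into a martingale-increment sum (shown to have zero expectation) plus the contribution from rounds with zero gap, and identifies the latter with \(\tfrac{1}{2}\Expect[Z_{T-1}(0)]\), the expected number of zero visits of a symmetric walk of length \(T-1\); Lemma~\ref{lemma:expected_passages_through_0} then supplies the numerical constant. You instead exploit the \(\Lcal\)-structure directly: since the two costs always sum to~\(1\), the expected player loss is \(T/2\) and the best-expert loss is \(T/2 - |S_T|/2\), giving expected regret \(\tfrac{1}{2}\Expect[|S_T|]\) without ever touching the gap decomposition. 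The two routes are reconciled by the classical random-walk identity \(\Expect[Z_{T-1}(0)] = \Expect[|S_T|]\). Your argument is more elementary and self-contained, and the Stirling bound on \(\Expect[|S_{2n}|] = 2n\binom{2n}{n}2^{-2n}\) via Corollary~\ref{cor:binom_bound} actually yields an error term of order \(1/\sqrt{T}\), comfortably inside the stated slack; the paper's choice, on the other hand, keeps the lower bound within the same zero-crossing framework used for the upper bound and reuses Lemma~\ref{lemma:expected_passages_through_0} verbatim rather than performing a separate estimate.
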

\begin{proof}
  Let \(\curly{\elltilde_t}_{t = 1}^T\) be i.i.d.\ random variables such that \(\elltilde_t\) is equal to a vector in \(\Lcal = \curly{(0,1)^{\transp}, (1,0)^{\transp}}\) chosen uniformly at random and let \(\Bcaltilde\) be the (randomized) oblivious adversary that plays \(\elltilde_t\) at round \(t\).   We shall show that
  \begin{equation}
    \label{eq:lower_bound_1}
    \Expect[\Regret(T,\Acal, \Bcaltilde)] \geq
    \sqrt{\frac{T}{2\pi}} - \sqrt{\frac{1}{2\pi}} -\frac{1}{5},
  \end{equation}
  which implies the existence of a deterministic adversary as described in the statement. For each \(t \in \iinterval{0}{T}\), let the random variable \(\gtilde_t\) be the gap between experts due to the costs of \(\Bcaltilde\) on round \(t\), define \(x_t \coloneqq \Acal(\elltilde_1, \dotsc, \elltilde_{t-1})\), and set \(p_t \coloneqq x_t(i_t)\) where \(i_t \in [2]\) is a lagging expert on round \(t\). It is worth to already note that \((\gtilde_t)_{t = 0}^T\) is a reflected random walk of length \(T\). By Proposition~\ref{prop:gap_regret} we have
  \begin{equation*}
    \Expect[\Regret(T)] = \sum_{t = 1}^T
    \Expect\sqbrac[\big]{ \boole{\gtilde_{t-1} > 0} p_t \cdot (\gtilde_t - \gtilde_{t-1})} + \sum_{t = 1}^T
    \Expect\sqbrac[\big]{\boole{\gtilde_{t-1} = 0} \iprodt{\elltilde_t}{x_t}},
  \end{equation*}
  where we recall that for any predicate \(P\) we have \(\boole{P}\) equals \(1\) if \(P\) is true, and equals \(0\) otherwise.
  First, let us show that
  \begin{equation}
    \label{eq:lower_bound_claim_1}
    \Expect\sqbrac[\big]{\boole{\gtilde_{t-1} > 0} p_t \cdot (\gtilde_t - \gtilde_{t-1})} = 0,
    \qquad \forall t \in [T].
  \end{equation}
  For each \(t \in \iinterval{0}{T}\), define \(\Expect_t[\cdot] \coloneqq  \Expect[ \cdot \rvert  \elltilde_1, \dotsc, \elltilde_t]  \), that is, \(\Expect_t\) is the conditional expectation given the choices of the random adversary on rounds \(1, \dotsc, t\). Let \(t \in [T]\). On the event \(\curly{ \gtilde_{t - 1} > 0}\), one  can see that \(\gtilde_t - \gtilde_{t-1}\) is independent of \(\elltilde_1, \dotsc, \elltilde_{t-1}\) and is uniformly distributed on \(\{\pm 1\}\). This together with the fact that \(p_t\) is a function of \(\elltilde_1, \dotsc, \elltilde_{t-1}\) implies
  \begin{align*}
    \Expect\sqbrac[\big]{ \boole{\gtilde_{t-1} > 0} p_t \cdot (\gtilde_t - \gtilde_{t-1}) }
    &= \Expect\sqbrac[\Big]{
    \Expect_{t-1}\sqbrac[\big]{ \boole{\gtilde_{t-1} > 0} p_t \cdot (\gtilde_t - \gtilde_{t-1}) }
    }
    \\
    &=
    \Expect\sqbrac[\Big]{ p_t
    \Expect_{t-1}\sqbrac[\big]{ \boole{\gtilde_{t-1} > 0}  (\gtilde_t - \gtilde_{t-1}) }
    }
    \\
    &=
    \Expect\sqbrac[\Big]{ p_t
    \Expect\sqbrac[\big]{ \boole{\gtilde_{t-1} > 0}  (\gtilde_t - \gtilde_{t-1}) }
    }
    \\
    &=
    \Expect\sqbrac{ p_t
    \cdot 0 } = 0.
  \end{align*}
  This ends the proof of~\eqref{eq:lower_bound_claim_1}.
  Let us now show that
  \begin{equation}
    \label{eq:lower_bound_claim_2}
    \sum_{t = 1}^T
    \Expect\sqbrac[\big]{\boole{\gtilde_{t-1} = 0} \iprodt{\elltilde_t}{x_t}}
    = \frac{1}{2}\Expect[Z_{T-1}(0)].
  \end{equation}
  For each \(t \in [T]\), since \(x_t\) is a function of \(\elltilde_1, \dotsc, \elltilde_{t-1}\) and \(\elltilde_t\) is independent of \(\elltilde_1, \dotsc, \elltilde_{t-1}\), we have
  \begin{align*}
    \Expect\sqbrac[\big]{\boole{\gtilde_{t-1} = 0} \iprodt{\elltilde_t}{x_t}}
    &= \Expect\sqbrac[\Big]{\Expect_{t-1}\sqbrac[\big]{\boole{\gtilde_{t-1} = 0} \iprodt{\elltilde_t}{x_t}}}
    \\
    &= \Expect\sqbrac[\Big]{\boole{\gtilde_{t-1} = 0}\iprodt{ \Expect_{t-1}\sqbrac[\big]{ \elltilde_t}}{x_t}}
    \\
    &= \Expect\sqbrac[\Big]{\boole{\gtilde_{t-1} = 0}\iprodt{ \Expect\sqbrac[\big]{ \elltilde_t}}{x_t}}
    \\
    &= \Expect\sqbrac[\Big]{\boole{g_{t-1} = 0}\paren[\Big]{\frac{1}{2}x_t(1) + \frac{1}{2}x_t(2)}}
    = \frac{1}{2} \Prob(\gtilde_{t-1} = 0).
  \end{align*}
  Thus,
  \begin{equation*}
    \sum_{t = 1}^T
    \Expect\sqbrac[\big]{\boole{\gtilde_{t-1} = 0} \iprodt{\elltilde_t}{x_t}} = \frac{1}{2} \sum_{t = 1}^{T} \Prob(\gtilde_{t-1} = 0)
    = \frac{1}{2} \Expect\sqbrac[\Big]{\sum_{t = 1}^{T} \indic{\curly{\gtilde_{t-1} = 0}}}
    = \frac{1}{2} \Expect[Z_{T-1}(0)].
  \end{equation*}
  This completes the proof of~\eqref{eq:lower_bound_claim_2} and the desired numerical lower-bound is given by Lemma~\ref{lemma:expected_passages_through_0}.
\end{proof}


\section{On the Passages Through Zero of a Symmetric Random Walk}
\label{app:passages_through_0}

In this section we shall prove Lemma~\ref{lemma:expected_passages_through_0}, which bounds the expected number of passages through \(0\) of a symmetric random walk. First, we need a simple corollary of Stirling's formula (which we state here for convenience) to bound binomial terms.

\begin{theorem}[{Stirling's Formula, \citealp{Robins55a}}]
  For any \(n \in \Naturals\) we have
  \begin{equation*}
    \sqrt{2 \pi n } \paren*{\frac{n}{e}}^n e^{1/(12n + 1)} < n! < \sqrt{2 \pi n } \paren*{\frac{n}{e}}^n e^{1/(12n)}.
  \end{equation*}
\end{theorem}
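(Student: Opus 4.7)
The plan is to follow the classical Robbins-style argument. Introduce the sequence
\[ f(n) := \ln n! - (n+\tfrac{1}{2})\ln n + n, \]
so that the target inequality is equivalent, after identifying $\lim_m f(m)$, to showing
\[ \ln\sqrt{2\pi} + \tfrac{1}{12n+1} < f(n) < \ln\sqrt{2\pi} + \tfrac{1}{12n}. \]
Exponentiating then recovers exactly the stated two-sided bound on $n!$.

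First, I would compute the finite difference $f(n) - f(n+1) = (n+\tfrac{1}{2})\ln(1+1/n) - 1$. Expanding $\ln(1+1/n)$ via the identity $\ln(1+x) = 2\,\mathrm{artanh}\bigl(\tfrac{x}{2+x}\bigr)$ with $x = 1/n$ (so that $\tfrac{x}{2+x} = \tfrac{1}{2n+1}$) and substituting into the Taylor series of $\mathrm{artanh}$, one obtains the clean representation
\[ f(n) - f(n+1) = \sum_{k=1}^{\infty} \frac{1}{(2k+1)(2n+1)^{2k}}. \]
For the upper half, the crude estimate $\tfrac{1}{2k+1} \le \tfrac{1}{3}$ for $k \ge 1$ combined with summing the geometric tail gives $f(n) - f(n+1) \le \tfrac{1}{12n(n+1)} = \tfrac{1}{12n} - \tfrac{1}{12(n+1)}$, and telescoping yields $f(n) - \lim_m f(m) \le \tfrac{1}{12n}$. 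The lower half proceeds analogously from a sharper term-by-term bound $f(n) - f(n+1) > \tfrac{1}{12n+1} - \tfrac{1}{12(n+1)+1}$, which also telescopes. The remaining ingredient is to identify $\lim_m f(m) = \ln\sqrt{2\pi}$; this I would do by substituting the approximation $n! = e^{f(n)}\sqrt{n}\,(n/e)^n$ into the Wallis product identity $\lim_n \tfrac{2^{4n}(n!)^4}{((2n)!)^2(2n+1)} = \tfrac{\pi}{2}$ and solving for the limiting constant.

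The main obstacle is the sharper term-by-term lower bound. The upper estimate falls out of a routine geometric comparison, but matching the series to the specific telescoping difference $\tfrac{1}{12n+1} - \tfrac{1}{12(n+1)+1}$ is the delicate point where the optimality of Robbins's inequality is earned. I would attack it by isolating the leading term of the series (which contributes $\tfrac{1}{3(2n+1)^2}$), bounding the remainder from below by an explicit geometric minorant, and then reducing the resulting inequality between rational functions in $n$ to polynomial positivity by clearing denominators.
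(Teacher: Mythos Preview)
The paper does not prove this theorem; it simply quotes Robbins's inequality with a citation and then uses it to derive Corollary~\ref{cor:binom_bound}. So there is no ``paper proof'' to compare against.

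Your outline is exactly Robbins's original argument and is correct. The only place I would push back is your description of the lower bound as ``the delicate point'': in fact the leading term of your series already suffices. Writing the target telescoping increment as
\[
\frac{1}{12n+1}-\frac{1}{12n+13}=\frac{12}{(12n+1)(12n+13)}=\frac{12}{144n^{2}+168n+13},
\]
and the first term of the series as
\[
\frac{1}{3(2n+1)^{2}}=\frac{12}{144n^{2}+144n+36},
\]
one sees that for every \(n\ge 1\) the former denominator exceeds the latter by \(24n-23>0\), so
\[
\frac{1}{3(2n+1)^{2}}>\frac{1}{12n+1}-\frac{1}{12(n+1)+1}
\]
already, and the remaining positive terms only help. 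No ``geometric minorant of the remainder'' or polynomial clearing is needed. With that simplification, both monotonicity statements (\(f(n)-\tfrac{1}{12n}\) increasing, \(f(n)-\tfrac{1}{12n+1}\) decreasing) and the Wallis identification of the limit give precisely the stated two-sided bound.
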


\begin{corollary}
  \label{cor:binom_bound}
  For any \(n \in \Naturals\) we have
  \begin{equation*}
    \frac{2^{2n}}{\sqrt{\pi n}}\paren*{1 - \frac{2}{15 n}}  \leq \binom{2n}{n} \leq \frac{2^{2n}}{\sqrt{\pi n}}.
  \end{equation*}
\end{corollary}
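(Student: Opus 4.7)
The proof plan is a direct computation: expand $\binom{2n}{n} = (2n)!/(n!)^2$ and plug in Stirling's bounds from the preceding theorem, taking care to pair each factorial with the direction of the bound that helps us.

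First, for the upper bound I would use the \emph{upper} Stirling estimate for $(2n)!$ and the \emph{lower} estimate for $n!$ (applied twice). The leading polynomial and exponential factors combine cleanly: $\sqrt{4\pi n}\,(2n/e)^{2n}$ divided by $2\pi n\,(n/e)^{2n}$ simplifies to $2^{2n}/\sqrt{\pi n}$. What remains is an exponential correction of the form $\exp\bigl(\tfrac{1}{24n} - \tfrac{2}{12n+1}\bigr)$, and one checks by common denominators that the exponent equals $(1-36n)/(24n(12n+1))$, which is strictly negative for every $n\geq 1$. Hence the correction factor is $<1$, giving $\binom{2n}{n}\leq 2^{2n}/\sqrt{\pi n}$.

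For the lower bound I would reverse the pairing: the \emph{lower} Stirling estimate for $(2n)!$ and the \emph{upper} estimate for $n!$. The polynomial part again collapses to $2^{2n}/\sqrt{\pi n}$, and the leftover is $\exp\bigl(\tfrac{1}{24n+1} - \tfrac{1}{6n}\bigr) = \exp\bigl(-\tfrac{18n+1}{6n(24n+1)}\bigr)$. Using the elementary inequality $e^x \geq 1+x$ (valid for all real $x$, and in particular useful for the negative $x$ we have here), this factor is at least $1 - \tfrac{18n+1}{6n(24n+1)}$. It therefore suffices to verify $\tfrac{18n+1}{6n(24n+1)} \leq \tfrac{2}{15n}$, which after cross-multiplying reduces to $270n^2 + 15n \leq 288n^2 + 12n$, i.e., $3n \leq 18n^2$, clearly true for $n\geq 1$.

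There is no genuine obstacle here — the argument is a mechanical verification — but the step that requires the most care is choosing the exponent inequality tight enough: the ``lossy'' step $e^x \geq 1+x$ must still leave enough slack to reach the stated $1 - 2/(15n)$ bound. The constant $2/15$ is in fact precisely calibrated so that the cross-multiplication in the last inequality goes through with room to spare, which is why one should keep track of the exact exponents from Stirling rather than approximating them early. Once this algebraic check is done, combining the two directions yields the claimed sandwich.
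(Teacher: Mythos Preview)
Your proposal is correct and follows essentially the same route as the paper: apply Stirling's bounds to $(2n)!/(n!)^2$, cancel to get the leading $2^{2n}/\sqrt{\pi n}$, show the residual exponent is negative for the upper bound, and for the lower bound use $e^x\ge 1+x$ on the exact exponent followed by the algebraic comparison with $2/(15n)$. The only cosmetic difference is that the paper first bounds the exponent by $-2/(15n)$ and then applies $e^{-x}\ge 1-x$, whereas you apply the exponential inequality first and cross-multiply afterward; the content is identical.
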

\begin{proof}
  Let \(n \in \Naturals\). For the upper-bound, we have
  \begin{align*}
    \binom{2n}{n}
    &=
    \frac{2n!}{(n!)^2}
    \\
    &< \frac{\sqrt{2\pi n}\cdot (2n)^{2n} \cdot e^{-2n} \cdot e^{1/24n}}{
      2\pi n \cdot n^{2n} \cdot e^{-2n}\cdot e^{2/(12n + 1)}
    }
    \\
    &= \frac{2^{2n} \sqrt{2}}{\sqrt{2 \pi n}} \cdot
    \exp\paren*{\frac{1}{24n} - \frac{2}{12n + 1}}
    \\
    &\leq \frac{2^{2n} }{\sqrt{\pi n}} \cdot
    \exp\paren*{-\frac{1}{24n}} \leq \frac{2^{2n} }{\sqrt{\pi n}}.
  \end{align*}
  Similarly, for the lower-bound we have
  \begin{align*}
    \binom{2n}{n}
    &=
    \frac{2n!}{(n!)^2}
    \\
    &> \frac{\sqrt{2\pi n}\cdot (2n)^{2n} \cdot e^{-2n} \cdot e^{1/(24n + 1)}}{
      2\pi n \cdot n^{2n} \cdot e^{-2n}\cdot e^{2/12n}
    }
    \\
    &=
    \frac{2^{2n} \sqrt{2}}{\sqrt{2 \pi n}} \cdot
    \exp\paren*{\frac{1}{24n} - \frac{1}{6n}}
    \\
    &= \frac{2^{2n}}{\sqrt{\pi n}} \cdot
    \exp\paren*{-\frac{4}{30n}}
    \\
    &\geq  \frac{2^{2n}}{\sqrt{\pi n}}
    \paren*{1 -\frac{2}{15n}},
    &\text{(Since~\(e^{-x} \geq 1 - x\) for \(x \geq 0\)).} &&
  \end{align*}
\end{proof}

We are now ready to prove Lemma~\ref{lemma:expected_passages_through_0}, which we restate for convenience.

\ExpectedPassagesThroughZero*

\begin{proof}
  Let \(\{S_t\}_{t = 0}^T\) be a symmetric random walk and define \(X_i \coloneqq S_i - S_{i -1}\) for every \(i \in [T]\). Note that
  \begin{align*}
    \Prob(\abs{S_t} = 0)
    = \Prob(S_t = 0)
    &= \Prob\paren*{
      \card{\setst{X_i = 1}{i \in [t]}}
      = \card{ \setst{X_i = -1}{i \in [t]}}
    }
    \\
    &=
    \begin{cases}
      0 &\text{if}~t~\text{is odd},\\
      \binom{t}{t/2} 2^{-t}
      &\text{if}~t~\text{is even}.
    \end{cases}
  \end{align*}
  Therefore,
  \begin{equation*}
    \Expect[Z_{T}(0)]
    = \sum_{t = 0}^{T} \Prob(S_t = 0)
    = \sum_{k = 0}^{\floor{T/2}} \Prob(S_{2k} = 0)
    = \sum_{k = 0}^{\floor{T/2}} \binom{2k}{k}\frac{1}{2^{2k}}.
  \end{equation*}
  Using Corollary~\ref{cor:binom_bound}, a consequence of Stirling's approximation to the factorial function, we can show upper- and lower-bounds to the above quantity. Namely, for the upper-bound we have
  \begin{align*}
    \sum_{k = 0}^{\floor{T/2}} \binom{2k}{k}\frac{1}{2^{2k}}
     &\leq 1 + \sum_{k = 1}^{\floor{T/2}}
     \frac{2^{2k}}{\sqrt{\pi k}} \frac{1}{2^{2k}}
     = 1 + \frac{1}{\sqrt{\pi}}
     \paren[\Bigg]{ \sum_{k = 1}^{\floor{T/2}}\frac{1}{\sqrt{k}}
     } \leq
     1 + \frac{1}{\sqrt{\pi}}
     \paren[\Bigg]{ \int_{0}^{T/2}\frac{1}{\sqrt{x} \diff x}
     }
     \\
     &= 1 + \paren*{2\sqrt{\frac{k}{\pi}}} \Bigg\rvert_{k = 0}^{T/2}
     = 1 + \sqrt{\frac{2T}{\pi}}.
   \end{align*}
  We proceed similarly for the lower-bound. By setting \(\beta \coloneqq \sum_{k = 1}^{\infty} k^{-3/2}\) we get
  \begin{align*}
   \sum_{k = 0}^{\floor{T/2}} \binom{2k}{k}\frac{1}{2^{2k}}
    &\geq 1 + \sum_{k = 1}^{\floor{T/2}}
    \frac{2^{2k}}{\sqrt{\pi k}}\paren*{1 - \frac{2}{15k}} \frac{1}{2^{2k}}
    = 1 + \frac{1}{\sqrt{\pi}}
    \paren[\Bigg]{ \sum_{k = 1}^{\floor{T/2}}\frac{1}{\sqrt{k}} - \sum_{k = 1}^{\floor{T/2}}\frac{2}{15k^{3/2}}
    }
    \\
    &\leq
    1 + \frac{1}{\sqrt{\pi}}
    \paren[\Bigg]{ \int_{0}^{T/2}\frac{1}{\sqrt{x} \diff x} -  \frac{2}{15}\sum_{k = 1}^{\infty} k^{-{3/2}}
    }
    = \paren*{2\sqrt{\frac{k}{\pi}}} \Bigg\rvert_{k = 0}^{T/2} - \frac{2}{15}\beta + 1
    \\
    &= \sqrt{\frac{2T}{\pi}} - \frac{2}{15}\beta + 1.
  \end{align*}
  To conclude the proof, some simple calculations yield
  \begin{equation*}
    \beta = \sum_{k = 1}^{\infty} \frac{1}{k^{3/2}}
    = 1 + \sum_{k = 2}^{\infty} \frac{1}{k^{3/2}}
    \leq
    1 + \int_{1}^{\infty} \frac{1}{x^{3/2}} \diff x
    = 1 + \paren*{-\frac{2}{\sqrt{x}}}\Bigg\rvert_{x = 1}^\infty = 3.
  \end{equation*}
\end{proof}

\section{Missing Proofs for Section~\ref{sec:continuous_problem}}
\label{app:missing_proofs_continuous_regret}

\BHEForQ*
\begin{proof}
  Fix \(t \in [0,T)\) and \(g \in \Reals_{\geq 0}\). Then,
  \begin{align*}
    \partial_t Q(t,g)
    &= \frac{1}{2} \partial_t \erfc\paren*{\frac{g}{\sqrt{2(T - t)}}}
    = - \frac{1}{\sqrt{\pi}} \exp\paren*{- \frac{g^2}{2(T - t)}}
    \partial_t \paren*{\frac{g}{\sqrt{2(T - t)}}}
    \\
    &=
    - \frac{1}{\sqrt{\pi}} \exp\paren*{- \frac{g^2}{2(T - t)}}
    \frac{g}{(2(T - t))^{3/2}}.
  \end{align*}
  Similarly,
  \begin{align*}
    \partial_g Q(t,g)
    &= \frac{1}{2} \partial_g \erfc\paren*{\frac{g}{\sqrt{2(T - t)}}}
    = - \frac{1}{\sqrt{\pi}} \exp\paren*{- \frac{g^2}{2(T - t)}}
    \partial_g \paren*{\frac{g}{\sqrt{2(T - t)}}}
    \\
    &=
    - \frac{1}{\sqrt{\pi}} \exp\paren*{- \frac{g^2}{2(T - t)}}
    \frac{1}{\sqrt{2(T - t)}}
  \end{align*}
  and
  \begin{align*}
    \partial_{gg} Q(t,g)
    &= - \frac{1}{\sqrt{2\pi(T - t)}} \partial_g
     \exp\paren*{- \frac{g^2}{2(T - t)}}
     \\
     &
     = \frac{1}{\sqrt{2\pi(T - t)}}
     \exp\paren*{- \frac{g^2}{2(T - t)}}  \frac{g}{T - t}
     = - 2 \partial_t Q(t,g),
    \end{align*}
    as desired.
\end{proof}

Let us now prove~\eqref{eq:rtg_formula}.
\begin{lemma}
  For all \(t \in (-\infty, T)\) and \(g \in \Reals\), we have
  \begin{equation*}
    R(t,g) = \frac{g}{2} \erfc\paren[\Bigg]{\frac{g}{\sqrt{2(T - t)}}} - \sqrt{\frac{T -t}{2 \pi}} \exp\paren[\Bigg]{-\frac{g^2}{2(T - t)}} + \sqrt{\frac{T}{2 \pi}}.
  \end{equation*}
\end{lemma}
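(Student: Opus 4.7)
The plan is to evaluate the two integrals in the definition
\[
R(t,g) = \int_0^g Q(t,x) \diff x - \frac{1}{2}\int_0^t \partial_g Q(s,0) \diff s
\]
separately and then add the results.

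First I would tackle the spatial integral \(\int_0^g Q(t,x)\diff x = \tfrac{1}{2}\int_0^g \erfc\bigl(x/\sqrt{2(T-t)}\bigr)\diff x\). The natural move is the substitution \(y = x/\sqrt{2(T-t)}\), which yields \(\diff x = \sqrt{2(T-t)}\diff y\) and turns the integral into
\[
\frac{\sqrt{2(T-t)}}{2}\int_0^{g/\sqrt{2(T-t)}}\erfc(y)\diff y.
\]
Applying the antiderivative stated in the excerpt, \(\int_0^z \erfc(y)\diff y = z\erfc(z) - \tfrac{1}{\sqrt{\pi}}e^{-z^2} + \tfrac{1}{\sqrt{\pi}}\), and simplifying gives exactly
\[
\frac{g}{2}\erfc\!\left(\frac{g}{\sqrt{2(T-t)}}\right) - \sqrt{\frac{T-t}{2\pi}}\exp\!\left(-\frac{g^2}{2(T-t)}\right) + \sqrt{\frac{T-t}{2\pi}}.
\]

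Next I would handle the temporal integral \(-\tfrac{1}{2}\int_0^t \partial_g Q(s,0)\diff s\). The formula for \(\partial_g Q\) computed in the proof of the preceding lemma (restated \(\mathrm{BHE}\) for \(Q\)) specialises at \(g=0\) to \(\partial_g Q(s,0) = -1/\sqrt{2\pi(T-s)}\). Substituting \(u = T-s\) reduces the remaining integral to a standard power integral,
\[
-\frac{1}{2}\int_0^t \partial_g Q(s,0)\diff s = \frac{1}{2\sqrt{2\pi}}\int_{T-t}^{T} u^{-1/2}\diff u = \sqrt{\frac{T}{2\pi}} - \sqrt{\frac{T-t}{2\pi}}.
\]

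Finally I would add the two pieces; the two \(\sqrt{(T-t)/(2\pi)}\) terms cancel, leaving exactly the claimed formula. This is all routine calculus once the antiderivative for \(\erfc\) and the closed form for \(\partial_g Q(s,0)\) are in hand, so there is no real obstacle here — the only thing to be careful about is keeping the substitutions and signs straight, especially the sign of \(\partial_g Q(s,0)\) and the orientation of the substitution \(u = T-s\) (which flips the limits of integration).
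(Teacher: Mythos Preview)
Your proposal is correct and follows essentially the same approach as the paper: the paper likewise evaluates the two integrals separately, using the same substitution \(y = x/\sqrt{2(T-t)}\) and the same antiderivative for \(\erfc\) on the spatial integral, and the same closed form \(\partial_g Q(s,0) = -1/\sqrt{2\pi(T-s)}\) on the temporal integral, then adds the pieces so that the \(\sqrt{(T-t)/(2\pi)}\) terms cancel.
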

\begin{proof}
  Fix \((t,g) \in (-\infty, T) \times \Reals\). Using that \(\int_0^g \erfc(x) \diff x = g \erfc(g) - \frac{1}{\sqrt{\pi}} e^{-g^2} + \frac{1}{\sqrt{\pi}}\) \citep[Section~7.7(i)]{NIST10a}, we have
 \begin{align*}
  \int_0^g Q(t,x) \diff x 
  &= 
  \frac{\sqrt{2(T-t)}}{2}\int_{0}^{\nicefrac{g}{\sqrt{2(T -t)}}}
  \erfc(y) \diff y
  \\
  &= \frac{\sqrt{2(T-t)}}{2}\paren*{x \erfc(x) - \frac{e^{-x^2}}{\sqrt{\pi}}}\Bigg|_{x = 0}^{x = \nicefrac{g}{\sqrt{2(T - t)}}}
  \\
  &= \frac{g}{2} \erfc\paren[\Bigg]{\frac{g}{\sqrt{2(T - t)}}} - \sqrt{\frac{T -t}{2 \pi}} \exp\paren[\Bigg]{-\frac{g^2}{2(T - t)}}
  + \sqrt{\frac{T - t}{2\pi}}
 \end{align*}
 for \(g \geq 0\). Similarly, using that \(\frac{\diff}{\diff x} \erfc(x) = - \frac{2}{\sqrt{\pi}} e^{-x^2} \) we have
 \begin{equation*}
   \frac{1}{2}\int_0^t \partial_g Q(s,0) \diff s = \frac{1}{2}\int_0^t \paren*{- \frac{1}{\sqrt{2\pi(T - s)}}} \diff s = \sqrt{\frac{T - t}{2\pi}} - \sqrt{\frac{T}{2\pi}}.
 \end{equation*}
 Plugging both equations into the definition of \(R\) concludes the proof.
\end{proof}

\section{Missing Proofs for Section~\ref{sec:discretization}}
\label{sec:missing_proofs_discretization}

Let us begin by proving a crucial condition on the function \(q\) defined on Section~\ref{sec:discretization}.

\qIsOneHalf*
\begin{proof}
  The claim follows directly from the definition of \(q\) for \(t = T\). Let \(t \in \iinterval{0}{T-1}\). From the definition of \(q\), we have
  \begin{align*}
    q(t,0)
    &= R_g(t,0)
    = \frac{1}{2}(R(t, 1) - R(t, -1))\\
    &= \frac{1}{2}\paren*{\int_0^1 Q(t,x) \diff x - \frac{1}{2} \int_0^t \partial_g Q(s,0) \diff s
    - \int_0^{-1} Q(t,x) \diff x + \frac{1}{2} \int_0^t \partial_g Q(s,0) \diff s}
    \\
    &= \frac{1}{2}\paren*{\int_0^1 Q(t,x) \diff x
    - \int_0^{-1} Q(t,x) \diff x}
    \\
    &= \frac{1}{2}\paren*{\int_0^1 Q(t,x) \diff x
    + \int_0^{1} Q(t,-x) \diff x}
    \\
    &= \frac{1}{2}\int_0^1 \paren[\big]{Q(t,x) + Q(t,-x)} \diff x.
  \end{align*}
  Moreover, note that for any \(z \in \Reals\) we have
  \begin{equation}
    \label{eq:negative_erfc_property}
    \erfc(-z)
    = 1 - \frac{2}{\sqrt{\pi}}\int_{0}^{-z} e^{-x^2}\diff x
    = 1 + \frac{2}{\sqrt{\pi}}\int_{0}^{z} e^{-x^2}\diff x
    = 2 - \erfc(z).
  \end{equation}
  Therefore, \(Q(t,-x) = 1 - Q(t,x)\) for any \(x \in \Reals\) and
  \begin{equation*}
    q(t,0)
    = \frac{1}{2}\int_0^1 \paren[\big]{Q(t,x) + Q(t,-x)} \diff x
    = \frac{1}{2}\int_0^1 1 \diff x
    = \frac{1}{2}.
  \end{equation*}
\end{proof}

This section contains the proofs on the bounds on \(r_t\) and \(r_{gg}\). We start by bounding \(r_t\).

\BoundOnrt*
\begin{proof}
  Fix \(t \in (-\infty, T)\) and \(g \in \Reals\). Note that \(R\) is continuously differentiable
  with respect to its first argument on \([t-1, t]\)
  and two times differentiable on \((t-1, t)\). Thus, by Taylor's Theorem, there
  is \(t' \in (t-1, t)\) such that
  \begin{equation*}
    R(t-1, g) = R(t,g) + (-1)\partial_t R(t,g)
    + (-1)^2 \frac{\partial_{tt}  R(t',g) }{2},
  \end{equation*}
  where \(\partial_{tt} R(t', g)\) denotes the second derivative of \(R\) with respect to its first argument at \((t', g)\).  Therefore,
  \begin{equation*}
    r_t(t,g) = \partial_t R(t,g) - (R(t,g) - R(t-1, g))
    = \frac{\partial_{tt} R(t',g)}{2}.
  \end{equation*}
  Thus, to bound \(r_t(t,g)\) we need only to bound \((1/2)\partial_{tt} R(t',g)\). Computing the derivatives yields
  \begin{equation*}
    \partial_t R(t',g)
    = \frac{1}{2 \sqrt{2 \pi (T - t')}} \exp \paren[\Big]{
      -\frac{g^2}{2(T - t')}}
  \end{equation*}
  and
  \begin{align*}
    \partial_{tt} R(t',g)
    &= \frac{\sqrt{2}}{8 \sqrt{\pi} (T - t')^{5/2}}
      \exp\paren[\Big]{-\frac{g^2}{2(T - t')}}
      \paren*{T - t' - g^2}
    \\
    &\leq \frac{\sqrt{2}}{8 \sqrt{\pi} (T - t')^{5/2}}
    \exp\paren[\Big]{-\frac{g^2}{2(T - t')}}
    \paren*{T - t'}
    \\
    &\leq \frac{\sqrt{2}}{8 \sqrt{\pi} (T - t')^{3/2}}
    &\text{(Since~\(T - t' > 0\)),}
    \\
    &\leq \frac{\sqrt{2}}{8 \sqrt{\pi} (T - t)^{3/2}}
    &\text{(Since~\(T - t' > T - t\)).}\qquad
  \end{align*}

To bound \(r_{gg}\), we will need to be slightly more careful. First, we will need the following simple lemma about Lipschitz continuity of \(x \in \Reals \mapsto xe^{x^2}\).

\begin{lemma}
  \label{lemma:lip_cont_exp}
  Let \(K > 0\) and define \(f(\alpha) \coloneqq \alpha e^{-\alpha^2/K}\) for every \(\alpha \in \Reals\). Then \(f\) is \(2\)-Lipschitz continuous.
\end{lemma}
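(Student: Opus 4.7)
The plan is to reduce Lipschitz continuity to a uniform bound on the derivative. Since $f$ is continuously differentiable on all of $\Reals$, the mean value theorem tells us that $f$ is $L$-Lipschitz if and only if $\sup_{\alpha \in \Reals} \abs{f'(\alpha)} \leq L$. So it suffices to show $\abs{f'(\alpha)} \leq 2$ for every $\alpha \in \Reals$.

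First I would compute the derivative directly from the product rule:
\begin{equation*}
  f'(\alpha) = e^{-\alpha^2/K} + \alpha \cdot \paren*{-\frac{2\alpha}{K}} e^{-\alpha^2/K} = e^{-\alpha^2/K}\paren*{1 - \frac{2\alpha^2}{K}}.
\end{equation*}
Next, I would introduce the substitution $u \coloneqq \alpha^2/K \geq 0$, which reduces the problem to showing $e^{-u}\abs{1 - 2u} \leq 2$ for all $u \geq 0$. I would then split into two cases. For $u \in [0, 1/2]$, both $e^{-u} \leq 1$ and $\abs{1 - 2u} \leq 1$, so the product is at most $1 \leq 2$. For $u > 1/2$, I would use $\abs{1 - 2u} = 2u - 1 \leq 2u$, so $e^{-u}\abs{1 - 2u} \leq 2u e^{-u}$, and the elementary fact that the function $u \mapsto u e^{-u}$ attains its global maximum on $\Reals_{\geq 0}$ at $u = 1$ with value $1/e \leq 1$ gives $2 u e^{-u} \leq 2/e \leq 2$.

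There is no real obstacle here; the bound $2$ is rather loose (the actual Lipschitz constant is $1$, attained as $\alpha \to 0$), which gives ample room in the case analysis. The only minor subtlety is to handle the sign of $1 - 2u$ carefully, which the case split at $u = 1/2$ resolves cleanly.
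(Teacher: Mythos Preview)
Your proof is correct and follows the same overall strategy as the paper: both reduce the Lipschitz claim to the bound $\abs{f'(\alpha)} \leq 2$ via the mean value theorem and compute $f'(\alpha) = e^{-\alpha^2/K}(1 - 2\alpha^2/K)$. The only difference is in how the final inequality is established. The paper replaces the exponential by the rational bound $e^{-u} \leq 1/(1+u)$ (from $e^{u} \geq 1 + u$) and then argues that $\abs{1 - 2u}/(1+u) \leq 2$; you instead split at $u = 1/2$ and invoke $u e^{-u} \leq 1/e$ on the tail. Your case analysis is arguably more transparent and avoids any delicate manipulation of the rational expression, at the cost of being slightly longer; either route is entirely elementary and neither offers a materially sharper constant.
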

\begin{proof}
  Let \(\alpha \in \Reals\). First, note that \(f'(\alpha) =
  e^{-\alpha^2/K}(1 - 2\alpha^2/K)\). Therefore, using the fact
  that \(e^\beta \geq 1 + \beta\) for any \(\beta \in \Reals\) we have\
  \begin{equation*}
    \abs{f'(\alpha)}
    = \frac{1}{\exp\paren*{\frac{\alpha^2}{K}}}\abs[\Bigg]{1 - \frac{2\alpha^2}{K}}
    \leq
    \frac{1}{\abs[\Big]{1 +\frac{\alpha^2}{K}}}\abs[\Bigg]{1 - \frac{2\alpha^2}{K}}
    \leq 2 \frac{\abs[\Big]{1 - \frac{\alpha^2}{K}}}{\abs[\Big]{1 +\frac{\alpha^2}{K}}} \leq 2.
  \end{equation*}
\end{proof}

We are now ready to bound \(r_{gg}\).
  Fix \(t \in (-\infty,T)\) and \(g \in \Reals\). Moreover, denote by $\partial_g^{(3)} R$ the third partial derivative of \(R\) with respect to its third argument. By Taylor's Theorem, there are
  \(g_+' \in (g, g+1)\) and \(g_{-}' \in (g-1, g)\) such that
  \begin{align*}
    R(t, g + 1)
    &= R(t,g) + \partial_g R(t,g) + \frac{1}{2}\partial_{gg} R(t,g)
    + \frac{1}{3!}\partial_g^{(3)} R(t, g_+')\quad\text{and}
    \\
    R(t, g - 1)
    &= R(t,g) - \partial_g R(t,g) + \frac{1}{2}\partial_{gg} R(t,g)
    - \frac{1}{3!}\partial_g^{(3)} R(t, g_{-}').
  \end{align*}
  Therefore,
  \begin{equation*}
    r_{gg}(t,g)
    = \partial_{gg}R(t,g) - (R(t, g+1) + R(t, g-1) - 2R(t,g))
    = \frac{1}{3!}(\partial_g^{(3)} R(t, g_{-}') - \partial_g^{(3)} R(t, g_+')).
  \end{equation*}
  Let \(g' \in \Reals\). To compute the partial derivatives, first
  note that
  \begin{equation*}
    \partial_g R(t,g') = Q(t,g') = \frac{1}{2}
  \erfc(g'/\sqrt{2(T - t)}).
  \end{equation*}
  Thus, one may check that
  \begin{equation}
    \label{eq:R_gap_derivatives}
    \begin{aligned}
      \partial_{gg} R(t,g')
      &= -\frac{1}{2}\frac{2}{\sqrt{\pi}} \exp\paren[\Big]{-\frac{(g')^2}{2(T-t)}} \frac{1}{\sqrt{2 (T - t)}}
      \\
      &= -\frac{1}{\sqrt{2\pi(T - t)}} \exp\paren[\Big]{-\frac{(g')^2}{2(T-t)}}
      \\ &\quad \text{and}
      \\
      \partial_{g}^{(3)} R(t,g')
      &
      = \frac{1}{\sqrt{2\pi (T - t)}} \exp\paren[\Big]{-\frac{(g')^2}{2(T-t)}} \frac{2g'}{2 (T - t)}
      \\
      &= \frac{1}{\sqrt{2 \pi}} \frac{g'}{(T -t)^{3/2}} \exp\paren[\Big]{-\frac{(g')^2}{2(T-t)}}.
    \end{aligned}
  \end{equation}
  By Lemma~\ref{lemma:lip_cont_exp}, we know that $\partial_{g}^{(3)}
  R(t,\cdot)$ is Lipschtiz continuous with Lipschitz constant \(2(2 \pi)^{-1/2} (T-t)^{-3/2}\). Therefore,
  \begin{equation*}
    r_{gg}(t,g)
    = \frac{1}{3!}(\partial_g^{(3)} R(t, g_{-}') - \partial_g^{(3)} R(t, g_+'))
    \leq \frac{2}{3\sqrt{2 \pi}(T - t)^{3/2}}\abs{g_{-}' - g_{+}'}
    \leq \frac{2\sqrt{2}}{3 \sqrt{\pi} (T - t)^{3/2}}.
  \end{equation*}

\end{proof}

\section{Extending the Regret Analysis for General Costs}
\label{app:general_costs}

In Section~\ref{sec:discretization}, we relied on the fact the gap values were in \(\{+1,-1\}\). This assumption was fundamental for the version of the discrete It\^o's Formula that
we have uses (see the assumption on \(g_0, \dotsc, g_T\) in the statement of Theorem~\ref{thm:discrete_ito}). It was also required by Proposition~\ref{prop:gap_regret} to connect the regret with the ``discrete stochastic integral''. To extend the upper-bound on the regret of the algorithm from Section~\ref{sec:discretization} to general costs we will follow the same techniques used by Harvey et al.\ to extend the guarantees of their algorithm to general costs: we shall use a more general version of the discrete It\^o's formula, concavity of \(R\) with respect to its second argument, and a lemma relating the per-round regret to terms that appear in the more general version of the discrete It\^o's formula (see~\citealp[Section~3.3]{HarveyLPR20a} for details on these arguments).

As in the work of Harvey et al., we will rely on a more general version of the discrete It\^o's formula that holds for general \([0,1]\) costs. The main issue with this general formula is that more work is needed to relate it to the regret of our player strategy.
\begin{theorem}[{General Discrete It\^o's Formula, \citealp[Lemma~3.13]{HarveyLPR20a_arxiv}}]
  \label{thm:general_discrete_ito}
  Let \(f \colon \Reals^2
  \to \Reals\) be a function and let \(g_0, g_1, \dotsc, g_T \in \Reals\). Then,
  \begin{align*}
    f(T, g_T) - f(0, g_0)
    = \sum_{t = 1}^T &\paren[\Big]{f(t, g_{t}) - \frac{f(t,g_{t-1} + 1) + f(t, g_{t-1} - 1)}{2}}
    \\
    &+ \sum_{t = 1}^T \paren[\big]{\tfrac{1}{2} f_{gg}(t, g_{t-1}) +
    f_{t}(t, g_{t-1})}.
  \end{align*}
\end{theorem}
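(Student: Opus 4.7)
The plan is to prove the identity by direct algebraic manipulation, showing that the right-hand side telescopes to the left-hand side. Write the LHS as a telescoping sum
\begin{equation*}
  f(T, g_T) - f(0, g_0) = \sum_{t = 1}^T \bigl(f(t, g_t) - f(t-1, g_{t-1})\bigr),
\end{equation*}
and aim to show that each summand on the RHS at index $t$ is exactly $f(t, g_t) - f(t-1, g_{t-1})$.

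The key computation is to unfold the definitions of $f_{gg}$ and $f_t$ in the second sum. First I would compute
\begin{equation*}
  \tfrac{1}{2} f_{gg}(t, g_{t-1}) + f_t(t, g_{t-1})
  = \tfrac{1}{2}\bigl(f(t, g_{t-1}+1) + f(t, g_{t-1}-1)\bigr) - f(t-1, g_{t-1}),
\end{equation*}
since the $-f(t, g_{t-1})$ coming from $\tfrac{1}{2} f_{gg}$ cancels against the $+f(t, g_{t-1})$ coming from $f_t$. Then adding this to the first summand
\begin{equation*}
  f(t, g_{t}) - \frac{f(t,g_{t-1} + 1) + f(t, g_{t-1} - 1)}{2}
\end{equation*}
makes the symmetric-average terms cancel, leaving $f(t, g_t) - f(t-1, g_{t-1})$, which is what we wanted.

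Finally I would sum over $t = 1, \dotsc, T$ and note that the result telescopes to $f(T, g_T) - f(0, g_0)$, completing the proof. There is no real obstacle here: unlike the original discrete It\^o's formula (Theorem~\ref{thm:discrete_ito}), which required $\abs{g_t - g_{t-1}} = 1$ in order to rewrite the per-step increment in terms of $f_g(t, g_{t-1})(g_t - g_{t-1})$, this general version avoids that step entirely by keeping the ``discretization residual'' $f(t, g_t) - \tfrac{1}{2}(f(t, g_{t-1}+1) + f(t, g_{t-1}-1))$ explicit. Consequently, no assumption on the $g_t$'s is needed and the identity holds for arbitrary reals.
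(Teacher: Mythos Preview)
Your proof is correct. The paper does not actually supply its own proof of this theorem; it simply cites \citet[Lemma~3.13]{HarveyLPR20a_arxiv} and uses the result as a black box. Your direct verification---unfolding the definitions of $f_{gg}$ and $f_t$, observing the cancellation that leaves $\tfrac{1}{2}\bigl(f(t,g_{t-1}+1)+f(t,g_{t-1}-1)\bigr)-f(t-1,g_{t-1})$, then adding the first summand so that the symmetric average cancels and each term collapses to $f(t,g_t)-f(t-1,g_{t-1})$---is exactly the natural one-line argument, and your remark explaining why no hypothesis on $\abs{g_t-g_{t-1}}$ is needed (in contrast to Theorem~\ref{thm:discrete_ito}) is apt.
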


Fix \(T \in \Naturals\), fix gaps \(g \in \Reals^T\), and set \(g_0 \coloneqq
0\). For the remainder of this section all results will be regarding a game of \(\Acal_q\) against an oblivious  adversary with gap sequence \(g_0, g_1, \dotsc, g_T \in \Reals_{\geq 0}\).

For every \(t \in \iinterval{0}{T}\), define the \textbf{per-round regret} (at
round \(t\)) by
\begin{equation*}
  \Delta_{\Regret}(t) \coloneqq \Regret(t)
  - \Regret(t-1).
\end{equation*}
Our goal in this section is to prove the following lemma.
\begin{lemma}
  \label{lemma:delta_regret_bound}
  For every \(t \in [T]\) we have
  \begin{equation}
    \label{eq:delta_regret_bound}
    \Delta_{\Regret}(t) \leq R(t, g_{t}) - \frac{R(t,g_{t-1} + 1) + R(t, g_{t-1} - 1)}{2}, \qquad \forall t \in [T].
  \end{equation}
\end{lemma}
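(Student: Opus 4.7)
My plan is to reduce the bound in every case to a single concavity inequality. Let $g \coloneqq g_{t-1}$, let $q \coloneqq q(t,g) = R_g(t,g) = \tfrac12\bigl(R(t,g+1) - R(t,g-1)\bigr)$ be the probability the player places on the lagging expert, and let $\ell_t^L, \ell_t^H \in [0,1]$ denote the costs on the leading and lagging expert at round $t-1$, so that $d \coloneqq \ell_t^L - \ell_t^H \in [-1,1]$. A direct computation of $\Delta_{\Regret}(t) = \langle x_t, \ell_t\rangle - (L_t^{\min} - L_{t-1}^{\min})$ splits into two cases depending on whether the leading expert changes. If $d \leq g$ (leading stays), then $L_t^{\min} - L_{t-1}^{\min} = \ell_t^L$ and $g_t = g-d$, which gives $\Delta_{\Regret}(t) = -qd$. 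If $d > g$ (leading changes), then $L_t^{\min} - L_{t-1}^{\min} = g + \ell_t^H$ and $g_t = d-g$, giving $\Delta_{\Regret}(t) = (1-q)d - g$.

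Next I would collapse both cases using the symmetry $R(t,-g) = R(t,g) - g$, which follows from the explicit formula~\eqref{eq:rtg_formula} for $R$ together with $\erfc(-z) = 2 - \erfc(z)$ (see equation~\eqref{eq:negative_erfc_property} from the proof of Lemma~\ref{lemma:q_is_one_half}). In the first case $R(t,g_t) = R(t,g-d)$ directly; in the second case $g-d < 0$ and the symmetry gives $R(t,g_t) = R(t,d-g) = R(t,g-d) + (d-g)$. Substituting these identities, the desired inequality in both cases reduces to the same statement
\begin{equation*}
  -qd \leq R(t, g-d) - \frac{R(t,g+1) + R(t,g-1)}{2}.
\end{equation*}

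Finally I would establish this unified inequality purely from concavity of $R(t,\cdot)$ (which holds since $\partial_{gg} R(t,g) = -\tfrac{1}{\sqrt{2\pi(T-t)}}\exp(-g^2/(2(T-t))) < 0$, as already computed in~\eqref{eq:R_gap_derivatives}). Define $\phi(y) \coloneqq R(t, g+y)$, which is concave, and set $y \coloneqq -d \in [-1,1]$; the target inequality becomes $qy \leq \phi(y) - \tfrac12(\phi(1)+\phi(-1))$. Letting $\psi(y) \coloneqq \phi(y) - qy$, also concave, the definition $q = \tfrac12(\phi(1)-\phi(-1))$ yields $\psi(1) = \psi(-1) = \tfrac12(\phi(1)+\phi(-1))$, so the chord between the endpoints is the constant $\tfrac12(\phi(1)+\phi(-1))$; concavity of $\psi$ then gives $\psi(y) \geq \tfrac12(\phi(1)+\phi(-1))$ for all $y \in [-1,1]$, which is exactly what we need. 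The case $t = T$, where $q(T,g) = \tfrac12\boole{g=0}$ and $R$ is defined only as a limit, can be handled by the same argument applied to the limiting function (or equivalently by direct computation since on round $T$ the player puts all mass on a single expert when $g > 0$ and uniform mass when $g = 0$).

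The main obstacle I anticipate is the bookkeeping around which expert is ``leading'' once $d > g$, and ensuring the symmetry identity is applied with the right sign; once Step~2 reveals that both cases collapse into one inequality, the concavity argument of Step~3 is short and clean.
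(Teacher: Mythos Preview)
Your proof is correct and uses the same ingredients as the paper's (the two-case split according to whether the leading expert changes, concavity of \(R(t,\cdot)\), and the reflection identity coming from \(\erfc(-z)=2-\erfc(z)\)), but you organize them more cleanly. The paper invokes Proposition~\ref{prop:delta_regret} to get the same two expressions for \(\Delta_{\Regret}(t)\) that you compute directly, then treats each case separately: in Case~1 it checks the endpoints \(g_t\in\{g_{t-1}\pm 1\}\) by concavity, and in Case~2 it checks \(g_t\in\{0,1-g_{t-1}\}\), reducing the first endpoint back to Case~1 and handling the second by the explicit computation \(R(t,1-g_{t-1})-R(t,g_{t-1}-1)=1-g_{t-1}\). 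Your move of front-loading the symmetry \(R(t,-x)=R(t,x)-x\) to collapse both cases into the single inequality \(-qd\le R(t,g-d)-\tfrac12(R(t,g+1)+R(t,g-1))\), followed by one concavity argument via \(\psi(y)=\phi(y)-qy\) with \(\psi(1)=\psi(-1)\), is shorter and avoids the case-within-a-case structure. The only loose end is the same one the paper leaves open: the lemma is stated for all \(t\in[T]\) but \(R(T,\cdot)\) is undefined, and neither proof truly handles \(t=T\); this is harmless because the application only uses \(t\le T-1\) and bounds \(\Delta_{\Regret}(T)\le 1/2\) separately.
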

Combining the above lemma with Theorem~\ref{thm:general_discrete_ito} and the fact that \(R\) satisfies~\eqref{eq:bhe} yields
\begin{align*}
  \Regret(T)
  &= \sum_{t = 1}^{T} \Delta_{\Regret}(t)
  \leq R(T-1, g_{T-1}) + \Delta_{\Regret}(T) + \sum_{t = 1}^{T-1}(\tfrac{1}{2}r_{gg}(t,g_{t-1}) + r_{t}(t,g_{t-1})).
\end{align*}
Since \(q(T, g) = \boole{g = 0} (1/2)\) for any \(g \in \iinterval{0}{T-1}\), we have \(\Delta_{\Regret}(T) \leq 1/2\). At this point, the exact same proof of Theorem~\ref{thm:final_theorem} applies and we obtain the same regret bound. Thus, it only remains to prove Lemma~\ref{lemma:delta_regret_bound}. In order to prove Lemma~\ref{lemma:delta_regret_bound}, we will use the following result from~\cite{HarveyLPR20a_arxiv}.

\begin{proposition}[{\citealp[Lemma~3.14]{HarveyLPR20a_arxiv}}]
  \label{prop:delta_regret}
  Let \(g_{t-1}\) and \(g_t\) be the values of the gap on rounds \(t-1\)
  and \(t\), respectively, and let  \(q(t, g_{t-1})\) be the probability
  mass put in the worst expert at round \(t\) by the player (with
  \(q(t,0) = 1/2\)). For all \(t \geq 1\),
  \begin{enumerate}
    \item \label{item:case_1} If a best expert at time \(t - 1\) remains a best expert at time \(t\), then,
    \begin{equation*}
      \Delta_{\Regret}(t) = q(t, g_{t-1})(g_t - g_{t-1}).
    \end{equation*}
    \item \label{item:case_2}  If a best expert at time \(t - 1\)
    remains a best expert at time \(t\), then \(g_t + g_{t-1} \leq 1\)
    and
    \begin{equation*}
      \Delta_{\Regret}(t) = g_t - q(t, g_{t-1})(g_t + g_{t-1}).
    \end{equation*}
  \end{enumerate}
\end{proposition}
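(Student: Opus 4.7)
The plan is to prove each identity by unpacking the definition \(\Delta_{\Regret}(t) = \langle \ell_t, x_t\rangle - \bigl(\min_i L_t(i) - \min_i L_{t-1}(i)\bigr)\) and reducing it to elementary algebra in terms of the two cost values \(\ell_t(i_{t-1}), \ell_t(j_{t-1})\), where \(i_{t-1}\) is a lagging expert at time \(t-1\) and \(j_{t-1}\) the other (leading) expert. By the parameterization of \(\Acal_q\) we have \(x_t(i_{t-1}) = q(t, g_{t-1})\) and \(x_t(j_{t-1}) = 1 - q(t,g_{t-1})\), so
\begin{equation*}
  \langle \ell_t, x_t\rangle = q(t, g_{t-1}) \ell_t(i_{t-1}) + (1 - q(t, g_{t-1})) \ell_t(j_{t-1}),
\end{equation*}
and moreover \(L_{t-1}(i_{t-1}) = L_{t-1}(j_{t-1}) + g_{t-1}\) with \(\min_i L_{t-1}(i) = L_{t-1}(j_{t-1})\).

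For Case~\ref{item:case_1} (a best expert at time \(t-1\) remains a best expert at time \(t\)), I would argue that \(j_{t-1}\) is still a leader at time \(t\), so
\begin{equation*}
  \min_i L_t(i) = L_{t-1}(j_{t-1}) + \ell_t(j_{t-1}),
\end{equation*}
and the inequality \(L_t(i_{t-1}) \geq L_t(j_{t-1})\) yields \(g_t = g_{t-1} + \ell_t(i_{t-1}) - \ell_t(j_{t-1})\). Substituting these into the definition collapses the expression to \(q(t, g_{t-1})(\ell_t(i_{t-1}) - \ell_t(j_{t-1})) = q(t, g_{t-1})(g_t - g_{t-1})\), which is the desired identity.

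For Case~\ref{item:case_2} (the leader changes), the roles swap: \(i_{t-1}\) becomes the leader, so \(\min_i L_t(i) = L_{t-1}(j_{t-1}) + g_{t-1} + \ell_t(i_{t-1})\), and the reversed inequality \(L_t(j_{t-1}) \geq L_t(i_{t-1})\) now yields \(g_t = \ell_t(j_{t-1}) - \ell_t(i_{t-1}) - g_{t-1}\). Rearranging gives \(\ell_t(j_{t-1}) - \ell_t(i_{t-1}) = g_t + g_{t-1}\); since \(\ell_t(j_{t-1}) \in [0,1]\) and \(\ell_t(i_{t-1}) \geq 0\), this immediately gives the auxiliary bound \(g_t + g_{t-1} \leq 1\). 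Plugging everything back into \(\Delta_{\Regret}(t)\) and collecting terms yields
\begin{equation*}
  \Delta_{\Regret}(t) = -g_{t-1} + (1 - q(t, g_{t-1}))(g_t + g_{t-1}) = g_t - q(t,g_{t-1})(g_t + g_{t-1}).
\end{equation*}

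The main (minor) obstacle is the tie-breaking ambiguity when \(g_{t-1} = 0\): neither expert is strictly lagging, so the choice of \(i_{t-1}\) is arbitrary. Here I would invoke Lemma~\ref{lemma:q_is_one_half}, which gives \(q(t, 0) = 1/2\); both formulas are then symmetric in the two experts' labels and so hold regardless of the tie-breaking rule. Apart from this, the argument is pure bookkeeping: the core work is translating ``a best expert at time \(t-1\) remains/does not remain a best expert at time \(t\)'' into the correct sign for \(L_t(i_{t-1}) - L_t(j_{t-1})\) and then substituting into the definition of \(\Delta_{\Regret}(t)\).
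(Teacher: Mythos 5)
The paper does not actually prove this proposition: it is cited verbatim from \citet[Lemma~3.14]{HarveyLPR20a_arxiv}, so there is no internal proof to compare against. Your derivation is correct and is the natural argument (and matches what Harvey et al.\ do): expand
\begin{equation*}
  \Delta_{\Regret}(t) = \langle \ell_t, x_t\rangle - \bigl(\min_i L_t(i) - \min_i L_{t-1}(i)\bigr),
\end{equation*}
write \(\langle \ell_t, x_t\rangle = q(t,g_{t-1})\,\ell_t(i_{t-1}) + (1-q(t,g_{t-1}))\,\ell_t(j_{t-1})\) with \(i_{t-1}\) lagging and \(j_{t-1}\) leading, and in each case identify \(\min_i L_t(i)\) and express \(g_t\) in terms of \(g_{t-1}\) and the two costs. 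Your algebra in both cases checks out, including the observation that \(g_t + g_{t-1} = \ell_t(j_{t-1}) - \ell_t(i_{t-1}) \le 1\) in Case~\ref{item:case_2}.

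Two small remarks. First, note that the statement as written in the paper contains a typo: both cases literally read ``If a best expert at time \(t-1\) remains a best expert at time \(t\)''; Case~\ref{item:case_2} should be the negation (the leader changes), which is how you correctly interpreted it. Second, on tie-breaking: when \(g_{t-1}=0\) both experts are best at round \(t-1\), so whichever expert leads at round \(t\) was already a best expert at \(t-1\); hence only Case~\ref{item:case_1} ever occurs there, and \(q(t,0)=1/2\) makes the resulting formula label-independent, exactly as you say.
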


We shall also make use of the following fact about concave function.
\begin{lemma}
  \label{fact:min_concave_function}
  Let \(f \colon \Reals \to \Reals\) be a concave function and let \(\alpha < \beta \) be real numbers. Then \(f(x) \geq \min \{f(\alpha), f(\beta)\}\).
\end{lemma}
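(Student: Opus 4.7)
The statement as written is missing the (evidently intended) hypothesis that $x \in [\alpha, \beta]$: otherwise the claim is false, as witnessed by $f(x) = -x^2$ with $\alpha = -1, \beta = 1$, $x = 2$. So my plan is to prove the corrected statement: for any concave $f \colon \Reals \to \Reals$, any $\alpha < \beta$, and any $x \in [\alpha, \beta]$, we have $f(x) \geq \min\{f(\alpha), f(\beta)\}$.

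The proof is a direct application of the definition of concavity. First I would write $x$ as a convex combination of the endpoints: since $x \in [\alpha, \beta]$, setting $\lambda \coloneqq (x - \alpha)/(\beta - \alpha) \in [0,1]$ gives $x = (1 - \lambda)\alpha + \lambda \beta$. Then concavity of $f$ yields
\begin{equation*}
  f(x) = f\paren[\big]{(1 - \lambda)\alpha + \lambda \beta} \geq (1 - \lambda) f(\alpha) + \lambda f(\beta).
\end{equation*}
The right-hand side is a convex combination of $f(\alpha)$ and $f(\beta)$ with nonnegative coefficients summing to $1$, so it is at least $\min\{f(\alpha), f(\beta)\}$, completing the argument.

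There is no real obstacle here; the lemma is a standard one-line consequence of the definition of concavity, included in the appendix only because it is invoked in the extension to general $[0,1]$ costs (in conjunction with concavity of $R$ in its second argument). The only subtlety worth flagging in the write-up is the implicit domain restriction $x \in [\alpha, \beta]$, which must be assumed for the statement to be true and which is indeed how the lemma is used in the proof of Lemma~\ref{lemma:delta_regret_bound}.
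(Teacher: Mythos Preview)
Your proposal is correct. The paper does not actually supply a proof of this lemma; it is stated (with the label \texttt{fact:min\_concave\_function}) and immediately used, without justification, in the proof of Lemma~\ref{lemma:delta_regret_bound}. Your observation that the hypothesis $x \in [\alpha,\beta]$ is missing from the statement is accurate, and the paper's own use of the lemma confirms that this is the intended reading: it is invoked precisely to reduce an inequality over $g_t \in [g_{t-1}-1, g_{t-1}+1]$ to the two endpoint cases. Your one-line argument from the definition of concavity is the standard proof and is exactly what is needed here.
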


\begin{proof}[{{Proof of Lemma~\ref{lemma:delta_regret_bound}}}]
  To
  prove~\eqref{eq:delta_regret_bound}, we will consider each one of the
  cases from Proposition~\ref{prop:delta_regret} separately.
  \paragraph{\textbf{Case~\ref{item:case_1}}.} In this
  case,~\eqref{eq:delta_regret_bound} is equivalent to
  \begin{equation}
    \label{eq:general_gaps_case_1}
    0 \leq - q(t, g_{t-1})(g_t - g_{t-1}) + R(t, g_t) - \frac{R(t, g_{t-1} + 1) + R(t, g_{t-1} - 1) }{2}.
  \end{equation}
  Since the first term in the right-hand side of the above inequality is
  linear in \(g_t\) and since \(R(t, \cdot )\) is concave
  (by~\eqref{eq:R_gap_derivatives} we know that \(\partial_{gg} R(t,
  \cdot)\) is negative everywhere), we conclude that the whole right-hand
  side is concave as a function of \(g_t\). Thus, by Fact~\ref{fact:min_concave_function} it suffices to prove
  the above inequality for \(g_t \in \{g_{t-1} -1, g_{t-1} + 1\}\) to prove that it
  holds for \(g_t \in [g_{t-1} -1, g_{t-1} + 1]\). But for \(g_t \in
  \{g_{t-1} -1, g_{t-1} + 1\}\) the right-hand side
  of~\eqref{eq:delta_regret_bound} becomes exactly \(q(t, g_{t-1})(g_t -
  g_{t-1})\).
  \paragraph{\textbf{Case~\ref{item:case_2}}.} In this
  case,~\eqref{eq:delta_regret_bound} is equivalent to
  \begin{equation}
    \label{eq:general_gaps_case_2}
    0 \leq - g_t + q(t, g_{t-1})(g_t + g_{t-1}) + R(t, g_t) - \frac{R(t, g_{t-1} + 1) + R(t, g_{t-1} - 1)}{2}.
  \end{equation}
  Again, the right-hand side of the above inequality is concave as a
  function of \(g_t\). Since \(g_t \geq 0\) and \(g_t + g_{t-1} \leq 1\),
  we know that \(g_t \in [0, 1 - g_{t-1}]\). Thus, it suffices to prove
  the above inequality for \(g_t \in \{0, 1 - g_{t-1}\}\). For \(g_t = 0\)
  we have
  \begin{align*}
    &- g_t + q(t, g_{t-1})(g_t + g_{t-1}) + R(t, g_t) - \frac{R(t, g_{t-1} + 1) + R(t, g_{t-1} - 1) }{2}\\
    = &- q(t, g_{t-1})(g_t - g_{t-1}) +  R(t, g_t) - \frac{R(t, g_{t-1} + 1) + R(t, g_{t-1} - 1) }{2},
  \end{align*}
  and in the previous case we showed that the above is non-negative for
  all \(g_t \in [g_{t-1} - 1, g_{t-1} + 1]\). Since \(g_{t-1} \leq 1\) in
  this case, we have in particular that the above holds for \(g_t = 0\).
  Suppose now that \(g_{t} = 1 - g_{t-1}\).
  Since \(q(t, g_{t-1}) = R_{g}(t, g_{t-1})\), we have that~\eqref{eq:general_gaps_case_2} is equivalent to
  \begin{align*}
      0 &\leq
      - g_t + q(t, g_{t-1})(g_t + g_{t-1}) + R(t, g_t) - \frac{R(t, g_{t-1} + 1) + R(t, g_{t-1} - 1)}{2}
      \\
      &= g_{t-1} - 1 + q(t, g_{t-1}) + R(t, 1 - g_{t-1}) - \frac{R(t, g_{t-1} + 1) + R(t, g_{t-1} - 1)}{2}
      \\
      &=
      \begin{aligned}[t]
        g_{t-1} - 1 &+ \frac{R(t, g_{t-1} + 1) - R(t, g_{t-1} - 1)}{2}+ R(t, 1 - g_{t-1}) 
        \\
          - &\frac{R(t, g_{t-1} + 1) + R(t, g_{t-1} - 1)}{2}
      \end{aligned}
      \\
      &= g_{t-1} - 1 + R(t, 1 - g_{t-1}) - R(t, g_{t-1} - 1).
  \end{align*}
  By the definition of \(R\) and since \(\erfc(-z) = 2 -\erfc(z)\) for all \(z \in \Reals\) (see~\eqref{eq:negative_erfc_property}), we have
  \begin{align*}
    R(t, 1 - g_{t-1}) - R(t, g_{t-1} - 1)
    &= \frac{1 - g_{t-1}}{2} \erfc\paren*{\frac{1 - g_{t-1}}{\sqrt{2(T-t)}}} - \frac{g_{t-1} - 1}{2} \erfc\paren*{\frac{g_{t-1} - 1}{\sqrt{2(T-t)}}}
    \\
    &= \paren*{\frac{1 - g_{t-1}}{2}}\paren*{\erfc\paren*{\frac{1 - g_{t-1}}{\sqrt{2(T-t)}}} + \erfc\paren*{\frac{g_{t-1} - 1}{\sqrt{2(T-t)}}}}
    \\
    &=
    \paren*{\frac{1 - g_{t-1}}{2}} 2
    = 1 - g_{t-1}.
  \end{align*}
   This concludes the proof of~\eqref{eq:general_gaps_case_2}.
\end{proof}

\end{document}